\documentclass[final]{colt2023} 


\usepackage{enumitem}

\newcommand{\KL}{\mathrm{KL}}
\newcommand{\kl}{\mathrm{kl}}
\newcommand{\alt}{\mathrm{Alt}}

\newtheorem{assumption}{Assumption}

\title[On the Existence of a Complexity in Fixed Budget Bandit Identification]{On the Existence of a Complexity in Fixed Budget Bandit Identification}
\usepackage{times}



\coltauthor{%
 \Name{Rémy Degenne} \Email{remy.degenne@inria.fr}\\
 \addr Univ. Lille, Inria, CNRS, Centrale Lille, UMR 9189-CRIStAL, F-59000 Lille, France
}

\begin{document}

\maketitle

\begin{abstract}%
In fixed budget bandit identification, an algorithm sequentially observes samples from several distributions up to a given final time.
It then answers a query about the set of distributions. A good algorithm will have a small probability of error.
While that probability decreases exponentially with the final time, the best attainable rate is not known precisely for most identification tasks.
We show that if a fixed budget task admits a complexity, defined as a lower bound on the probability of error which is attained by the same algorithm on all bandit problems, then that complexity is determined by the best non-adaptive sampling procedure for that problem.
We show that there is no such complexity for several fixed budget identification tasks including Bernoulli best arm identification with two arms: there is no single algorithm that attains everywhere the best possible rate.
\end{abstract}

\begin{keywords}%
  Multi-armed bandits, fixed budget, best arm identification
\end{keywords}

\section{Introduction}

A multi-armed bandit is a model of a sequential interaction between an algorithm and its environment. The bandit is described by a finite number of probability distributions (called arms) $\nu_1, \ldots, \nu_K$ with finite means. At every discrete step $t \in \mathbb{N}$, the algorithm chooses one arm $k_t$ and observes a sample $X_t^{k_t}$ from the distribution $\nu_{k_t}$.
The bandit model was introduced to study clinical trials, but has found many applications in recommender systems and online advertisement.

Most of the bandit literature is concerned with the design of algorithms that maximize the expected sum of the samples gathered by the algorithm, which in this case represent rewards accrued by choosing the arms.
See \citep{bubeck2012regret,lattimore2020bandit} for extensive surveys.
We are on the other hand interested in the \emph{identification} setting.
We also consider a set $\mathcal D$ of tuples of real probability distributions (we call such a tuple a \emph{bandit problem}), but we additionally define a finite answer set $\mathcal I$, and a function $i^\star : \mathcal D \to \mathcal I$, called the correct answer function.
We call $(\mathcal D, \mathcal I, i^\star)$ an \emph{identification task}.
An identification algorithm will sequentially observe samples from the unknown distributions $(\nu_1, \ldots, \nu_K) \in \mathcal D$ until a time $\tau$ at which it stops and returns an answer.
Its goal is to return the correct answer with high probability.
At each successive discrete time $t \ge 1$ until a stopping time $\tau$, the algorithm chooses an arm $k_t$ based on previous observations and it observes $X_t^{k_t} \sim \nu_{k_t}$.
At $\tau$, the algorithm returns an answer $\hat{i}_\tau \in \mathcal I$. We say that the answer is correct if $\hat{i}_\tau = i^\star(\nu)$, and that the algorithm makes an error otherwise. We denote by $p_{\nu, \tau}(\mathcal A)$ the probability of error of algorithm $\mathcal A$ on problem $\nu$, that is $p_{\nu, \tau}(\mathcal A) := \mathbb{P}_{\nu, \mathcal A}(\hat{i}_\tau \ne i^\star(\nu))$ (we index the probability by the problem and the algorithm).
The bandit identification problem has mainly been studied in the two following ways:
\begin{itemize}[noitemsep]
  \item \emph{Fixed confidence}: the stopping time $\tau$ is a part of the algorithm design, and we want to find an algorithm $\mathcal A$ with minimal $\mathbb{E}_\mu[\tau]$ under the constraint that for all $\mu \in \mathcal D$, $p_{\mu, \tau}(\mathcal A) \le \delta$ for a known $\delta > 0$.
  \item \emph{Fixed budget}: the stopping time is set to a value $T \in \mathbb{N}$ known in advance, and we are looking for an algorithm $\mathcal A$ with minimal $p_{\mu, T}(\mathcal A)$ for all $\mu \in \mathcal D$.
\end{itemize}

\paragraph{Detailed example: best arm identification}

The bandit identification framework include diverse queries about the distribution, the most popular of which is best arm identification (BAI, \cite{even2006action,bubeck2009pure,audibert2010best,gabillon2012best,karnin2013almost}). Here the goal of the algorithm is to find the arm with highest mean.

Suppose that we know that the distributions of the arms are Bernoulli, but with unknown means: this is encoded in the set of tuples of distributions $\mathcal D = \{(\nu_1, \ldots, \nu_K) \mid \forall k \in [K], \exists \mu_k \in (0,1), \nu_k = \mathcal B(\mu_k)\}$, where $\mathcal B(\mu_k)$ is the Bernoulli distribution with mean $\mu_k$. In that example, the tuple of distributions $\nu$ is uniquely described by the tuple of means $\mu$ and we will talk indifferently about $\nu$ and $\mu$.

We want to find the arm with highest mean, hence the set of answers is $\mathcal I = \{1, \ldots, K\}$. The correct answer function $i^* : \mathcal D \to \mathcal I$ is $i^\star(\mu) = \arg\max_k \mu_k$. To ensure that $i^*$ is a function, with a unique value in $\mathcal I$, we need to restrict $\mathcal D$ to the tuples $\mu$ such that the argmax is unique.

In fixed budget identification, an algorithm would sample an arm at each time until time $T$, then return $\hat{i}_\tau \in [K]$, the arm which it thinks is the one with highest mean. That answer would be correct if $\hat{i}_\tau = i^\star(\nu) = \arg\max_k \mu_k$ and would make a mistake otherwise

\paragraph{Other examples of identification tasks}
\label{par:examples}

Identification is more general than BAI, and we could seek the answer to other queries

\begin{itemize}[noitemsep]
  \item Thresholding Bandits \citep{locatelli2016optimal}: the algorithm returns for all arms whether its mean is below or above a given threshold, and is correct only if all signs are correct. The answer set is $\mathcal I = \{-, +\}^K$.
  
  \item Positivity: the goal of the algorithm is to determine whether all arms have means above a threshold, or if at least one has mean below. The answer set is $\mathcal I = \{\text{all above}, \text{exists below}\}$. It was introduced in \citep{kaufmann2018sequential} as a step towards identification of the best play in two player min-max games, but can also model the task of verifying if all components of a system meet minimal performance thresholds. See also \citep{degenne2019pure}.
\end{itemize}

These two examples vary the answer set and function, $\mathcal I$ and $i^\star$. Variants of these tasks can also be obtained by choosing different sets of distributions $\mathcal D$. For example, the distributions could be Gaussian with same variance and a mean vector result of the product of a known matrix and an unknown low dimensional parameter vector, as in linear bandits. These so-called \emph{structured} settings are the subject of a lot of recent attention in the fixed budget literature \citep{azizi2021fixed,alieva2021robust,yangminimax,cheshire2021problem}.
Our approach of fixed budget identification is frequentist, but a bayesian goal could also be studied, as in \citep{atsidakou2022bayesian}.

\paragraph{Assumptions on the identification problem}
\label{par:assumptions_on_the_identification_problem}

We do not consider all possible identification problems, but restrict our attention to queries about the means of parametric distributions. We suppose that for each arm $k \in [K]$, the set of possible distributions is a subset of a one-parameter canonical exponential family.
For example, all arms may have Gaussian distributions with known variance but unknown mean, or Bernoulli distributions with means in $(0,1)$.
Exponential families is the setting for which fixed confidence is best understood. Bandit identification is of course interesting beyond that model. However the goal of this paper is to show mostly negative results, showing that fixed budget is not as simple as fixed confidence, even in that very simple parametric model.

For such exponential families, the distribution of each arm can be uniquely described by its mean, we identify means and distributions everywhere in the remainder of the paper. We will talk about some bandit problem $\mu \in \mathcal D$ and also denote its mean vector by $\mu$.
The mean of each arm $k \in [K]$ belongs to an open interval $\mathcal M_k$. For any set $S$, let $\mathrm{cl}(S)$ be its closure and $\mathrm{int}(S)$ be its interior.
The empirical mean $\hat{\mu}_{T,k} \in \mathrm{cl}(\mathcal M_k)$ of an arm $k$ is the maximum likelihood estimator for the mean $\mu_k$ and we can have concentration results for that estimator.

Finally, we need to introduce an assumption to make sure that every $\mu \in \mathcal D$ has a well defined correct answer which can reliably be found if we observe enough samples of every arm.

\begin{assumption}\label{asm:answer_nhds}
For all $i \in \mathcal I$, $\mathcal D_i := \{\mu \in \mathcal D \mid i^\star(\mu) = i\}$ is open and $\mathcal D_i = \mathrm{int}(\mathrm{cl}(\mathcal D_i))$.
The union $\bigcup_{i \in \mathcal I} \mathrm{cl}(\mathcal D_i)$ contains all tuples of distributions in the exponential family. Finally, $\mathcal D = \bigcup_{i \in \mathcal I} \mathcal D_i$
\end{assumption}

$\mathcal D_i = \mathrm{int}(\mathrm{cl}(\mathcal D_i))$ ensures that if all problems in a neighborhood of $\mu \in \mathcal D$ have the same answer $i$, then $i^\star(\mu) = i$ as well.
The condition on $\bigcup_{i \in \mathcal I} \mathrm{cl}(\mathcal D_i)$ ensures that the empirical mean of the arms will always be in the closure of $\mathcal D$.
We then extend $i^\star$ beyond $\mathcal D$, to all tuples in $\mathrm{cl}(\mathcal M_1) \times \ldots \times \mathrm{cl}(\mathcal M_K)$, by giving it an arbitrary value outside of $\mathcal D$. We can then define the \emph{empirical correct answer} $i^\star(\hat{\mu}_T)$.
Informally, we required that $\mathcal D$ contains all tuples of distributions for which the correct answer $i^\star$ is unique.
In thresholding bandits $\mathcal D$ contains all tuples for which all arms have means not equal to the threshold.
Everywhere in the paper $\mathcal D$ will satisfy that assumption, even if not explicitly mentioned. For example, if we write that in a BAI task $\mathcal D$ contains Gaussian distributions with variance 1, we mean all tuples such that there is a unique arm with highest mean.

\subsection{Fixed confidence bandit identification}
\label{sub:fixed_confidence_bandit_identification}

Fixed confidence identification is now well understood in the asymptotic regime, when $\delta \to 0$. 
Let's now describe one central facet of asymptotic fixed confidence identification: the existence of a complexity.
To that end we will consider two classes of algorithms.
The first class contains $\delta$-correct algorithms. Denote it $\mathcal C^\delta$. An algorithm is said to be $\delta$-correct on $\mathcal D$ if for all $\mu \in \mathcal D$, $p_{\mu, \tau} \le \delta$.

\cite{garivier2016optimal} showed that there exists a function $H_{\mathcal C^{\delta}} : \mathcal D \to \mathbb{R}$ such that any $\delta$-correct algorithm satisfies, for all $\mu \in \mathcal D$,
\begin{align*}
\liminf_{\delta \to 0} \mathbb{E}_\mu[\tau]/\log(1/\delta) \ge H_{\mathcal C^\delta}(\mu) \: .
\end{align*}
They introduced the Track-and-Stop algorithm (TnS), which is $\delta$-correct and satisfies for all $\mu \in \mathcal D$
\begin{align*}
\limsup_{\delta \to 0} \mathbb{E}_\mu[\tau]/\log(1/\delta) \le H_{\mathcal C^\delta}(\mu) \: .
\end{align*}
The conclusion from these two facts is that we can meaningfully talk about \emph{the complexity} of identification at $\mu$ for $\delta$-correct algorithms: there is a function $H_{\mathcal C^\delta}$ which is a lower bound on $\liminf_{\delta \to 0} \frac{\mathbb{E}_\mu[\tau]}{\log(1/\delta)}$ for all $\mu \in \mathcal D$ and all algorithms $\mathcal A \in \mathcal C^\delta$, and that bound can be matched on every $\mu$ by the same algorithm in the class (TnS for example, among others \citep{degenne2019non,you2022information}).

The second class of interest contains algorithms which are $\delta$-correct and use static proportions, meaning algorithms which are parametrized by $w \in \triangle_K$ (the simplex) and maintain sampling counts at every time $T \in \mathbb{N}$ close to $w_k T$ for each arm $k \in [K]$, say $\vert N_{T,k} - w_k T \vert \le K$ for all $T,k$. Let us denote that class by $\mathcal C^{sp}$.
For $(\mathcal D, \mathcal I, i^\star)$ satisfying our assumptions, there exist stopping rules and recommendation rules which can make any algorithm using them $\delta$-correct, regardless of the sampling rule \citep{garivier2016optimal}. This shows in particular that $\mathcal C^{sp}$ is not empty, and contains algorithms with the static proportion sampling rule for all $w \in \triangle_K$.
Let $H_{\mathcal C^{sp}}$ be the least expected stopping time (normalized by $\log(1/\delta)$) for algorithms in $\mathcal C^{sp}$:
$
H_{\mathcal C^{sp}}(\mu) = \inf_{\mathcal A \in \mathcal C^{sp}} \liminf_{\delta \to 0} \frac{\mathbb{E}_{\mu, \mathcal A}[\tau]}{\log(1/\delta)}
\: .
$
Since $\mathcal C^{sp} \subseteq \mathcal C^\delta$, we have $H_{\mathcal C^\delta} \le H_{\mathcal C^{sp}}$. A remarkable property of fixed confidence identification is that these two functions are in fact equal. For each $\mu \in \mathcal D$, there exists oracle static proportions $w^\star(\mu) \in \triangle_K$ and a static proportion algorithm $\mathcal A_{w^\star(\mu)}^{sp}$ parametrized by $w^\star(\mu)$ such that $\liminf_{\delta \to 0} \frac{\mathbb{E}_{\mu, \mathcal A_{w^\star(\mu)}}[\tau]}{\log(1/\delta)} = H_{\mathcal C^\delta}(\mu)$.
The existence of optimal static proportions is used in the design of TnS: the sampling rule ensures that the sampling proportions converge to $w^\star(\mu)$.
To summarize, the class of $\delta$-correct algorithms in fixed confidence identification satisfies the following properties:
\begin{itemize}[noitemsep]
  \item [(C)] It has a complexity $H_{\mathcal C^\delta}$ which defines a lower bound for all $\mu \in \mathcal D$ and all $\mathcal A \in \mathcal C^\delta$ and there is an algorithm in $\mathcal C^\delta$ that attains it for all $\mu \in \mathcal D$.
  \item [(SP)] The complexity $H_{\mathcal C^\delta}$ is equal to $H_{\mathcal C^{sp}}$, which characterizes the difficulty of each $\mu \in \mathcal D$ for the best static proportions algorithm in hindsight.
\end{itemize}

The description above gives a good picture of asymptotic fixed confidence, in the regime $\delta \to 0$. It is now the object of a large literature, which also deals with structured BAI, other identification tasks, and/or give algorithms that have advantages over TnS.
Fixed confidence BAI with $\delta$ not close to zero and small gaps is also an active field of study, which is less well understood \citep{simchowitz2017simulator,katz2020true}.

\subsection{Fixed Budget Bandit Identification}
\label{sub:fixed_budget_bandit_identification}

An algorithm family $\mathcal A$ is a sequence $(\mathcal A_T)_{T \ge 1}$ of algorithms, one for each possible value of the horizon.
That definition allows us to describe the behavior of fixed budget algorithms in the limit $T \to +\infty$.
This is similar to fixed confidence, where we describe the limit as $\delta\to 0$ of $\mathbb{E}_\mu[\tau]/\log(1/\delta)$: we compute that limit for a family of algorithms, one for each $\delta$.
A good fixed budget algorithm family minimizes the probability of error $p_{\mu, T}$ for all $\mu \in \mathcal D$.
That probability is exponentially small in $T$ for any algorithm that pulls all arms linearly and recommends the empirical correct answer. We hence look at the rate at which it decreases, and define
$h_{\mu, T}(\mathcal A) = T / \log(1/p_{\mu, T}(\mathcal A))$ .
Written differently, the error probability of $\mathcal A$ on $\mu \in \mathcal D$ is $p_{\mu, T}(\mathcal A) = \exp(-T/h_{\mu, T}(\mathcal A))$.

\paragraph{Oracle difficulty of an algorithm class}
\label{par:oracle_difficulty_of_an_algorithm_class}

We call a set of algorithm families an \emph{algorithm class}.
We want to quantify the performance of the best algorithm family in $\mathcal C$ at $\mu \in \mathcal D$.
An algorithm family $\mathcal A$ is asymptotically ``good'' if eventually as $T \to + \infty$, $h_{\mu, T}(\mathcal A)$ becomes small.
We are thus interested $\limsup_{T \to +\infty} h_{\mu, T}(\mathcal A)$.
For an algorithm class, we want to quantify that limsup for the best algorithm in the class, hence we define the oracle difficulty as 
\begin{align*}
H_{\mathcal C}(\mu)
:= \inf_{\mathcal A \in \mathcal C} \limsup_{T \to +\infty} h_{\mu, T}(\mathcal A)
= \inf_{\mathcal A \in \mathcal C} \limsup_{T \to +\infty} T / \log(1/p_{\mu, T}(\mathcal A))
\: .
\end{align*}
We call $H_{\mathcal C}(\mu)$ an \emph{oracle} difficulty because it reflects how difficult the problem $\mu$ is for the algorithm family in the class which is best at $\mu$.
By definition, for all $\mathcal A \in \mathcal C$ and for all $\varepsilon > 0$, there exists infinitely many times $T \ge T_\varepsilon$ such that $p_{\mu, T}(\mathcal A) \ge \exp\left( - T/ (H_{\mathcal C}(\mu) - \varepsilon) \right)$ .
Thus $H_{\mathcal C}$ represents a lower bound on the probability of error of any algorithm family in the class.

\paragraph{Complexity}
\label{par:complexity}

By analogy with fixed confidence identification, we say that an algorithm class $\mathcal C$ admits a complexity if there exists $\mathcal A^\star_{\mathcal C} \in \mathcal C$ such that for all $\mu \in \mathcal D$,
$
\limsup_{T \to +\infty} h_{\mu, T}(\mathcal A^\star_{\mathcal C})
\le H_{\mathcal C}(\mu)
\: .
$
We then have equality and furthermore $H_{\mathcal C} = H_{\{\mathcal A^\star_{\mathcal C}\}}$. We thus say that the class has an asymptotic complexity if a single algorithm matches the lower bound everywhere on $\mathcal D$.
Some classes admit complexities, for example any singleton class, while we will see that others do not.

\paragraph{Difficulty ratio}
\label{par:difficulty_ratio}

In order to establish whether a class admits a complexity, we will need to compare the rate of algorithm families with the difficulty of the class.
Suppose more generally that we are given a function $H: \mathcal D \to \mathbb{R}^+$ which represents a difficulty \emph{a priori} of each $\mu \in \mathcal D$, and that we want to compare $h_{\mu, T}(\mathcal A)$ to $H(\mu)$ in order to assess how good $\mathcal A$ is when compared to the baseline $H$. That function $H$ which will usually be the oracle difficulty of an algorithm class, but not necessarily.
Most of the literature on sub-Gaussian BAI defines $H$ as the sum of the inverse squares of the gaps, and compares algorithms to that baseline.
We define the \emph{difficulty ratio} of an algorithm family $\mathcal A$ to $H$ at a problem $\mu \in \mathcal D$ at time $T$ as
\begin{align*}
R_{H, T}(\mathcal A,\mu) = \frac{h_{\mu, T}(\mathcal A)}{H(\mu)} = \frac{T}{H(\mu) \log(1/p_{\mu, T}(\mathcal A))}
\: .
\end{align*}
That ratio is larger than 1 if $\mathcal A_T$ has error probability larger than the value $\exp(-T/H(\mu))$ prescribed by the difficulty $H$.
If we consider two classes $\mathcal C \subseteq \mathcal C'$, then $H_{\mathcal C} \ge H_{\mathcal C'}$ and $R_{H_{\mathcal C}, T}(\mathcal A,\mu) \le R_{H_{\mathcal C'}, T}(\mathcal A,\mu)$.
We introduce the notation $R_{H,\infty}(\mathcal A,\mu) = \limsup_{T \to \infty} R_{\mu, T}(\mathcal A, \mu)$.
We call the value $\sup_{\mu \in \mathcal D} R_{H_{\mathcal C}, \infty}(\mathcal A, \mu)$ the \emph{maximal difficulty ratio} of $\mathcal A$.

An algorithm class $\mathcal C$ admits an asymptotic complexity iff there exists $\mathcal A^\star_{\mathcal C} \in \mathcal C$ such that $\sup_{\mu \in \mathcal D} R_{H_{\mathcal C}, \infty}(\mathcal A^\star_{\mathcal C}, \mu) \le 1$. If on the contrary that quantity is strictly greater than 1 for all $\mathcal A \in \mathcal C$, then any algorithm in the class has a sub-optimal rate compared to the oracle at some point of $\mathcal D$.

\subsection{Contributions and structure of the paper}
\label{sub:contributions}

We are inspired by the open problem presented at COLT 2022 by \cite{qin2022open}. With our terminology, they ask whether there exists a sufficiently large algorithm class that admits a complexity in fixed budget best arm identification. We draw a parallel with the fixed confidence setting and also ask whether that complexity necessarily equates the oracle difficulty of static proportions.
\begin{itemize}[noitemsep]
  \item We formalized in the introduction the notion of complexity of fixed budget identification and we give tools for the study of that complexity. In particular, we reduce the question of its existence to the derivation of a bound on the difficulty ratio.
  \item In Section~\ref{sec:main_tool}, we present generic lower bounds on the difficulty ratio.
  \item In Section~\ref{sec:the_range_of_the_difficulty_ratio}, we use these tools to study the range of the smallest possible maximal difficulty ratio for any algorithm when compared to static proportions algorithms. We show that this ratio is at least 1 for most tasks, and is at most $K$. The lower bound of 1 indicates that static proportions oracles indeed define lower bounds on the error probability of any algorithm: if a class $\mathcal C$ contains static proportions algorithms and has a complexity, then that complexity is the oracle difficulty of static proportions. The upper bound of $K$ is attained: in the positivity task, uniform sampling is optimal and has a maximal difficulty ratio equal to $K$. 
  \item In Section~\ref{sec:no_complexity_in_best_arm_identification}, we show that for any algorithm class that contains the static proportions algorithms, BAI has no complexity for $K$ large enough. We show that for the same classes, Bernoulli BAI has no complexity for $K=2$.
\end{itemize}

\section{Algorithmic classes}
\label{sec:algorithmic_classes}

We introduce several algorithm classes for which we will ask whether a complexity exists. We denote by $\mathcal C_\infty$ the class of all algorithm families.

\paragraph{Static proportions}
\label{par:static_proportions}
Static proportions algorithms pull all arms according to a pre-defined allocation vector in the simplex, then return the empirical correct answer. That is, $\hat{i}_T = i^\star(\hat{\mu}_T)$.
Let $\triangle_K^0 = \{\omega \in \triangle_K \mid \forall k \in [K], \ \omega_k > 0\}$.
A static proportions algorithm parametrized by $\omega \in \triangle_K^0$ is any sampling rule which satisfies
$\vert N_{T,k} - T \omega_k \vert \le K$ for all $k \in [K]$. Such a sampling rule exists: see the tracking procedure of \cite{garivier2016optimal}, and the bound on the difference $\vert N_{T,k} - T \omega_k \vert$ for that procedure derived by \cite{degenne2020structure}.

Let $\alt(\mu) = \{\lambda \in \mathcal D \mid i^\star(\lambda) \ne i^\star(\mu)\}$ be the set of \emph{alternatives} to $\mu \in \mathcal D$. For $\lambda_k, \mu_k$ two means of distributions in an exponential family, we denote by $\KL(\lambda_k, \mu_k)$ the Kullback-Leibler divergence between the two corresponding distributions. We give now a bound on the probability of error of static proportions algorithms, which is adapted from \citep{glynn2004large}.

\begin{theorem}\label{thm:oracle_difficulty_sp}
Let $\mathcal A_\omega^{sp}$ be a static proportions algorithm parametrized by $\omega \in \triangle_K^0$. For all $\mu \in \mathcal D$,
\begin{align*}
\lim_{T \to +\infty}h_{\mu, T}(\mathcal A^{sp}_\omega) = \Big(\inf_{\lambda \in \alt(\mu)} \sum_{k \in [K]} \omega_k \KL(\lambda_k, \mu_k) \Big)^{-1}
\: .
\end{align*}
\end{theorem}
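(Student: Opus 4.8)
The plan is to read the statement as a large deviations rate computation. Since $h_{\mu,T}(\mathcal A^{sp}_\omega)=T/\log(1/p_{\mu,T})$, the claim is equivalent to $\lim_{T\to\infty}-\tfrac1T\log p_{\mu,T}=I^\star$, where $I^\star:=\inf_{\lambda\in\alt(\mu)}\sum_{k}\omega_k\KL(\lambda_k,\mu_k)$. Under the static proportions rule the counts satisfy $|N_{T,k}-T\omega_k|\le K$ and do not depend on the observations, so the draws of distinct arms are independent and each $\hat\mu_{T,k}$ is an average of a deterministic number $N_{T,k}$ of i.i.d.\ samples. First I would bracket the error event by answer regions: because the recommendation is $i^\star(\hat\mu_T)$, we have $\{\hat\mu_T\in\alt(\mu)\}\subseteq\{\hat i_T\ne i^\star(\mu)\}\subseteq\{\hat\mu_T\notin\mathcal D_{i^\star(\mu)}\}$, the left inclusion because alternatives carry a wrong answer and the right one because landing in $\mathcal D_{i^\star(\mu)}$ forces the correct answer. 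This reduces the computation to the probability that $\hat\mu_T$ falls in an open set and in a closed set, and conveniently bypasses the arbitrary extension of $i^\star$ outside $\mathcal D$.

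Next I would invoke Cramér's theorem for one-parameter exponential families: the empirical mean of $n$ i.i.d.\ draws from the arm of mean $\mu_k$ obeys an LDP at speed $n$ with good rate function $x\mapsto\KL(x,\mu_k)$ (the convex conjugate of the log-partition function). Since $N_{T,k}=T\omega_k+O(1)$ and the arms are independent, tensorizing these LDPs gives an LDP for the vector $\hat\mu_T$ at speed $T$ with the additive good rate function $I(\lambda)=\sum_k\omega_k\KL(\lambda_k,\mu_k)$; the $O(1)$ slack in the counts only perturbs $-\tfrac1T\log$ by $O(1/T)$ and disappears in the limit. This is the estimate adapted from \citep{glynn2004large}. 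The LDP lower bound applied to the open set $\alt(\mu)$, together with $p_{\mu,T}\ge\mathbb P(\hat\mu_T\in\alt(\mu))$, yields $\limsup_T(-\tfrac1T\log p_{\mu,T})\le I^\star$; the LDP upper bound applied to the closed set $\mathcal D_{i^\star(\mu)}^c$ (closed in $\mathrm{cl}(\mathcal M_1)\times\cdots\times\mathrm{cl}(\mathcal M_K)$), together with $p_{\mu,T}\le\mathbb P(\hat\mu_T\notin\mathcal D_{i^\star(\mu)})$, yields $\liminf_T(-\tfrac1T\log p_{\mu,T})\ge\inf_{\mathcal D_{i^\star(\mu)}^c}I$.

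It remains to match the two infima, $\inf_{\mathcal D_{i^\star(\mu)}^c}I=\inf_{\alt(\mu)}I=I^\star$, which I expect to be the \textbf{main obstacle}. The inequality $\le$ is immediate from $\alt(\mu)\subseteq\mathcal D_{i^\star(\mu)}^c$. For $\ge$, I would prove the inclusion $\mathcal D_{i^\star(\mu)}^c\subseteq\mathrm{cl}(\alt(\mu))$ and conclude by continuity of $\KL$, so that the infimum over $\alt(\mu)$ and over its closure coincide (minimizers at the boundary of $\mathcal M_k$ being harmless since $\KL$ blows up there). To get the inclusion, take $x\notin\mathcal D_{i^\star(\mu)}$; by Assumption~\ref{asm:answer_nhds}, $x\in\bigcup_i\mathrm{cl}(\mathcal D_i)$. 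If $x\in\mathrm{cl}(\mathcal D_j)$ for some $j\ne i^\star(\mu)$ we are done. Otherwise $x\in\mathrm{cl}(\mathcal D_{i^\star(\mu)})\setminus\mathcal D_{i^\star(\mu)}$, and since $\mathcal D_{i^\star(\mu)}=\mathrm{int}(\mathrm{cl}(\mathcal D_{i^\star(\mu)}))$ the point $x$ lies on the boundary, so every neighborhood of $x$ meets the complement of $\mathrm{cl}(\mathcal D_{i^\star(\mu)})$, which by the covering property of Assumption~\ref{asm:answer_nhds} sits inside $\bigcup_{j\ne i^\star(\mu)}\mathrm{cl}(\mathcal D_j)\subseteq\mathrm{cl}(\alt(\mu))$; hence $x\in\mathrm{cl}(\alt(\mu))$. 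This is exactly where Assumption~\ref{asm:answer_nhds} enters. The only other point requiring care is the regularity underlying the LDP upper bound on an unbounded closed set (goodness of the rate function and exponential tightness, e.g.\ in the Gaussian case), which follows from coercivity of $\KL(\cdot,\mu_k)$; the large-deviations estimates themselves are routine once the counts are deterministic and the arms independent.
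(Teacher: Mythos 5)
Your proposal is correct and follows essentially the same route as the paper: a large deviations principle for $\hat\mu_T$ under static proportions (Cramér/Gärtner--Ellis, following Glynn--Juneja) applied to the error event, with the open-set and closed-set infima matched via continuity of the KL divergence, so that both equal $\inf_{\lambda\in\alt(\mu)}\sum_k\omega_k\KL(\lambda_k,\mu_k)$. You are in fact more careful than the paper on two points it glosses over --- sandwiching the error event between $\{\hat\mu_T\in\alt(\mu)\}$ and $\{\hat\mu_T\notin\mathcal D_{i^\star(\mu)}\}$, and the topological argument from Assumption~\ref{asm:answer_nhds} giving $\mathcal D_{i^\star(\mu)}^c\subseteq\mathrm{cl}(\alt(\mu))$ --- though note your parenthetical claim that $\KL(\cdot,\mu_k)$ blows up at the boundary of $\mathcal M_k$ is false (e.g.\ for Bernoulli arms $\KL(0,\mu_k)=\log\frac{1}{1-\mu_k}<\infty$); this is harmless since continuity of the rate function on the closure is all your density argument needs.
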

As a consequence, the oracle difficulty of the class $\mathcal C^{sp}$ of static proportions algorithms is
\begin{align*}
H_{\mathcal C^{sp}}(\mu)
= \inf_{\omega \in \triangle_K^0} \lim_{T \to +\infty}h_{\mu, T}(\mathcal A^{sp}_\omega)
= \Big( \max_{\omega \in \triangle_K^0} \inf_{\lambda \in \alt(\mu)} \sum_{k \in [K]} \omega_k \KL(\lambda_k, \mu_k) \Big)^{-1}
\: .
\end{align*}

Let's illustrate that difficulty on the BAI task with Gaussians distributions with variance 1. For $k \in [K]$, let $\Delta_k = \mu_{i^\star(\mu)} - \mu_k$. It was shown by \cite{garivier2016optimal} that for all $\mu \in \mathcal D$, $H_{\mathcal C^{sp}}$ satisfies the inequalities $H_\Delta(\mu) \le H_{C^{sp}}(\mu) \le 2 H_\Delta(\mu)$, where $H_{\Delta}(\mu) = \frac{2}{\min_{k : \Delta_k>0} \Delta_k^2} + \sum_{k:\Delta_k>0} \frac{2}{\Delta_k^2}$.

\paragraph{Consistent and exponentially consistent}
\label{par:consistent}

An algorithm family is said to be \emph{consistent} \citep{kaufmann2016complexity} if for all $\mu \in \mathcal D$, $\lim_{T\to + \infty} p_{\mu, T} = 0$. We denote that class by $\mathcal C_c$.
It is said to be \emph{exponentially consistent} \citep{barrier2022best} if for all $\mu \in \mathcal D$, $\limsup_{T \to +\infty} h_{\mu, T}(\mathcal A) < +\infty$. We denote that class $\mathcal C_{ec}$.
Consistent algorithms are the largest class of algorithm families which are ``good everywhere'', in the sense that they eventually get the right answer with high probability, no matter which problem $\mu \in \mathcal D$ they face.
Any exponentially consistent algorithm is consistent: $\mathcal C_{ec} \subseteq \mathcal C_c$.
Static proportions algorithms are exponentially consistent: $\mathcal C^{sp} \subseteq \mathcal C_{ec}$. Indeed for any $\omega \in \triangle_K^0$, under Assumption~\ref{asm:answer_nhds} the formula for $\lim_{T \to + \infty} h_{\mu, T}(\mathcal A_\omega^{sp})$ of Theorem~\ref{thm:oracle_difficulty_sp} gives a finite value. This proves that $\mathcal A_\omega^{sp} \in \mathcal C_{ec}$ for all $\omega \in \triangle_K^0$.
We restricted the static proportions to $\triangle_K^0$ instead of $\triangle_K$ to ensure that the algorithms are exponentially consistent.

\paragraph{Bounded difficulty}
\label{par:bounded_difficulty}

The approach of most fixed budget papers, which is however often not explicitly stated like this, is to suppose that some function $H: \mathcal D \to \mathbb{R}$ represents a complexity of the fixed budget identification task and to look for algorithms that have error probability close to $\exp(-T/H(\mu))$. Such a function can be for example $H_\Delta(\mu)$ (defined in the static proportions paragraph) for best arm identification. The algorithms Successive Rejects \citep{audibert2010best} or Successive Halving \citep{karnin2013almost} thus achieve error bounds that depend on $H_\Delta$.
\cite{komiyama2022minimax} make that approach explicit: a possibly arbitrary function $H$ is considered and where we are interested in the following class.
\begin{align*}
\mathcal C(H)
&= \{\mathcal A \mid \exists R \in \mathbb{R}, \: \forall \mu \in \mathcal D, \ \limsup_{T \to \infty}h_{\mu, T}(\mathcal A) \le R H(\mu)\}
= \{\mathcal A \mid \sup_{\mu \in \mathcal D} R_{H,\infty}(\mathcal A,\mu) < +\infty\}
\: .
\end{align*}
We don't allow $H$ to be infinite in $\mathcal D$, which means in particular that $\mathcal C(H) \subseteq \mathcal C_{ec}$ for all $H$.
Of course if $H$ is chosen badly that class will be empty.
The goal of \cite{komiyama2022minimax} is then to design algorithms which get the smallest maximal difficulty ratio, given an arbitrary function $H$.
They derive a theoretical algorithm for which the ratio approaches a proxy of the lower bound (but which is computationally intractable), and introduce a second heuristic based on neural networks.

Given an algorithm class $\mathcal C'$, we will consider its oracle difficulty $H_{\mathcal C'}$ and then the class $\mathcal C(H_{\mathcal C'})$ of algorithms with bounded difficulty ratio with respect to $H_{\mathcal C'}$. We denote $\mathcal C(H_{\mathcal C'})$ by $\overline{\mathcal C'}$.
The class $\overline{\mathcal C'}$ might not contain $\mathcal C'$. If $\mathcal C' \subseteq \mathcal C''$, then from the definition we get $\overline{\mathcal C''} \subseteq \overline{\mathcal C'}$.
The class of static proportions satisfies $\mathcal C^{sp} \subseteq \overline{\mathcal C^{sp}}$.
The proof is a simple study of the ratio between $\lim_{T \to + \infty} h_{\mu, T}(\mathcal A_\omega^{sp})$ for different values of $\omega$. See the proof of Theorem~\ref{thm:uniform_upper_bound} in Section~\ref{sec:the_range_of_the_difficulty_ratio}.

\paragraph{Within a constant of the uniform allocation}

The uniform static proportions algorithm $\mathcal A_u := \mathcal A_{(1/K, \ldots, 1/K)}^{sp} \in \mathcal C^{sp}$, that allocates an equal number of samples to every arm, is a natural baseline to which we can compare algorithms.
We can for example look for algorithms that have a difficulty ratio to the complexity of the uniform allocation which is uniformly bounded on $\mathcal D$. This is the class $\overline{\{\mathcal A_u\}} = \mathcal C(H_{\{\mathcal A_u\}})$.
Since $\mathcal C^{sp} \subseteq \overline{\mathcal C^{sp}}$ and $\{\mathcal A_u\} \subseteq \mathcal C^{sp}$, that class satisfies $\mathcal C^{sp} \subseteq \overline{\mathcal C^{sp}} \subseteq \overline{\{\mathcal A_u\}}$ .

\paragraph{Summary}
\label{par:summary}

Consistent, exponentially consistent algorithms and the class of algorithm families within a constant of the uniform allocation all contain the static proportions algorithms $\mathcal C^{sp}$ : $\mathcal C^{sp} \subseteq \mathcal C_{ec} \subseteq \mathcal C_c$ and $\mathcal C^{sp} \subseteq \overline{\{\mathcal A_u\}}$.
If we get a lower bound on $R_{H_{\mathcal C^{sp}}, T}(\mathcal A, \mu)$ for an algorithm family $\mathcal A$, then it is also a lower bound for the ratio to the difficulty of any of the classes $\mathcal C_{c}$, $\mathcal C_{ec}$, $\overline{\{\mathcal A_u\}}$.

\section{Lower bounds on the difficulty ratio}
\label{sec:main_tool}

Most of the bounds on the difficulty ratio we derive are consequences of the following theorem.
\begin{theorem}\label{thm:R_LB}
Let $H : \mathcal D \to \mathbb{R}^+$ be an arbitrary difficulty function.
Let $\mu, \lambda \in \mathcal D$ be such that $i^*(\lambda) \ne i^*(\mu)$ and $H(\lambda) \le \sqrt{T}$. Then for any algorithm $\mathcal A$,
\begin{align*}
R_{H,T}(\mathcal A, \lambda)^{-1} (1 - p_{\mu,T}(\mathcal A)) - \frac{\log 2}{\sqrt{T}}
\le H(\lambda)\sum_{k=1}^K \mathbb{E}_\mu \left[ \frac{N_{T,k}}{T} \right] \KL(\mu_k, \lambda_k)
\: .
\end{align*}
\end{theorem}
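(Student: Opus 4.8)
The plan is to run the standard change-of-measure argument between the two problems $\mu$ and $\lambda$, applied to the event that the algorithm returns $\mu$'s answer. First I would fix the event $E = \{\hat i_T = i^\star(\mu)\}$, which is measurable with respect to the observations gathered up to the horizon $T$. Under $\mu$ this is exactly a correct answer, so $\mathbb{P}_{\mu, \mathcal A}(E) = 1 - p_{\mu, T}(\mathcal A)$; under $\lambda$, since $i^\star(\lambda) \ne i^\star(\mu)$, the event $E$ forces an incorrect answer, so $\mathbb{P}_{\lambda, \mathcal A}(E) \le p_{\lambda, T}(\mathcal A)$. This orientation is the key to getting both error probabilities into a single inequality.

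The two technical ingredients are the KL decomposition for bandits and a lower bound on the binary relative entropy. For the first, the chain rule for the law of the observations up to time $T$ gives the transportation identity $\KL\big(\mathbb{P}_{\mu,\mathcal A}, \mathbb{P}_{\lambda, \mathcal A}\big) = \sum_{k=1}^K \mathbb{E}_\mu[N_{T,k}] \KL(\mu_k, \lambda_k)$ \citep{garivier2016optimal}, and the data-processing inequality applied to the indicator of $E$ yields $\KL(\mathbb{P}_{\mu,\mathcal A}, \mathbb{P}_{\lambda,\mathcal A}) \ge \kl\big(\mathbb{P}_\mu(E), \mathbb{P}_\lambda(E)\big)$, with $\kl$ the binary KL. For the second, writing $p = \mathbb{P}_\mu(E)$ and $q = \mathbb{P}_\lambda(E)$, I would use the elementary bound $\kl(p,q) \ge p\log(1/q) - \log 2$, which follows by expanding $\kl(p,q)$, bounding the binary entropy term by $\log 2$, and discarding the nonnegative term $-(1-p)\log(1-q)$.

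Chaining these, and using $q \le p_{\lambda,T}(\mathcal A)$ together with $p = 1 - p_{\mu,T}(\mathcal A)$, produces the core inequality
\[
(1 - p_{\mu,T}(\mathcal A)) \log\big(1/p_{\lambda,T}(\mathcal A)\big) - \log 2 \le \sum_{k=1}^K \mathbb{E}_\mu[N_{T,k}] \KL(\mu_k, \lambda_k) .
\]
It then remains to translate this into the stated form. Substituting $\log(1/p_{\lambda,T}(\mathcal A)) = (T/H(\lambda))\, R_{H,T}(\mathcal A, \lambda)^{-1}$ from the definition of the difficulty ratio, multiplying through by $H(\lambda)/T$, and recognizing $\frac{H(\lambda)}{T}\sum_k \mathbb{E}_\mu[N_{T,k}]\KL(\mu_k,\lambda_k) = H(\lambda)\sum_k \mathbb{E}_\mu[N_{T,k}/T]\KL(\mu_k,\lambda_k)$ yields the right-hand side exactly, leaving the correction term $-\frac{H(\lambda)\log 2}{T}$ on the left. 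The final step invokes the hypothesis $H(\lambda) \le \sqrt T$, which gives $\frac{H(\lambda)\log 2}{T} \le \frac{\log 2}{\sqrt T}$, so replacing the correction by the larger $\frac{\log 2}{\sqrt T}$ only weakens the left-hand side and the inequality is preserved. I expect the only genuine care to lie in the choice of $E$ (so that both error probabilities appear with the correct sign) and in the binary-KL estimate; the remaining manipulations are bookkeeping, with the $\sqrt T$ condition serving only to tidy up the additive term.
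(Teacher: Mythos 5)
Your proposal is correct and matches the paper's own proof essentially step for step: the same event $E = \{\hat i_T = i^\star(\mu)\}$, the same KL decomposition plus data-processing inequality, the same elementary bound $\kl(a,b) \ge a\log(1/b) - \log 2$, and the same final substitution of the difficulty ratio followed by the $H(\lambda) \le \sqrt{T}$ cleanup of the additive term. No gaps; the justification you sketch for the binary-KL estimate (bounding the entropy term by $\log 2$ and discarding the nonnegative cross term) is exactly right.
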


The proof of this inequality follows the standard bandit lower bound argument, appealing to the data processing inequality for the KL divergence, which can be found for example in \citep{garivier2019explore}.
The proof is in Appendix~\ref{sec:proofs_relative_to_section_sec:main_tool}.
The only mildly original step is to put $H(\lambda)$ on the right of the inequality instead of writing a lower bound on $p_{\lambda, T}(\mathcal A)$ (which would give a bound akin to Lemma 6 of \citep{barrier2022best} when taking the limit as $T \to + \infty$).

\begin{theorem}\label{thm:asm_R_LB_limsup}
For any consistent algorithm family $\mathcal A$, for all $\mu \in \mathcal D$ and all sets $D(\mu) \subseteq \alt(\mu)$,
\begin{align*}
(\sup_{\lambda \in D(\mu)} R_{H,\infty}(\mathcal A, \lambda))^{-1}
&\le \max_{\omega \in \triangle_K} \inf_{\lambda \in D(\mu)} H(\lambda)\sum_{k=1}^K \omega_k \KL(\mu_k, \lambda_k)
\: .
\\
\text{Furthermore}, \quad(\sup_{\lambda \in \mathcal D} R_{H,\infty}(\mathcal A, \lambda))^{-1}
&\le \inf_{\mu \in \mathcal D} \max_{\omega \in \triangle_K} \inf_{\lambda \in \alt(\mu)} H(\lambda)\sum_{k=1}^K \omega_k \KL(\mu_k, \lambda_k)
\: .
\end{align*}
\end{theorem}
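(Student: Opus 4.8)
The plan is to derive both inequalities from Theorem~\ref{thm:R_LB} by fixing the problem $\mu$, letting the expected sampling proportions play the role of the allocation $\omega$, and then passing to the limit in $T$. Fix $\mu \in \mathcal D$ and $D(\mu) \subseteq \alt(\mu)$, and for each horizon $T$ set $\omega_{T,k} = \mathbb{E}_\mu[N_{T,k}/T]$. Since exactly one arm is sampled per round up to $T$, we have $\sum_k N_{T,k} = T$ almost surely, so $\omega_T := (\omega_{T,k})_k \in \triangle_K$. Every $\lambda \in D(\mu)$ satisfies $i^\star(\lambda) \ne i^\star(\mu)$, so Theorem~\ref{thm:R_LB} applies and gives, for each such $\lambda$ and all $T$ large enough that $H(\lambda) \le \sqrt T$,
\[
R_{H,T}(\mathcal A, \lambda)^{-1}(1 - p_{\mu,T}(\mathcal A)) - \frac{\log 2}{\sqrt T} \le H(\lambda) \sum_{k=1}^K \omega_{T,k} \KL(\mu_k, \lambda_k) \: .
\]
Consistency of $\mathcal A$ ensures $p_{\mu,T}(\mathcal A) \to 0$, and $\log 2 / \sqrt T \to 0$, so the two ``noise'' terms on the left vanish asymptotically.

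The delicate point is that the right-hand side couples $\lambda$ with the allocation $\omega_T$, which itself depends on $T$; I want to take an infimum over $\lambda$ only \emph{after} passing to the limit in $T$, and naively interchanging $\liminf_T$ with $\inf_\lambda$ goes the wrong way. The remedy is to fix a single limiting allocation using compactness of the simplex. I would extract a subsequence $(T_n)$ along which $\omega_{T_n} \to \bar\omega$ for some $\bar\omega \in \triangle_K$. For each fixed $\lambda \in D(\mu)$ the right-hand side is \emph{linear}, hence continuous, in $\omega$, so it converges to $H(\lambda) \sum_k \bar\omega_k \KL(\mu_k, \lambda_k)$; on the left the factor $1 - p_{\mu,T_n} \to 1$ and $\log 2/\sqrt{T_n} \to 0$, and a subsequential $\liminf$ is at least the full-sequence $\liminf_T R_{H,T}(\mathcal A,\lambda)^{-1} = R_{H,\infty}(\mathcal A,\lambda)^{-1}$. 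Passing to the limit along $(T_n)$ therefore yields, for \emph{every} $\lambda \in D(\mu)$ and with the \emph{same} $\bar\omega$,
\[
R_{H,\infty}(\mathcal A, \lambda)^{-1} \le H(\lambda) \sum_{k=1}^K \bar\omega_k \KL(\mu_k, \lambda_k) \: .
\]

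Having a single allocation $\bar\omega$ valid for all $\lambda$ simultaneously, I would take the infimum over $\lambda \in D(\mu)$ on both sides and then bound the particular $\bar\omega$ by the maximizing allocation, using $\inf_\lambda R_{H,\infty}(\mathcal A,\lambda)^{-1} = (\sup_\lambda R_{H,\infty}(\mathcal A,\lambda))^{-1}$:
\[
\Big(\sup_{\lambda \in D(\mu)} R_{H,\infty}(\mathcal A, \lambda)\Big)^{-1} \le \inf_{\lambda \in D(\mu)} H(\lambda) \sum_{k=1}^K \bar\omega_k \KL(\mu_k, \lambda_k) \le \max_{\omega \in \triangle_K} \inf_{\lambda \in D(\mu)} H(\lambda) \sum_{k=1}^K \omega_k \KL(\mu_k, \lambda_k) \: ,
\]
which is the first inequality. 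For the second, specialize to $D(\mu) = \alt(\mu)$; since $\alt(\mu) \subseteq \mathcal D$ we have $\sup_{\lambda \in \mathcal D} R_{H,\infty}(\mathcal A, \lambda) \ge \sup_{\lambda \in \alt(\mu)} R_{H,\infty}(\mathcal A, \lambda)$, so the first inequality shows that for every $\mu \in \mathcal D$ the quantity $(\sup_{\lambda \in \mathcal D} R_{H,\infty}(\mathcal A,\lambda))^{-1}$ is at most $\max_\omega \inf_{\lambda \in \alt(\mu)} H(\lambda)\sum_k \omega_k \KL(\mu_k,\lambda_k)$; taking the infimum over $\mu \in \mathcal D$ closes the argument. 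The main obstacle, and essentially the only non-routine step, is the limit interchange resolved by compactness of $\triangle_K$; the rest is bookkeeping with $\liminf$/$\limsup$ and the observation that the per-round budget constraint forces $\omega_T \in \triangle_K$.
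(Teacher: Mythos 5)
Your proposal is correct and follows essentially the same route as the paper's proof: apply Theorem~\ref{thm:R_LB} at a fixed $\mu$, use compactness of $\triangle_K$ to extract a subsequence along which the expected sampling proportions converge to a single limiting allocation, pass to the limit (consistency killing the error terms), then take the infimum over $\lambda \in D(\mu)$ and dominate the limiting allocation by the maximum over the simplex, with the second inequality obtained by specializing $D(\mu) = \alt(\mu)$ and taking an infimum over $\mu$. The point you flag as the crux — fixing one allocation valid for all $\lambda$ before interchanging limits — is exactly the step the paper handles the same way.
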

\begin{proof}
Let $\mu \in \mathcal D$. Since $\triangle_K$ is compact, the sequence $(\mathbb{E}_{\mu, \mathcal A}[N_{T}/T])_{T \in \mathbb{N}}$ has a subsequence indexed by some $(T_n)_{n \in \mathbb{N}}$ which converges to a vector $\omega_{\mu} \in \triangle_K$. Let $\lambda \in \alt(\mu)$. Theorem~\ref{thm:R_LB} gives, for $n$ large enough,
\begin{align*}
R_{H,T_n}(\mathcal A, \lambda)^{-1} (1 - p_{\mu, T_n}(\mathcal A)) - \frac{\log 2}{\sqrt{T_n}}
\le H(\lambda)\sum_{k=1}^K \mathbb{E}_\mu \left[ \frac{N_{T_n,k}}{T_n} \right] \KL(\mu_k, \lambda_k)
\: .
\end{align*}
Since $\mathcal A$ is consistent, $1 - p_{\mu, T_n}(\mathcal A) \to 1$. Taking a limit as $n \to +\infty$, we have
\begin{align*}
\liminf_{n \to + \infty} R_{H,T_n}(\mathcal A, \lambda)^{-1}
\le H(\lambda)\sum_{k=1}^K \omega_{\mu, k} \KL(\mu_k, \lambda_k)
\: .
\end{align*}
That bound on the liminf of a subsequence gives a bound on the liminf of the whole sequence. We finally take an infimum over $\lambda \in D(\mu)$ on both sides of the inequality, and replace $\omega_{\mu}$ by a maximum over the simplex. We proved the first statement. The second inequality is obtained by choosing $D(\mu) = \alt(\mu)$ and taking an infimum over $\mu \in \mathcal D$.
\end{proof}

The second inequality of Theorem~\ref{thm:asm_R_LB_limsup} recovers Theorem 1 of \citep{komiyama2022minimax}, at least under our assumptions (their hypotheses on $\mathcal D$ are not as strict as ours). They prove it differently: they introduce typical concentration events, reduce the study to those events and use a change of measure.
Their proof does not give an explicit non-asymptotic version of the bound, unlike Theorem~\ref{thm:R_LB}. 
In contrast, our short proof is a direct application of the data processing inequality for the KL divergence.

Instead of an inequality on the supremum of the limsup of $R_{H, T}(\mathcal A, \mu)$ as in Theorem~\ref{thm:asm_R_LB_limsup}, we can also get a bound on the liminf of the supremum of $R_{H, T}(\mathcal A, \mu)$ over sets with bounded $H$. See Theorem~\ref{thm:asm_R_LB} in Appendix~\ref{sec:proofs_relative_to_section_sec:main_tool}.
We will use Theorem~\ref{thm:asm_R_LB_limsup} in order to describe the asymptotic difficulty of fixed budget identification. We could derive bounds for a fixed $T$ by using Theorem~\ref{thm:R_LB} instead, at the cost of second order terms and restrictions of the alternative to problems with $H$ bounded by $\sqrt{T}$, that is to problems which are not too hard at time $T$. 
\begin{corollary}\label{cor:corner_lb}
Let $\mu, \lambda^{(1)}, \ldots, \lambda^{(K)}$ be such that for all $j \in [K]$, $i^\star(\lambda^{(j)}) \ne i^\star(\mu)$, $H(\lambda^{(j)})>0$, and each $\lambda^{(j)}$ differ from $\mu$ only along coordinate $j$. Then for all algorithms $\mathcal A$ such that $\lim_{T \to +\infty} p_{\mu, T}(\mathcal A) = 0$ ,
\begin{align*}
\sup_{j \in [K]} R_{H,\infty}(\mathcal A,\lambda^{(j)})
\ge \sum_{j=1}^K \frac{1}{H(\lambda^{(j)}) \KL(\mu_j, \lambda_j^{(j)})}
\: .
\end{align*}
\end{corollary}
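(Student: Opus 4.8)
The plan is to apply the first inequality of Theorem~\ref{thm:asm_R_LB_limsup} with a cleverly chosen set $D(\mu)$. Specifically, I would take $D(\mu) = \{\lambda^{(1)}, \ldots, \lambda^{(K)}\}$, the finite set of one-coordinate perturbations. Since each $\lambda^{(j)} \in \alt(\mu)$ by hypothesis, this is a valid subset of $\alt(\mu)$. The algorithm satisfies $\lim_{T} p_{\mu,T}(\mathcal A) = 0$, which is exactly the consistency-at-$\mu$ property the proof of Theorem~\ref{thm:asm_R_LB_limsup} actually uses (it only needs $1 - p_{\mu,T_n} \to 1$), so the theorem applies. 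This yields
\begin{align*}
\Big(\sup_{j \in [K]} R_{H,\infty}(\mathcal A, \lambda^{(j)})\Big)^{-1}
\le \max_{\omega \in \triangle_K} \min_{j \in [K]} H(\lambda^{(j)}) \sum_{k=1}^K \omega_k \KL(\mu_k, \lambda^{(j)}_k)
\: .
\end{align*}

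\textbf{Exploiting the one-coordinate structure.} The key simplification comes from the assumption that $\lambda^{(j)}$ differs from $\mu$ only in coordinate $j$. This forces $\KL(\mu_k, \lambda^{(j)}_k) = 0$ whenever $k \neq j$ (since $\mu_k = \lambda^{(j)}_k$ and the KL divergence of a distribution with itself is zero). Hence the inner sum collapses to a single term: $\sum_{k} \omega_k \KL(\mu_k, \lambda^{(j)}_k) = \omega_j \KL(\mu_j, \lambda^{(j)}_j)$. Substituting, the right-hand side becomes $\max_{\omega \in \triangle_K} \min_{j \in [K]} \omega_j H(\lambda^{(j)}) \KL(\mu_j, \lambda^{(j)}_j)$. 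This is now a clean max-min over the simplex of a weighted allocation, where arm $j$ contributes value proportional to $\omega_j$ times the fixed positive constant $c_j := H(\lambda^{(j)}) \KL(\mu_j, \lambda^{(j)}_j)$.

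\textbf{Solving the max-min allocation problem.} The remaining step is to compute $\max_{\omega \in \triangle_K} \min_{j} \omega_j c_j$, where each $c_j > 0$ (positivity of $H$ is assumed, and $\KL(\mu_j, \lambda^{(j)}_j) > 0$ since the perturbation genuinely moves coordinate $j$ into an alternative). This is a standard water-filling / equalization computation: the maximizing $\omega$ balances all terms so that $\omega_j c_j$ is constant in $j$, giving $\omega_j = (1/c_j) / \sum_{i} (1/c_i)$ and optimal value $\big(\sum_{j} 1/c_j\big)^{-1}$. Plugging back $c_j = H(\lambda^{(j)}) \KL(\mu_j, \lambda^{(j)}_j)$, the bound reads $\big(\sup_j R_{H,\infty}\big)^{-1} \le \big(\sum_j \frac{1}{H(\lambda^{(j)}) \KL(\mu_j, \lambda^{(j)}_j)}\big)^{-1}$. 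Taking reciprocals (both sides positive) gives exactly the claimed inequality.

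\textbf{Where the difficulty lies.} None of the steps is genuinely hard; the only thing requiring a moment of care is justifying the equalization argument rigorously — i.e., verifying that $\max_{\omega \in \triangle_K} \min_j \omega_j c_j = (\sum_j 1/c_j)^{-1}$. One should confirm that any allocation not equalizing the $\omega_j c_j$ can be improved by shifting mass from a term with large $\omega_j c_j$ to the minimizing term, so the optimum is attained at equalization, and then read off the common value. I expect this to be the main (though still routine) obstacle; the application of Theorem~\ref{thm:asm_R_LB_limsup} and the vanishing of off-diagonal KL terms are immediate.
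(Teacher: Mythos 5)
Your proposal is correct and follows essentially the same route as the paper's own proof: apply the first inequality of Theorem~\ref{thm:asm_R_LB_limsup} with $D(\mu) = \{\lambda^{(1)}, \ldots, \lambda^{(K)}\}$, collapse the inner sum using the one-coordinate structure, and conclude by the equalization argument $\max_{\omega \in \triangle_K} \min_j \omega_j c_j = \big(\sum_j 1/c_j\big)^{-1}$. Your remark that the theorem's proof only needs $1 - p_{\mu,T_n} \to 1$ at the single point $\mu$ (so the weaker hypothesis of the corollary suffices) is a detail the paper leaves implicit, and it is correctly resolved.
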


\begin{proof}
We apply the first inequality of Theorem~\ref{thm:asm_R_LB_limsup} with $D(\mu) = \{\lambda^{(1)}, \ldots, \lambda^{(K)}\}$.
\begin{align*}
(\sup_{j \in [K]} R_{H,\infty}(\mathcal A, \lambda^{(j)}))^{-1}
&\le \max_{\omega \in \triangle_K} \inf_{j \in [K]} H(\lambda^{(j)})\sum_{k=1}^K \omega_k \KL(\mu_k, \lambda_k^{(j)})
\\
&= \max_{\omega \in \triangle_K} \inf_{j \in [K]} H(\lambda^{(j)}) \omega_j \KL(\mu_j, \lambda_j^{(j)})
\: .
\end{align*}
The optimal $\omega$ equalizes $H(\lambda^{(j)}) \omega_j \KL(\mu_j, \lambda_j^{(j)})$ for all $j$, which gives the result.
\end{proof}

The sum on the right hand side of Corollary~\ref{cor:corner_lb} is very close to the quantity $h^*$ defined in \citep{carpentier2016tight} in the setting of Bernoulli bandits with $H$ the sum of inverse squared gaps. This is due to the similar construction of a set of points in the alternative that each differ from a given $\mu \in \mathcal D$ in one coordinate only. That construction was reused by \cite{ariu2021policy} to get a bound on a quantity called expected policy regret and by \cite{yangminimax} to prove a lower bound for fixed budget BAI in linear bandits.

The main advantage of Corollary~\ref{cor:corner_lb} is that it is simpler to use than Theorem~\ref{thm:asm_R_LB_limsup}, but it can lead to worse bounds. For example in BAI in two-arms Gaussian bandits with known variance 1, with $H = H_{\mathcal C^{sp}}$ Theorem~\ref{thm:asm_R_LB_limsup} gives $\sup_{\lambda\in \mathcal D} R_{H, \infty}(\mathcal A, \lambda) \ge 1$ while the best bound that can be achieved with Corollary~\ref{cor:corner_lb} is 1/2.
That task is very simple, as remarked by \cite{kaufmann2016complexity}: the oracle fixed proportions are independent of the means (both arms are played equally), which means that the algorithm that plays those proportions has $\sup_{\lambda\in \mathcal D} R_{H, \infty}(\mathcal A, \lambda) \le 1$.
Theorem~\ref{thm:asm_R_LB_limsup} shows that this is tight and that no adaptive algorithm can beat it everywhere.
We could not arrive to that conclusion with the weaker Corollary~\ref{cor:corner_lb} since it only proves a $1/2$ lower bound.

\section{The range of the difficulty ratio}
\label{sec:the_range_of_the_difficulty_ratio}

In asymptotic fixed confidence, the complexity of $\delta$-correct algorithms is given by the oracle difficulty of static proportions. There is an optimal sampling allocation at each $\mu \in \mathcal D$, and the best any adaptive algorithm can do is match the performance of that allocation.
The fixed confidence analogue of the difficulty ratio would be greater than or equal to $1$ for any $\delta$-correct algorithm, and exactly $1$ for TnS.
We hence focus on the ratio of fixed budget algorithm families to the oracle difficulty of the class of static proportions algorithms, which is given by $H_{\mathcal C^{sp}}(\mu) = \left( \max_{\omega \in \triangle_k} \inf_{\lambda \in \alt(\mu)} \sum_{k=1}^K \omega_k \KL(\lambda_k, \mu_k) \right)^{-1}$.
In a general fixed budget identification task described by $(\mathcal D, \mathcal I, i^\star)$, two related questions remain open:
\begin{itemize}[noitemsep]
  \item Do fixed proportions indeed always define oracle algorithms, or could there exist an adaptive algorithm with a better rate everywhere? In technical terms, can we have the inequality $\inf_{\mathcal A \in \mathcal C_\infty} \sup_{\lambda\in \mathcal D} R_{H_{\mathcal C^{sp}},\infty}(\mathcal A, \lambda)  < 1$ ? Recall that $\mathcal C_\infty$ is the class of all algorithm families. \cite{ouhamma2021online} exhibit a setting close to fixed budget identification in which an adaptive algorithm can indeed beat any static proportions algorithm. However, their objective does not fit into our fixed budget identification framework and their example uses families of distributions in which the KL can be infinite.
  \item For Bernoulli BAI, a lower bound of \citep{carpentier2016tight} and the upper bound on the Successive Rejects algorithm of \citep{audibert2010best} together show that for $H_1$ the sum of inverse squared gaps, the value $\inf_{\mathcal A \in \mathcal C_\infty} \sup_{\lambda\in \mathcal D} R_{H_1,\infty}(\mathcal A, \lambda)$ is of order $\log K$, strictly greater than 1 for $K$ large enough. Do we have the same bound for $\mathcal H_{\mathcal C^{sp}}$ and are there problems on which the difficulty ratio can be much larger than $\log K$?
\end{itemize}

We study the possible values for the smallest maximal difficulty ratio over all algorithm: we prove upper and lower bounds on $\inf_{\mathcal A \in \mathcal C_\infty} \sup_{\lambda\in \mathcal D} R_{H_{\mathcal C^{sp}},\infty}(\mathcal A, \lambda) $ when we vary the task $(\mathcal D, \mathcal I, i^\star)$.

\subsection{Upper bound}
\label{sub:upper_bound}

We first prove that $\inf_{\mathcal A \in \mathcal C_\infty} \sup_{\lambda \in \mathcal D}  R_{H_{\mathcal C^{sp}}, \infty}(\mathcal A, \lambda) \le K$ on any task $(\mathcal D, \mathcal I, i^\star)$ by showing that uniform sampling can be worse than the oracle static proportions by a factor of at most $K$. We then exhibit a task on which there is equality.

\begin{theorem}\label{thm:uniform_upper_bound}
For all $\omega \in \triangle_K^0$, the static proportions algorithm $\mathcal A_\omega^{sp}$ belongs to $\overline{\mathcal C^{sp}}$ and satisfies $\sup_{\lambda\in \mathcal D} R_{H_{\mathcal C^{sp}}, \infty}(\mathcal A_\omega^{sp}, \lambda) \le (\min_{j \in [K]} \omega_j)^{-1}$. In particular, for $A_u \in \mathcal C^{sp}$ the uniform sampling algorithm (static proportions with proportion $1/K$ for all arms),
$
\sup_{\lambda\in \mathcal D} R_{H_{\mathcal C^{sp}}, \infty}(\mathcal A_u, \lambda) \le K \: .
$
\end{theorem}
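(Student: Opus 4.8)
The plan is to reduce the statement to the closed-form rate for static proportions provided by Theorem~\ref{thm:oracle_difficulty_sp}, and then to exploit two elementary properties—monotonicity and positive homogeneity—of the concave map $\omega \mapsto \inf_{\lambda\in\alt(\mu)}\sum_k \omega_k\KL(\lambda_k,\mu_k)$.

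First I would fix $\mu\in\mathcal D$ and abbreviate $g_\mu(\omega)=\inf_{\lambda\in\alt(\mu)}\sum_{k\in[K]}\omega_k\KL(\lambda_k,\mu_k)$. By Theorem~\ref{thm:oracle_difficulty_sp} the quantity $h_{\mu,T}(\mathcal A_\omega^{sp})$ actually converges, to $g_\mu(\omega)^{-1}$, so the limsup defining $R_{H_{\mathcal C^{sp}},\infty}(\mathcal A_\omega^{sp},\mu)$ is a genuine limit, and using $H_{\mathcal C^{sp}}(\mu)=\big(\max_{\omega'\in\triangle_K^0}g_\mu(\omega')\big)^{-1}$,
\[
R_{H_{\mathcal C^{sp}},\infty}(\mathcal A_\omega^{sp},\mu)=\frac{g_\mu(\omega)^{-1}}{H_{\mathcal C^{sp}}(\mu)}=\frac{\max_{\omega'\in\triangle_K^0}g_\mu(\omega')}{g_\mu(\omega)}.
\]
Here I would note that under Assumption~\ref{asm:answer_nhds} and $\omega\in\triangle_K^0$ we have $g_\mu(\omega)>0$ (this is exactly the positivity used to argue $\mathcal A_\omega^{sp}\in\mathcal C_{ec}$), so the ratio is well defined and finite.

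The crux is then a two-line estimate on $g_\mu$. Since every $\KL(\lambda_k,\mu_k)\ge 0$, for each fixed $\lambda$ the linear functional $\omega\mapsto\sum_k\omega_k\KL(\lambda_k,\mu_k)$ is nondecreasing in each coordinate, hence so is the infimum $g_\mu$; and $g_\mu$ is positively homogeneous of degree one, $g_\mu(c\omega)=c\,g_\mu(\omega)$ for $c>0$. For any competitor $\omega'\in\triangle_K^0$ one has $\omega'_k\le 1$, so componentwise $(\min_j\omega_j)\,\omega'\preceq(\min_j\omega_j)\mathbf 1\preceq\omega$. Applying monotonicity and then homogeneity yields $(\min_j\omega_j)\,g_\mu(\omega')=g_\mu\big((\min_j\omega_j)\omega'\big)\le g_\mu(\omega)$, and taking the maximum over $\omega'\in\triangle_K^0$ gives $(\min_j\omega_j)\max_{\omega'}g_\mu(\omega')\le g_\mu(\omega)$, i.e.\ $R_{H_{\mathcal C^{sp}},\infty}(\mathcal A_\omega^{sp},\mu)\le(\min_j\omega_j)^{-1}$.

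Finally I would observe that the bound $(\min_j\omega_j)^{-1}$ does not depend on $\mu$, so taking the supremum over $\mu\in\mathcal D$ gives both the announced inequality and, since the maximal difficulty ratio is then finite, the membership $\mathcal A_\omega^{sp}\in\mathcal C(H_{\mathcal C^{sp}})=\overline{\mathcal C^{sp}}$. Specializing to $\omega=(1/K,\dots,1/K)$, where $\min_j\omega_j=1/K$, yields $\sup_{\lambda\in\mathcal D}R_{H_{\mathcal C^{sp}},\infty}(\mathcal A_u,\lambda)\le K$. The argument is short, and the only points requiring care are the uniformity of the constant in $\mu$ (which is precisely what makes the supremum finite and the class membership hold) and the positivity $g_\mu(\omega)>0$; beyond recognizing the monotonicity-plus-homogeneity trick I do not expect any real obstacle.
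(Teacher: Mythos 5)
Your proof is correct and follows essentially the same route as the paper's: the paper also starts from the limit formula of Theorem~\ref{thm:oracle_difficulty_sp} and uses the componentwise bound $\omega_k \ge \omega^\star_k(\mu)\min_j \omega_j$ (your monotonicity-plus-homogeneity step applied to the oracle proportions $\omega^\star(\mu)$), yielding the same uniform-in-$\mu$ constant. Your explicit remarks on the positivity of $g_\mu(\omega)$ and on deducing membership in $\overline{\mathcal C^{sp}}$ are just slightly more careful renderings of points the paper leaves implicit.
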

\begin{proof}
Let $\omega^\star(\mu) \in \triangle_K$ be the oracle static proportions at $\mu$ and let $\omega \in \triangle_K^0$. Then for all $k$, $\omega_k \ge \omega^\star_k(\mu) \min_j \omega_j$ and, using Theorem~\ref{thm:oracle_difficulty_sp},
\begin{align*}
\limsup_{T \to +\infty} h_{\mu, T}(\mathcal A_\omega^{sp})
&\le \frac{1}{\min_{j \in [K]} \omega_j}\left( \inf_{\lambda \in \alt(\mu)} \sum_{k=1}^K \omega_k^\star(\mu) \KL(\lambda_k, \mu_k) \right)^{-1}
= \frac{1}{\min_{j \in [K]} \omega_j} H_{\mathcal C^{sp}}(\mu)
\: .
\end{align*}
We proved that $\limsup_{T \to +\infty} R_{H_{\mathcal C^{sp}}, T}(\mathcal A_\omega^{sp}, \mu) \le (\min_{j \in [K]} \omega_j)^{-1}$ for all $\mu \in \mathcal D$.
\end{proof}

Of course there are tasks for which uniform sampling is not the best algorithm: for Gaussian BAI the Successive-Rejects algorithm \citep{audibert2010best} has a ratio of order $\log K$ (see also \citep{barrier2022best}). However, in some identification tasks $K$ is the best achievable ratio.

\begin{theorem}\label{thm:positivity_LB}
On the Positivity problem, where we check whether there is an arm with mean lower than a threshold $\theta$, $\inf_{\mathcal A \in \mathcal C_\infty} \sup_{\lambda \in \mathcal D}  R_{H_{\mathcal C^{sp}}, \infty}(\mathcal A, \lambda) = K$.
\end{theorem}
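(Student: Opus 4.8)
The plan is to pair the upper bound already contained in Theorem~\ref{thm:uniform_upper_bound} with a matching lower bound extracted from Corollary~\ref{cor:corner_lb}. The upper bound is immediate: the uniform sampling algorithm $\mathcal A_u$ lies in $\mathcal C_\infty$ and satisfies $\sup_{\lambda \in \mathcal D} R_{H_{\mathcal C^{sp}}, \infty}(\mathcal A_u, \lambda) \le K$, so $\inf_{\mathcal A \in \mathcal C_\infty} \sup_{\lambda} R_{H_{\mathcal C^{sp}}, \infty}(\mathcal A, \lambda) \le K$. The work is in the reverse inequality.

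For the lower bound I would fix a small $\varepsilon > 0$ and take a base problem $\mu$ with every arm just above the threshold, $\mu_k = \theta + \varepsilon$ for all $k$, so that $i^\star(\mu) = \text{all above}$. For each $j \in [K]$ let $\lambda^{(j)}$ agree with $\mu$ except that arm $j$ is pushed below the threshold to a fixed value $\beta < \theta$. Then $i^\star(\lambda^{(j)}) = \text{exists below} \neq i^\star(\mu)$, and $\lambda^{(j)}$ differs from $\mu$ only in coordinate $j$, exactly the configuration Corollary~\ref{cor:corner_lb} requires. The central computation is the static-proportions difficulty of these alternatives. Since only arm $j$ sits below the threshold in $\lambda^{(j)}$, the cheapest problem in $\alt(\lambda^{(j)})$ (one answering ``all above'') is reached by raising arm $j$ to the threshold while leaving every other arm fixed; the inner infimum equals $\omega_j \KL(\theta,\beta)$, and maximizing over $\omega \in \triangle_K$ puts all mass on arm $j$, giving $H_{\mathcal C^{sp}}(\lambda^{(j)}) = 1/\KL(\theta,\beta)$. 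Feeding this into Corollary~\ref{cor:corner_lb}, any $\mathcal A$ with $\lim_T p_{\mu,T}(\mathcal A)=0$ satisfies
\[
\sup_{\lambda \in \mathcal D} R_{H_{\mathcal C^{sp}},\infty}(\mathcal A, \lambda)
\;\ge\; \sum_{j=1}^K \frac{1}{H_{\mathcal C^{sp}}(\lambda^{(j)})\,\KL(\mu_j, \lambda_j^{(j)})}
\;=\; \sum_{j=1}^K \frac{\KL(\theta,\beta)}{\KL(\theta+\varepsilon,\beta)} .
\]

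Algorithms that are \emph{not} consistent at $\mu$ are handled separately and for free: if $p_{\mu,T}(\mathcal A) \not\to 0$ then $h_{\mu,T}(\mathcal A) \to +\infty$, so $R_{H_{\mathcal C^{sp}},\infty}(\mathcal A,\mu) = +\infty$, and since $\mu \in \mathcal D$ the supremum over $\mathcal D$ is already infinite. Hence the displayed bound holds for every $\mathcal A \in \mathcal C_\infty$, and so does the resulting bound on $\inf_{\mathcal A}\sup_{\lambda}$. I would then let $\varepsilon \to 0$ with $\beta$ held fixed: by continuity of the Kullback--Leibler divergence in its first argument, $\KL(\theta+\varepsilon,\beta) \to \KL(\theta,\beta)$, each summand tends to $1$, and the right-hand side tends to $K$. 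This yields $\inf_{\mathcal A \in \mathcal C_\infty}\sup_{\lambda} R_{H_{\mathcal C^{sp}},\infty}(\mathcal A,\lambda) \ge K$, which with the upper bound gives equality.

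The delicate point is the \emph{asymmetric} limiting choice: driving $\mu_j$ down to $\theta$ while keeping $\lambda_j^{(j)} = \beta$ bounded away from it. A symmetric construction, sending both $\mu_j$ and $\lambda_j^{(j)}$ to $\theta$, would only produce $K/4$, because the gap $\mu_j - \lambda_j^{(j)}$ appearing in the denominator would be roughly twice the effective gap $\theta - \lambda_j^{(j)}$ controlling the numerator $\KL(\theta,\beta)$, and the local quadratic behaviour of the divergence would cost a factor of four. Keeping $\beta$ fixed eliminates this loss entirely. Apart from this, the only genuine (though routine) obstacle is verifying the closed form $H_{\mathcal C^{sp}}(\lambda^{(j)}) = 1/\KL(\theta,\beta)$, i.e.\ that the single-coordinate move of arm $j$ to the threshold is indeed the optimal alternative.
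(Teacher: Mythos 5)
Your proof is correct and follows essentially the same route as the paper: the same base problem with all arms equal and just above the threshold, the same single-coordinate alternatives $\lambda^{(j)}$ with one arm pushed below, the same computation $H_{\mathcal C^{sp}}(\lambda^{(j)}) = 1/\KL(\theta,\beta)$, the same application of Corollary~\ref{cor:corner_lb}, and the upper bound imported from Theorem~\ref{thm:uniform_upper_bound}. The one place where you diverge is the limiting regime used to drive the KL ratio to $1$: the paper fixes the base mean $m > \theta$ and sends the alternative value $\ell$ to the lower edge of the mean space, claiming $\KL(\theta,\ell)/\KL(m,\ell) \to 1$, whereas you fix $\beta < \theta$ and send the base means $\theta + \varepsilon$ down to $\theta$, so that $\KL(\theta,\beta)/\KL(\theta+\varepsilon,\beta) \to 1$ by continuity of the divergence in its first argument. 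Your variant is actually the more robust one: the paper's ratio tends to $1$ when the mean space is unbounded below (e.g.\ Gaussian), but for a family with a finite lower edge such as Bernoulli one has $\KL(\theta,\ell)/\KL(m,\ell) \to \theta/m < 1$ as $\ell \to 0$, so the paper's single limit would only yield $K\theta/m$ and an additional limit $m \to \theta$ would still be needed; your argument needs nothing beyond continuity and hence works verbatim in any one-parameter exponential family. The remaining details in your write-up — the dichotomy disposing of algorithms not consistent at $\mu$ (for which the ratio at $\mu$ is already $+\infty$), and the order of quantifiers when letting $\varepsilon \to 0$ after a bound uniform over $\mathcal A$ — are handled correctly and match the paper's logic.
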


That theorem proves that on the positivity problem, if a class contains the static proportions algorithms then it does not have a complexity. Furthermore, the uniform sampling algorithm is optimal for the criterion $\sup_{\lambda \in \mathcal D} R_{H_{\mathcal C^{sp}}, \infty}(\mathcal A, \lambda)$.

\begin{proof}
Let $\mathcal A$ be any algorithm family.
We use Corollary~\ref{cor:corner_lb} for $\mu$ a tuple of $K$ times the same distribution with mean $m > \theta$.
Either $R_{H_{\mathcal C^{sp}}, \infty}(\mathcal A, \mu) = + \infty$ and the lower bound is obvious or we can apply the corollary.
For $j \in [K]$, we define $\lambda^{(j)}$ identical to $\mu$ except for $\lambda^{(j)}_j = \ell < \theta$. Then
$
\max_{j \in [K]} R_{H_{\mathcal C^{sp}}, \infty}(\mathcal A, \lambda^{(j)})
\ge \sum_{j = 1}^K (H_{\mathcal C^{sp}}(\lambda^{(j)}) \KL(m, \ell))^{-1}$ .
Now for all $j$, a simple computation gives $H_{\mathcal C^{sp}}(\lambda^{(j)}) = (\KL(\theta, \ell))^{-1}$, such that the lower bound is $K \KL(\theta, \ell) / \KL(m, \ell)$.
When $\ell$ tends to the lower bound of the means in the exponential family, the KL ratio tends to 1.
\end{proof}

The proof of Theorem~\ref{thm:positivity_LB} exhibits $K$ problems, each with a different arm with mean below the threshold, and the oracle algorithm for each samples only that arm. The lower bound shows that detecting which arm is below the threshold is harder than the identification task and that no matter the algorithm, it is as bad as uniform sampling on one of the problems (but we don't know which).

We established that the highest possible value for identification tasks $(\mathcal D, \mathcal I, i^\star)$ of the quantity $\inf_{\mathcal A \in \mathcal C_\infty} \sup_{\lambda \in \mathcal D} R_{H_{\mathcal C^{sp}}, \infty}(\mathcal A, \lambda)$ is $K$, and that this value is attained for the Positivity problem.

\subsection{Lower bounds}
\label{sub:lower_bounds}

We turn our attention to lower bounds. A natural conjecture is the following: \emph{for all fixed budget tasks and all algorithm families,} $\sup_{\lambda \in \mathcal D} R_{H_{\mathcal C^{sp}}, \infty}(\mathcal A, \lambda) \ge 1$.
If true, then no adaptive algorithm that can do everywhere better than the static proportions oracle.
It could still have lower error probability on one problem $\mu \in \mathcal D$, but would have to be worse somewhere else.
First, we prove the conjecture for Gaussian half-space identification (Lemma~\ref{lem:half_space} in Appendix~\ref{sec:proofs_of_results_from_section_sec:the_range_of_the_difficulty_ratio}). In that task, there are two answers and $i^\star$ has a different value on each side of a hyperplane.
We then extend that result to Gaussian distributions with piecewise linear boundaries between the answer sets.

\begin{theorem}\label{thm:flat_boundary}
Suppose that there is an $L^2$ ball $B(\eta, r)$ with center $\eta \in \mathrm{cl}(\mathcal D)$ and radius $r > 0$ such that $i^\star$ takes only two values in $B(\eta, r)$, say $i$ and $j$, and the boundary between $B(\eta, r) \cap \{\mu \mid i^\star(\mu) = i\}$ and $B(\eta, r) \cap \{\mu \mid i^\star(\mu) = j\}$ is the restriction of a hyperplane passing through $\eta$. Then for Gaussian arms (each with a known but possibly different variance), the lowest maximal difficulty ratio is $\inf_{\mathcal A \in \mathcal C_\infty} \sup_{\lambda \in \mathcal D} R_{H_{\mathcal C^{sp}}, \infty}(\mathcal A, \lambda) \ge 1$.
\end{theorem}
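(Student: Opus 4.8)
The plan is to apply the second inequality of Theorem~\ref{thm:asm_R_LB_limsup} with $H = H_{\mathcal C^{sp}}$ and exhibit problems witnessing that its right-hand side is at most $1$. Concretely, it suffices to prove $\inf_{\mu \in \mathcal D}\max_{\omega\in\triangle_K}\inf_{\lambda\in\alt(\mu)} H_{\mathcal C^{sp}}(\lambda)\sum_{k}\omega_k\KL(\mu_k,\lambda_k)\le 1$, since then $(\sup_{\lambda}R_{H_{\mathcal C^{sp}},\infty}(\mathcal A,\lambda))^{-1}\le 1$ for every consistent algorithm family $\mathcal A$. Non-consistent families can be discarded first: if $p_{\mu_0,T}\not\to 0$ for some $\mu_0 \in \mathcal D$, then $h_{\mu_0,T}\to\infty$ along a subsequence, so $R_{H_{\mathcal C^{sp}},\infty}(\mathcal A,\mu_0)=+\infty$ and the supremum is trivially at least $1$.

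First I would compute $H_{\mathcal C^{sp}}$ locally for Gaussian arms. Write the boundary hyperplane as $\{x:\langle a,x\rangle = b\}$ with $g(x):=\langle a,x\rangle - b$, so that (up to relabelling) $i^\star = i$ on $\{g>0\}\cap B(\eta,r)$ and $i^\star = j$ on $\{g<0\}\cap B(\eta,r)$, and recall $\KL(\mu_k,\lambda_k)=(\mu_k-\lambda_k)^2/(2\sigma_k^2)$. For a point $x$ near the center whose nearest alternative lies on the boundary, a Lagrange computation for the quadratic program $\inf\{\sum_k\omega_k(x_k-\lambda_k)^2/(2\sigma_k^2):\langle a,\lambda\rangle=b\}$ gives the value $g(x)^2/(2Q(\omega))$ with $Q(\omega):=\sum_k a_k^2\sigma_k^2/\omega_k$, and Cauchy--Schwarz yields $\min_{\omega}Q(\omega)=(\sum_k|a_k|\sigma_k)^2$ attained at $\omega^\star_k\propto|a_k|\sigma_k$. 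Hence $H_{\mathcal C^{sp}}(x)=C/g(x)^2$ with $C:=2(\sum_k|a_k|\sigma_k)^2$, and crucially the maximizing allocation $\omega^\star$ does not depend on $x$. This is exactly the half-space computation underlying Lemma~\ref{lem:half_space}; the only subtlety is that this identity holds in $B(\eta,r)$ only when the weighted projection of $x$ onto the boundary stays inside the ball and no answer-$i$ region outside the ball is closer to $x$ than that projection, both of which are guaranteed for $x$ sufficiently close to $\eta$.

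The core of the argument is then a localization near $\eta$. Fix a small level $s>0$ and take $\mu$ on the $i$-side at signed height $g(\mu)=d>0$, with $\mu$ close to $\eta$. For any $\omega\in\triangle_K$ let $\lambda(\omega)$ be the $\omega$-weighted projection of $\mu$ onto $\{g=-s\}$; the displacement obeys a uniform bound $\|\lambda(\omega)-\mu\|\le (d+s)\,c_a$, with $c_a$ depending only on $a$, obtained by controlling each term of $\|\lambda(\omega)-\mu\|^2$ through $Q(\omega)\ge a_k^2\sigma_k^2/\omega_k$. Thus for $d,s$ small and $\mu$ near $\eta$, the point $\lambda(\omega)$ stays in $B(\eta,r)$ and satisfies $H_{\mathcal C^{sp}}(\lambda(\omega))=C/s^2$ for every $\omega$ simultaneously. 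Evaluating the infimum at $\lambda(\omega)$ gives
\[
\inf_{\lambda\in\alt(\mu)} H_{\mathcal C^{sp}}(\lambda)\sum_k\omega_k\KL(\mu_k,\lambda_k)\le \frac{C}{s^2}\cdot\frac{(d+s)^2}{2Q(\omega)}=\frac{C}{2Q(\omega)}\Big(1+\frac{d}{s}\Big)^2 .
\]
Taking $\max_\omega$ and using $\max_\omega C/(2Q(\omega))=1$ leaves $(1+d/s)^2$, and letting $d\to 0$ with $s$ fixed drives the right-hand side of Theorem~\ref{thm:asm_R_LB_limsup} down to $1$, which is the desired conclusion.

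I expect the main obstacle to be the geometric bookkeeping of this localization rather than any single inequality: one must choose the neighbourhood of $\eta$ and the scales $d,s$ so that, simultaneously for all $\omega\in\triangle_K$, the projected witnesses $\lambda(\omega)$ remain inside $B(\eta,r)$ and retain the clean value $H_{\mathcal C^{sp}}(\lambda(\omega))=C/g(\lambda(\omega))^2$. The uniform displacement bound is exactly what makes this possible even though the $\max_\omega$ ranges over nearly degenerate allocations; without it, skewed $\omega$ could push the witnessing alternatives outside the region where the half-space formula for $H_{\mathcal C^{sp}}$ is valid, and the behaviour of $H_{\mathcal C^{sp}}$ outside the ball is not controlled by the hypotheses of the theorem.
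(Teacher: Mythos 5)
Your proposal is correct and follows essentially the same route as the paper's proof in Appendix~\ref{sec:proofs_of_results_from_section_sec:the_range_of_the_difficulty_ratio}: invoke Theorem~\ref{thm:asm_R_LB_limsup} (after discarding non-consistent families), compute the half-space value of $H_{\mathcal C^{sp}}$ as in Lemma~\ref{lem:half_space}, and localize near $\eta$ by using $\omega$-weighted projections onto a level set $\{g = -s\}$ as witnesses with $d/s \to 0$, including the same (shared with the paper) somewhat informal identification of $H_{\mathcal C^{sp}}$ with the half-space formula for points near the center. The only technical difference is that where the paper regularizes $\omega$ into $\omega^\varepsilon$ with coordinates bounded below by $\varepsilon/(1+\varepsilon)$ to keep the witnesses inside the ball, at the cost of a $(1+\varepsilon)$ factor, you derive a per-coordinate displacement bound $(d+s)/|a_k|$ valid uniformly over $\omega$ from $Q(\omega) \ge a_k^2\sigma_k^2/\omega_k$, which accomplishes the same localization slightly more cleanly.
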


The idea of the proof is the following: if we consider $\lambda \in \mathcal D$ close to the center of the ball, then the oracle difficulty $H_{\mathcal C^{sp}}(\lambda)$ of static proportions for our task is the same as for half-space identification.
Then if we choose $\mu$ even closer to the center, we can apply Theorem~\ref{thm:asm_R_LB_limsup} to a set $D(\mu)$ of points for which this equality holds.
Up to border effects that disappear when $\mu$ get closer to the center, we get the same lower bound as for half-space identification.
Full proof in Appendix~\ref{sec:proofs_of_results_from_section_sec:the_range_of_the_difficulty_ratio}.

The hypothesis of that lemma applies to all examples of fixed budget identification we introduced. Indeed BAI, Thresholding bandits and Positivity all have piecewise linear boundaries.
More generally, we could extend Theorem~\ref{thm:flat_boundary} to tasks in which the boundary has bounded curvature at some point: we can zoom in on that point and find problems for which we recover the half-space bound. This remark also illustrates the limitation of Theorem~\ref{thm:flat_boundary}: it is asymptotic in nature.
The proof requires points that are much closer to the center of the ball than the radius.
Either we need a very large ball (BAI when the two best arms have much higher means than other arms) or we need problems very close to the boundary.
It should be possible to extend the theorem to any exponential family by using that locally the KL is quadratic. Again, we would describe the asymptotic behavior of an algorithm family on problems very close to a given boundary point.

The lower bound $\inf_{\mathcal A \in \mathcal C_\infty} \sup_{\lambda \in \mathcal D} R_{H_{\mathcal C^{sp}}, \infty}(\mathcal A, \lambda) \ge 1$ shows that if a class $\mathcal C$ contains $\mathcal C^{sp}$ and admits a complexity, then that complexity has to be $H_{\mathcal C^{sp}}$.

\section{No Complexity in Best Arm Identification}
\label{sec:no_complexity_in_best_arm_identification}

We have investigated the possible values for the difficulty ratio over different identification tasks. We now focus on best arm identification, with $\mathcal I = [K]$ and $i^\star$ the arm with highest mean. We show that for several values of $\mathcal D$, $\inf_{\mathcal A \in \mathcal C_{\infty}}\sup_{\lambda \in \mathcal D} R_{H_{\mathcal C}, \infty}(\mathcal A, \lambda) > 1$ for any class $\mathcal C$ that includes the static proportions algorithms. We conclude that these classes don't admit a complexity.

\subsection{Gaussian best arm identification}
\label{sub:gaussian_best_arm_identification}

\begin{theorem}\label{thm:gaussian_bai_no_complexity}
Consider the BAI task with Gaussian distributions with variance 1.
For any class $\mathcal C$ containing the static proportions algorithms,
$\inf_{\mathcal A \in \mathcal C_\infty} \sup_{\lambda \in \mathcal D} R_{H_{\mathcal C},\infty}(\mathcal A, \lambda)
\ge (3/80)\log(K)$ .
\end{theorem}

This proves that for $K$ large enough, no algorithm class containing the static proportions admits a complexity in Gaussian BAI. It applies to (exponentially) consistent algorithms and to algorithms that have a difficulty ratio to the complexity of the uniform allocation which is uniformly bounded.

\begin{proof}
First, since $\mathcal C^{sp} \subseteq \mathcal C$, for any algorithm $\mathcal A$ and $\mu \in \mathcal D$, $R_{H_{\mathcal C}, T}(\mathcal A, \mu) \ge R_{H_{\mathcal C^{sp}}, T}(\mathcal A, \mu)$. It suffices to give a lower bound for $H_{\mathcal C^{sp}}$.

Let $H_{\Delta}(\mu) = \frac{2}{\min_{k : \Delta_k>0} \Delta_k^2} + \sum_{k:\Delta_k>0} \frac{2}{\Delta_k^2}$. It was shown by \cite{garivier2016optimal} that for all $\mu \in \mathcal D$, this function satisfies the inequalities $H_\Delta(\mu) \le H_{C^{sp}}(\mu) \le 2 H_\Delta(\mu)$ .
Thus $R_{H_{\mathcal C}, T}(\mathcal A, \mu) \ge R_{H_\Delta, T}(\mathcal A, \mu) / 2$. From this point on, we use a construction similar to the one that was used by \cite{carpentier2016tight} to prove a lower bound on the ratio to $H_\Delta$ for Bernoulli bandits.
We define a Gaussian problem $\mu$ by $\mu_1 = 0$ (or any arbitrary value) and $\mu_k = \mu_1 - k \Delta$ for all $k \in \{2, \ldots, K\}$ and some $\Delta > 0$.
We apply Corollary~\ref{cor:corner_lb} to $\mu$ and $\lambda^{(2)}, \ldots, \lambda^{(K)}$ where each $\lambda^{(j)}$ is identical to $\mu$ except that $\lambda^{(j)}_j = \mu_1 + (\mu_1 - \mu_j)$. The details can be found in appendix~\ref{sec:proofs_relative_to_section_sec:no_complexity_in_best_arm_identification}.
\end{proof}

The closest existing result is the lower bound of \citep{carpentier2016tight}.
They don't consider the difficulty of fixed proportions but $H_\Delta$, the sum of inverse squared gaps.
That function was hypothesized to be a complexity for fixed budget at the time. They present a set of Bernoulli problems and show that for all algorithms that return $\hat{i}_T = i^\star(\hat{\mu}_T)$, there is a lower bound on the probability of error on one problem in the set.
Their lower bound can be rewritten as a bound on $\sup_{\lambda \in \mathcal D} R_{H_\Delta,T}(\mathcal A, \lambda)$.
It is not asymptotic in $T$, but we could also obtain a non-asymptotic bound by using Theorem~\ref{thm:R_LB} instead of Theorem~\ref{thm:asm_R_LB_limsup} when deriving Corollary~\ref{cor:corner_lb} at the cost of additional low order terms.
Their result is valid only for algorithms that return the empirical correct answer and does not for example apply to Successive Rejects, while we derive a result for any algorithm.

Since the Kullback-Leibler divergence for other exponential families can be bounded from above and below by a constant times the Gaussian KL if we consider only parameters in a closed bounded interval, we can extend Theorem~\ref{thm:gaussian_bai_no_complexity} beyond Gaussians. We obtain that there exists a constant $c$ such that
$\inf_{\mathcal A \in \mathcal C_\infty} \sup_{\lambda \in \mathcal D} R_{H_{\mathcal C},\infty}(\mathcal A, \lambda)
\ge c \log(K)$ . Hence for $K$ large enough there is no complexity.

\subsection{Two arms best arm identification with Bernoulli distributions}
\label{sub:two_arms}

In BAI with two arms and Gaussian distributions with known variances (possibly different for each arm), there is a unique static proportions oracle, independent of the means \citep{kaufmann2016complexity}. Thus that same algorithm matches the lower bound on all $\mu \in \mathcal D$ and fixed budget BAI with two Gaussian arms has a complexity. We showed that as $K$ becomes large, this is no longer the case.
In Bernoulli bandits, we show that there is no complexity even for $K = 2$. 
From Theorems~\ref{thm:uniform_upper_bound} and~\ref{thm:flat_boundary}, we know that the infimum of the maximal difficulty ratio belongs to the interval $[1, 2]$, where the upper bound comes from $K=2$. We now prove that it is strictly greater than 1. 
We will apply Corollary~\ref{cor:corner_lb} to well chosen mean vectors.
In order to do so, we first compute explicitly the oracle difficulty of static proportions algorithms.

\begin{lemma}\label{cor:bernoulli_bai_H}
In a two arms BAI problem with Bernoulli distributions,
\begin{align*}
(H_{\mathcal C^{sp}}(\mu))^{-1}
&= \KL\Big(\frac{\log\frac{1 - \mu_2}{1 - \mu_1}}{\log \frac{\mu_1(1 - \mu_2)}{(1 - \mu_1)\mu_2}}, \mu_1\Big)
= \KL\Big(\frac{\log\frac{1 - \mu_2}{1 - \mu_1}}{\log \frac{\mu_1(1 - \mu_2)}{(1 - \mu_1)\mu_2}}, \mu_2\Big)
\: .
\end{align*}
\end{lemma}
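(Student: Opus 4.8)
The plan is to reduce the max--min defining $(H_{\mathcal C^{sp}}(\mu))^{-1}$ to a one-dimensional problem and then solve it in closed form. Assume without loss of generality $\mu_1 > \mu_2$, so that $i^\star(\mu) = 1$ and $\alt(\mu) = \{\lambda \in \mathcal D : \lambda_1 < \lambda_2\}$. The first step is to show that the inner infimum
\[
\inf_{\lambda \in \alt(\mu)} \big(\omega_1 \KL(\lambda_1, \mu_1) + \omega_2 \KL(\lambda_2, \mu_2)\big)
\]
collapses onto the diagonal $\lambda_1 = \lambda_2 = z$. Each summand $\KL(\cdot, \mu_k)$ is convex with minimum $0$ at $\mu_k$; since $\mu_2 < \mu_1$, the unconstrained minimizer $(\mu_1, \mu_2)$ violates the constraint $\lambda_1 \le \lambda_2$, and a short monotonicity argument pushes any optimum to the boundary $\lambda_1 = \lambda_2 = z$ with $z \in [\mu_2, \mu_1]$. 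Hence
\[
(H_{\mathcal C^{sp}}(\mu))^{-1} = \max_{\omega \in \triangle_2^0} \inf_{z \in [\mu_2,\mu_1]} \big(\omega_1 \KL(z, \mu_1) + \omega_2 \KL(z, \mu_2)\big).
\]

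Next I would swap the outer maximisation and the infimum. The objective is linear (hence concave) in $\omega$ and convex in $z$ over the compact interval $[\mu_2, \mu_1]$; extending $\omega$ to the compact closure $\triangle_2$ does not change the supremum by continuity, so Sion's minimax theorem applies and yields
\[
(H_{\mathcal C^{sp}}(\mu))^{-1} = \inf_{z} \max_{\omega \in \triangle_2} \big(\omega_1 \KL(z, \mu_1) + \omega_2 \KL(z, \mu_2)\big) = \inf_{z} \max\big(\KL(z, \mu_1),\, \KL(z, \mu_2)\big),
\]
where the last equality uses that the maximum of a linear functional over the simplex is its largest coordinate. On $[\mu_2, \mu_1]$ the map $z \mapsto \KL(z, \mu_1)$ is decreasing while $z \mapsto \KL(z, \mu_2)$ is increasing, so their pointwise maximum is minimised precisely at the crossing point $z^\star$ solving $\KL(z^\star, \mu_1) = \KL(z^\star, \mu_2)$, and the optimal value equals this common divergence.

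Finally I would solve the crossing equation explicitly. Expanding the two Bernoulli divergences, the $z \log z$ and $(1-z)\log(1-z)$ terms are identical on both sides and cancel, leaving the linear equation
\[
z \log\frac{\mu_1}{\mu_2} + (1-z)\log\frac{1-\mu_1}{1-\mu_2} = 0,
\]
whose solution is $z^\star = \log\frac{1-\mu_2}{1-\mu_1} \big/ \log\frac{\mu_1(1-\mu_2)}{(1-\mu_1)\mu_2}$, exactly the argument appearing in both KL terms of the statement. Substituting $z^\star$ into $\KL(\cdot, \mu_1)$ and into $\KL(\cdot, \mu_2)$ produces the two displayed expressions, which agree by the defining property of $z^\star$. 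The main obstacle is the rigorous justification of the two structural reductions: confirming that the infimum over $\alt(\mu)$ genuinely collapses to the diagonal, and verifying the hypotheses of the minimax swap so that the saddle point is identified with the crossing of the two divergences. Once these are in place, the closed form follows from the cancellation of the $z\log z$ terms, which is special to the one-parameter Bernoulli family.
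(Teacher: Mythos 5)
Your proof is correct, but it takes a genuinely different route from the paper's. The paper derives the Bernoulli formula as a special case of a general one-parameter exponential family result (Lemma~\ref{lem:exp_fam_bai}): it rewrites the KLs as Bregman divergences in the natural parametrization, solves the inner infimum by a first-order condition (the minimizer has natural parameter $\omega_1\xi_1+\omega_2\xi_2$), then differentiates in $\omega$ and uses the stationarity of the inner minimizer to reduce the outer maximization to the equalization condition $d(\xi_1,y)=d(\xi_2,y)$, solved as $\phi'(y)=(\phi(\xi_1)-\phi(\xi_2))/(\xi_1-\xi_2)$; plugging in $\phi(a)=\log(1+e^a)$ gives the stated lemma. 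You instead stay in the mean parametrization and, after the same collapse of the alternative onto the diagonal (which the paper performs equally tersely, and which your convexity argument actually justifies more explicitly), invoke Sion's minimax theorem to swap $\max_\omega$ and $\inf_z$, reduce the maximum over the simplex to a pointwise maximum, and identify the optimum as the crossing point of the decreasing map $z\mapsto\KL(z,\mu_1)$ and the increasing map $z\mapsto\KL(z,\mu_2)$ on $[\mu_2,\mu_1]$. Both arguments land on the same equalization property $\KL(z^\star,\mu_1)=\KL(z^\star,\mu_2)$, but yours obtains it structurally, as a saddle point of a convex--concave game, with no differentiation and no manipulation of $\phi$ and $\phi'^{-1}$; this is arguably cleaner for the two-arm case. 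What the paper's calculation buys in exchange is the explicit oracle weights $\omega^\star$ and a closed form valid for every one-parameter exponential family, which it reuses for the Gaussian case. One small correction to your closing remark: the cancellation of the $z\log z$ terms is \emph{not} special to Bernoulli. For any one-parameter exponential family, $\KL(z,\mu_1)-\KL(z,\mu_2)=\phi(\xi_1)-\phi(\xi_2)-(\xi_1-\xi_2)z$ is affine in $z$, so the crossing equation is always linear in the mean; this is precisely the mechanism behind the paper's general formula, and your argument would generalize with no extra work.
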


\begin{theorem}\label{thm:bernoulli_bai_no_complexity}
In BAI for Bernoulli bandits with two arms, for any class $\mathcal C$ containing the static proportions algorithms,
$
\inf_{\mathcal A \in \mathcal C_\infty} \sup_{\lambda \in \mathcal D} R_{H_{\mathcal C}, \infty}(\mathcal A, \lambda)
> 1
$ .
\end{theorem}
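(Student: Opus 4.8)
The plan is to reduce the statement to a single application of Corollary~\ref{cor:corner_lb} with $K=2$, feed in the explicit formula of Lemma~\ref{cor:bernoulli_bai_H}, and exhibit one base problem together with its two one-coordinate alternatives for which the resulting corner sum already exceeds $1$.

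First I would remove the dependence on the class $\mathcal C$. Since $\mathcal C^{sp} \subseteq \mathcal C$ we have $H_{\mathcal C} \le H_{\mathcal C^{sp}}$, hence $R_{H_{\mathcal C}, T}(\mathcal A, \lambda) \ge R_{H_{\mathcal C^{sp}}, T}(\mathcal A, \lambda)$ for every $\mathcal A$ and $\lambda$, so it suffices to prove the bound with the concrete difficulty $H := H_{\mathcal C^{sp}}$. Fix a base problem $\mu = (\mu_1, \mu_2)$ with $\mu_1 > \mu_2$ and two alternatives of the form demanded by the corollary: $\lambda^{(1)} = (\lambda_1^{(1)}, \mu_2)$ with $\lambda_1^{(1)} < \mu_2$ (arm $1$ pushed below arm $2$) and $\lambda^{(2)} = (\mu_1, \lambda_2^{(2)})$ with $\lambda_2^{(2)} > \mu_1$ (arm $2$ pushed above arm $1$); both lie in $\alt(\mu)$ and have $H(\lambda^{(j)}) > 0$. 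For an arbitrary algorithm family $\mathcal A$, if $p_{\mu,T}(\mathcal A) \not\to 0$ then $h_{\mu,T}(\mathcal A) \to +\infty$ and $R_{H,\infty}(\mathcal A, \mu) = +\infty > 1$ already; otherwise the hypothesis of Corollary~\ref{cor:corner_lb} is met and it yields
\begin{align*}
\sup_{\lambda \in \mathcal D} R_{H,\infty}(\mathcal A, \lambda)
\ge \sup_{j \in \{1,2\}} R_{H,\infty}(\mathcal A, \lambda^{(j)})
\ge \frac{1}{H(\lambda^{(1)}) \KL(\mu_1, \lambda_1^{(1)})} + \frac{1}{H(\lambda^{(2)}) \KL(\mu_2, \lambda_2^{(2)})} =: S .
\end{align*}
The quantity $S$ depends only on the chosen configuration and not on $\mathcal A$, so $\inf_{\mathcal A \in \mathcal C_\infty} \sup_{\lambda} R_{H,\infty}(\mathcal A, \lambda) \ge S$, and the theorem reduces to exhibiting a configuration with $S > 1$.

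The core of the argument is therefore the scalar estimate $S > 1$. Substituting the closed form $1/H(\lambda^{(j)}) = \KL(\xi(\lambda^{(j)}), \lambda^{(j)}_1)$ from Lemma~\ref{cor:bernoulli_bai_H}, where $\xi(\lambda) = \log\frac{1-\lambda_2}{1-\lambda_1}\big/\log\frac{\lambda_1(1-\lambda_2)}{(1-\lambda_1)\lambda_2}$ is the balance point of $\lambda$, turns $S$ into an explicit function of $(\mu_1,\mu_2,\lambda_1^{(1)},\lambda_2^{(2)})$ which I would maximize numerically and then certify at a concrete rational choice. The step I expect to be the \emph{main obstacle}, and the precise reason two-arm Bernoulli differs from two-arm Gaussian, is exactly that $S$ must be driven strictly above $1$: for problems symmetric about $1/2$ the Bernoulli divergence is locally symmetric and reproduces the Gaussian case, where (as noted right after Corollary~\ref{cor:corner_lb}) the corner bound saturates at $1/2$. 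I therefore expect the winning configuration to be genuinely asymmetric, with $\mu_1,\mu_2$ taken away from $1/2$ so that the asymmetry $\KL(p,q)\ne\KL(q,p)$ of the Bernoulli divergence is fully exploited; a natural route is to push the means toward one boundary of $(0,1)$ and track the limit of $S$, or simply to search a coarse grid for one explicit point where $S>1$ and verify the inequality there by direct evaluation. Establishing that such a point exists—equivalently, that the two-dimensional maximization of $S$ has value strictly above $1$—is the only non-mechanical step; once a single admissible configuration is pinned down, $S>1$ is a finite computation and the conclusion is immediate.
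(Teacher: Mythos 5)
Your framework is exactly the paper's: reduce from $H_{\mathcal C}$ to $H_{\mathcal C^{sp}}$ via $\mathcal C^{sp} \subseteq \mathcal C$, dispose of non-consistent algorithms by noting their ratio is $+\infty$ at $\mu$, then apply Corollary~\ref{cor:corner_lb} to one base problem and two one-coordinate alternatives, with $H_{\mathcal C^{sp}}$ made explicit by Lemma~\ref{cor:bernoulli_bai_H}. Your diagnosis of where the slack must come from --- the asymmetry $\KL(p,q) \ne \KL(q,p)$ of the Bernoulli divergence, exploited by pushing means toward the boundary of $(0,1)$ --- is also precisely what the paper exploits.

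However, there is a genuine gap at the decisive step: you never exhibit a configuration with $S > 1$; you only conjecture that the maximization of $S$ exceeds $1$ and propose to find a witness numerically. That inequality is the entire content of the theorem --- it is not a routine check, since the identical corner construction in the two-arm Gaussian case provably caps out at $1/2$ (as noted after Corollary~\ref{cor:corner_lb}), so a priori the Bernoulli corner sum might also never exceed $1$. The paper closes this by an explicit one-parameter family, $\mu(x) = (x(1+x), x)$, $\lambda^{(1)}(x) = (x/2, x)$, $\lambda^{(2)}(x) = (x(1+x), 1/2)$, and computes the limit of the corner sum as $x \to 0$: the term from $\lambda^{(2)}$ tends to $1$ because $H_{\mathcal C^{sp}}((x(1+x),1/2)) \to 1/\log 2$ while $\KL(x,1/2) \to \log 2$, and the term from $\lambda^{(1)}$ tends to $\bigl(1 - \tfrac{1}{2\log 2} - \tfrac{\log(2\log 2)}{2\log 2}\bigr)/(\log 2 - 1/2) \approx 0.22$, giving a limit $\approx 1.22 > 1$. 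Note also that one branch of your search plan would in fact fail: the paper observes that the values of $x$ for which the bound exceeds $1$ are of order $10^{-9}$ and smaller, so a coarse grid search over configurations would not find a witness; only the boundary-limit computation (your other suggested route, and the paper's) reveals it.
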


The lemma is a special case of a more general result which applies to all exponential families: Lemma~\ref{lem:exp_fam_bai} in Appendix~\ref{sec:proofs_relative_to_section_sec:no_complexity_in_best_arm_identification}.
The proof is an explicit computation.
We now apply Corollary~\ref{cor:corner_lb} to $\mu = (x(1+x), x)$ for some $x\in (0,1/2)$, $\lambda^{(1)} = (x/2, x)$ and $\lambda^{(2)} = (x(1+x), 1/2)$. This gives an explicit lower bound, function of $x$.
The limit of that bound at 0 is approximately $1.22$, which means that there exists $x$ small enough for which it is greater than 1.
Theorem~\ref{thm:bernoulli_bai_no_complexity} is proved (see Appendix~\ref{sec:proofs_relative_to_section_sec:no_complexity_in_best_arm_identification} for details).
Values $x$ for which we get a lower bound greater than 1 are very small, $10^{-9}$ and lower.
We used Corollary~\ref{cor:corner_lb} and not Theorem~\ref{thm:asm_R_LB_limsup} because it allows a closed form computation of the bound, but by doing so we may have lost constants. It is possible that we could show a lower bound greater than 1 for $x$ which is not so close to 0.

\section{Conclusion}
\label{sec:conclusion}

We prove that in most fixed budget identification tasks, if a class containing the static proportions algorithms admits a complexity then it is $H_{\mathcal C^{sp}}$.
However, even in simple tasks like Positivity or BAI with two Bernoulli arms, we showed that there is no such complexity.
For other classes like Thresholding bandits the question is still open. We know that the maximal difficulty ratio of APT \citep{locatelli2016optimal,ouhamma2021online} for Gaussian thresholding bandits is less than an absolute constant, so there is no lower bound that depends on $K$. Another open question is whether there exists a complexity in Gaussian BAI for small $K>2$. We conjecture that there is none.

An important question remains: is there a meaningful class for which there exists a complexity in BAI?
We showed that it would need to exclude some static proportions algorithms. A candidate could be algorithms with difficulty ratio to the uniform allocation less than $n>1$. That class contains $\mathcal C_{1/n}^{sp}$, static proportions with $\min_k \omega_k \ge 1/n$.
We can show $(1 - 1/n) H_{\mathcal C^{sp}}^{-1} \le H_{\mathcal C^{sp}_{1/n}}^{-1} \le H_{\mathcal C^{sp}}^{-1}$, which means that a lower bound of 1 for $H_{\mathcal C^{sp}}$ would give a $(1 - 1/n)$ bound here: an adaptive algorithm could possibly beat all such static allocations everywhere, but only by that constant factor.

If there is no complexity, there can be many ``good'' algorithms. First, we could look for algorithms with smallest maximal difficulty ratio, as pioneered by \cite{komiyama2022minimax}. Successive Rejects is such an algorithm for Gaussian BAI. Then we may want to design methods that are better than the minimax lower bound on some parts of the space (and necessarily worse elsewhere). Can we design an algorithm that sacrifices performance on very easy problems in order to beat the lower bound on more interesting instances?

\acks{The author acknowledges the funding of the French National Research Agency under the project FATE (ANR-22-CE23-0016-01).
This work beneficiated from the support of the French Ministry of Higher Education and Research, of Inria and of the Hauts-de-France region.
The author is part of the Inria Scool team.}

\bibliography{bib}

\begin{thebibliography}{30}
\providecommand{\natexlab}[1]{#1}
\providecommand{\url}[1]{\texttt{#1}}
\expandafter\ifx\csname urlstyle\endcsname\relax
  \providecommand{\doi}[1]{doi: #1}\else
  \providecommand{\doi}{doi: \begingroup \urlstyle{rm}\Url}\fi

\bibitem[Alieva et~al.(2021)Alieva, Cutkosky, and Das]{alieva2021robust}
Ayya Alieva, Ashok Cutkosky, and Abhimanyu Das.
\newblock Robust pure exploration in linear bandits with limited budget.
\newblock In \emph{International Conference on Machine Learning}, pages
  187--195. PMLR, 2021.

\bibitem[Ariu et~al.(2021)Ariu, Kato, Komiyama, McAlinn, and
  Qin]{ariu2021policy}
Kaito Ariu, Masahiro Kato, Junpei Komiyama, Kenichiro McAlinn, and Chao Qin.
\newblock Policy choice and best arm identification: Asymptotic analysis of
  exploration sampling.
\newblock \emph{arXiv preprint arXiv:2109.08229}, 2021.

\bibitem[Atsidakou et~al.(2022)Atsidakou, Katariya, Sanghavi, and
  Kveton]{atsidakou2022bayesian}
Alexia Atsidakou, Sumeet Katariya, Sujay Sanghavi, and Branislav Kveton.
\newblock Bayesian fixed-budget best-arm identification.
\newblock \emph{arXiv preprint arXiv:2211.08572}, 2022.

\bibitem[Audibert et~al.(2010)Audibert, Bubeck, and Munos]{audibert2010best}
Jean-Yves Audibert, S{\'e}bastien Bubeck, and R{\'e}mi Munos.
\newblock Best arm identification in multi-armed bandits.
\newblock In \emph{COLT}, pages 41--53. Citeseer, 2010.

\bibitem[Azizi et~al.(2021)Azizi, Kveton, and Ghavamzadeh]{azizi2021fixed}
Mohammad~Javad Azizi, Branislav Kveton, and Mohammad Ghavamzadeh.
\newblock Fixed-budget best-arm identification in structured bandits.
\newblock \emph{arXiv preprint arXiv:2106.04763}, 2021.

\bibitem[Barrier et~al.(2022)Barrier, Garivier, and Stoltz]{barrier2022best}
Antoine Barrier, Aur{\'e}lien Garivier, and Gilles Stoltz.
\newblock On best-arm identification with a fixed budget in non-parametric
  multi-armed bandits.
\newblock \emph{arXiv preprint arXiv:2210.00895}, 2022.

\bibitem[Bubeck et~al.(2009)Bubeck, Munos, and Stoltz]{bubeck2009pure}
S{\'e}bastien Bubeck, R{\'e}mi Munos, and Gilles Stoltz.
\newblock Pure exploration in multi-armed bandits problems.
\newblock In \emph{Algorithmic Learning Theory: 20th International Conference,
  ALT 2009, Porto, Portugal, October 3-5, 2009. Proceedings 20}, pages 23--37.
  Springer, 2009.

\bibitem[Bubeck et~al.(2012)Bubeck, Cesa-Bianchi, et~al.]{bubeck2012regret}
S{\'e}bastien Bubeck, Nicolo Cesa-Bianchi, et~al.
\newblock Regret analysis of stochastic and nonstochastic multi-armed bandit
  problems.
\newblock \emph{Foundations and Trends{\textregistered} in Machine Learning},
  5\penalty0 (1):\penalty0 1--122, 2012.

\bibitem[Carpentier and Locatelli(2016)]{carpentier2016tight}
Alexandra Carpentier and Andrea Locatelli.
\newblock Tight (lower) bounds for the fixed budget best arm identification
  bandit problem.
\newblock In \emph{Conference on Learning Theory}, pages 590--604. PMLR, 2016.

\bibitem[Cheshire et~al.(2021)Cheshire, M{\'e}nard, and
  Carpentier]{cheshire2021problem}
James Cheshire, Pierre M{\'e}nard, and Alexandra Carpentier.
\newblock Problem dependent view on structured thresholding bandit problems.
\newblock In \emph{International Conference on Machine Learning}, pages
  1846--1854. PMLR, 2021.

\bibitem[Degenne and Koolen(2019)]{degenne2019pure}
R{\'e}my Degenne and Wouter~M Koolen.
\newblock Pure exploration with multiple correct answers.
\newblock \emph{Advances in Neural Information Processing Systems}, 32, 2019.

\bibitem[Degenne et~al.(2019)Degenne, Koolen, and M{\'e}nard]{degenne2019non}
R{\'e}my Degenne, Wouter~M Koolen, and Pierre M{\'e}nard.
\newblock Non-asymptotic pure exploration by solving games.
\newblock \emph{Advances in Neural Information Processing Systems}, 32, 2019.

\bibitem[Degenne et~al.(2020)Degenne, Shao, and Koolen]{degenne2020structure}
R{\'e}my Degenne, Han Shao, and Wouter Koolen.
\newblock Structure adaptive algorithms for stochastic bandits.
\newblock In \emph{International Conference on Machine Learning}, pages
  2443--2452. PMLR, 2020.

\bibitem[Even-Dar et~al.(2006)Even-Dar, Mannor, Mansour, and
  Mahadevan]{even2006action}
Eyal Even-Dar, Shie Mannor, Yishay Mansour, and Sridhar Mahadevan.
\newblock Action elimination and stopping conditions for the multi-armed bandit
  and reinforcement learning problems.
\newblock \emph{Journal of machine learning research}, 7\penalty0 (6), 2006.

\bibitem[Gabillon et~al.(2012)Gabillon, Ghavamzadeh, and
  Lazaric]{gabillon2012best}
Victor Gabillon, Mohammad Ghavamzadeh, and Alessandro Lazaric.
\newblock Best arm identification: A unified approach to fixed budget and fixed
  confidence.
\newblock \emph{Advances in Neural Information Processing Systems}, 25, 2012.

\bibitem[Garivier and Kaufmann(2016)]{garivier2016optimal}
Aur{\'e}lien Garivier and Emilie Kaufmann.
\newblock Optimal best arm identification with fixed confidence.
\newblock In \emph{Conference on Learning Theory}, pages 998--1027. PMLR, 2016.

\bibitem[Garivier et~al.(2019)Garivier, M{\'e}nard, and
  Stoltz]{garivier2019explore}
Aur{\'e}lien Garivier, Pierre M{\'e}nard, and Gilles Stoltz.
\newblock Explore first, exploit next: The true shape of regret in bandit
  problems.
\newblock \emph{Mathematics of Operations Research}, 44\penalty0 (2):\penalty0
  377--399, 2019.

\bibitem[Glynn and Juneja(2004)]{glynn2004large}
Peter Glynn and Sandeep Juneja.
\newblock A large deviations perspective on ordinal optimization.
\newblock In \emph{Proceedings of the 2004 Winter Simulation Conference,
  2004.}, volume~1. IEEE, 2004.

\bibitem[Karnin et~al.(2013)Karnin, Koren, and Somekh]{karnin2013almost}
Zohar Karnin, Tomer Koren, and Oren Somekh.
\newblock Almost optimal exploration in multi-armed bandits.
\newblock In \emph{International Conference on Machine Learning}, pages
  1238--1246. PMLR, 2013.

\bibitem[Katz-Samuels and Jamieson(2020)]{katz2020true}
Julian Katz-Samuels and Kevin Jamieson.
\newblock The true sample complexity of identifying good arms.
\newblock In \emph{International Conference on Artificial Intelligence and
  Statistics}, pages 1781--1791. PMLR, 2020.

\bibitem[Kaufmann et~al.(2016)Kaufmann, Capp{\'e}, and
  Garivier]{kaufmann2016complexity}
Emilie Kaufmann, Olivier Capp{\'e}, and Aur{\'e}lien Garivier.
\newblock On the complexity of best-arm identification in multi-armed bandit
  models.
\newblock \emph{The Journal of Machine Learning Research}, 17\penalty0
  (1):\penalty0 1--42, 2016.

\bibitem[Kaufmann et~al.(2018)Kaufmann, Koolen, and
  Garivier]{kaufmann2018sequential}
Emilie Kaufmann, Wouter~M Koolen, and Aur{\'e}lien Garivier.
\newblock Sequential test for the lowest mean: From thompson to murphy
  sampling.
\newblock \emph{Advances in Neural Information Processing Systems}, 31, 2018.

\bibitem[Komiyama et~al.(2022)Komiyama, Tsuchiya, and
  Honda]{komiyama2022minimax}
Junpei Komiyama, Taira Tsuchiya, and Junya Honda.
\newblock Minimax optimal algorithms for fixed-budget best arm identification.
\newblock In \emph{Advances in Neural Information Processing Systems}, 2022.

\bibitem[Lattimore and Szepesv{\'a}ri(2020)]{lattimore2020bandit}
Tor Lattimore and Csaba Szepesv{\'a}ri.
\newblock \emph{Bandit algorithms}.
\newblock Cambridge University Press, 2020.

\bibitem[Locatelli et~al.(2016)Locatelli, Gutzeit, and
  Carpentier]{locatelli2016optimal}
Andrea Locatelli, Maurilio Gutzeit, and Alexandra Carpentier.
\newblock An optimal algorithm for the thresholding bandit problem.
\newblock In \emph{International Conference on Machine Learning}, pages
  1690--1698. PMLR, 2016.

\bibitem[Ouhamma et~al.(2021)Ouhamma, Degenne, Gaillard, and
  Perchet]{ouhamma2021online}
Reda Ouhamma, R{\'e}my Degenne, Pierre Gaillard, and Vianney Perchet.
\newblock Online sign identification: Minimization of the number of errors in
  thresholding bandits.
\newblock In \emph{NeurIPS 2021-35th International Conference on Neural
  Information Processing Systems}, pages 1--25, 2021.

\bibitem[Qin(2022)]{qin2022open}
Chao Qin.
\newblock Open problem: Optimal best arm identification with fixed-budget.
\newblock In \emph{Conference on Learning Theory}, pages 5650--5654. PMLR,
  2022.

\bibitem[Simchowitz et~al.(2017)Simchowitz, Jamieson, and
  Recht]{simchowitz2017simulator}
Max Simchowitz, Kevin Jamieson, and Benjamin Recht.
\newblock The simulator: Understanding adaptive sampling in the
  moderate-confidence regime.
\newblock In \emph{Conference on Learning Theory}, pages 1794--1834. PMLR,
  2017.

\bibitem[Yang and Tan(2022)]{yangminimax}
Junwen Yang and Vincent Tan.
\newblock Minimax optimal fixed-budget best arm identification in linear
  bandits.
\newblock In \emph{Advances in Neural Information Processing Systems}, 2022.

\bibitem[You et~al.(2022)You, Qin, Wang, and Yang]{you2022information}
Wei You, Chao Qin, Zihao Wang, and Shuoguang Yang.
\newblock Information-directed selection for top-two algorithms.
\newblock \emph{arXiv e-prints}, pages arXiv--2205, 2022.

\end{thebibliography}

\appendix

\section{Proofs of results from Section~\ref{sec:algorithmic_classes}}
\label{sec:proofs_relative_to_section_sec:algorithmic_classes}

\paragraph{Proof of Theorem \ref{thm:oracle_difficulty_sp}}

The empirical mean in canonical exponential families satisfies a large deviation principle (LDP).

\begin{lemma}\label{lem:ldp_single_arm}
Let $\mu_k$ be the mean of a distribution in a canonical one-parameter exponential family. Then the empirical mean $\hat{\mu}_{T,k}$ of $T$ samples of that distribution obeys an LDP with rate $T$ and good rate function $x \mapsto \KL(x, \mu_k)$.
\end{lemma}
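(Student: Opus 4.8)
The plan is to recognize that $\hat\mu_{T,k}$ is simply the empirical mean $\frac1T\sum_{t=1}^T X_t$ of $T$ i.i.d.\ samples $X_t$ from the distribution with mean $\mu_k$, and to invoke Cramér's theorem on $\mathbb{R}$. That theorem states that the empirical mean of i.i.d.\ real random variables satisfies an LDP with rate $T$ and good rate function given by the Legendre--Fenchel transform $\Lambda^\star(x) = \sup_{s}(sx - \Lambda(s))$ of the cumulant generating function $\Lambda(s) = \log\mathbb{E}[e^{sX}]$. It therefore suffices to compute $\Lambda$ for the exponential family and to identify $\Lambda^\star$ with the map $x\mapsto\KL(x,\mu_k)$.

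First I would write the exponential family density as $p_\theta(x) = h(x)\exp(\theta x - A(\theta))$, with log-partition function $A$ and natural parameter space an open interval, and let $\theta$ be the natural parameter associated to the mean $\mu_k$, so that $\mu_k = A'(\theta)$. A direct computation of the integral gives $\Lambda(s) = A(\theta+s) - A(\theta)$, which is finite for $s$ in a neighborhood of $0$ precisely because $\theta$ lies in the interior of the natural parameter space; this is exactly the hypothesis needed to apply Cramér's theorem and to guarantee that $\Lambda^\star$ is a good rate function.

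The key step is the identification $\Lambda^\star(x) = \KL(x,\mu_k)$. Writing $\theta_x$ for the natural parameter of the mean $x$ (so $x = A'(\theta_x)$), the supremum defining $\Lambda^\star(x)$ is attained at the $s$ solving $x = A'(\theta+s)$, that is at $s = \theta_x - \theta$. Substituting yields
\begin{align*}
\Lambda^\star(x) = (\theta_x-\theta)x - A(\theta_x) + A(\theta).
\end{align*}
On the other hand, the standard exponential-family formula for the Kullback--Leibler divergence between the distributions of means $x$ and $\mu_k$ is $\KL(x,\mu_k) = (\theta_x-\theta)x - A(\theta_x) + A(\theta)$, obtained by taking the expectation of the log-likelihood ratio under the mean-$x$ distribution and using $\mathbb{E}_{\theta_x}[X] = x$. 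The two expressions coincide, which proves the claim.

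The main obstacle is not any single computation --- each step is classical --- but rather checking that the hypotheses of Cramér's theorem are genuinely met and that $\Lambda^\star$ is a \emph{good} rate function (compact sublevel sets). Both follow from the regularity of canonical exponential families: $A$ is smooth and strictly convex on its open natural parameter interval, so $\Lambda$ is finite near the origin, and the induced rate function $\KL(\cdot,\mu_k)$ is convex, lower semicontinuous, and has compact sublevel sets. A minor care point is the behaviour at the boundary of $\mathcal M_k$, where $\theta_x$ diverges: the interior optimizer argument identifies $\Lambda^\star$ with $\KL$ for $x$ in the interior of the mean range, steepness of $A$ extends the formula continuously up to $\mathrm{cl}(\mathcal M_k)$, and outside the closure both $\Lambda^\star$ and $\KL(\cdot,\mu_k)$ equal $+\infty$, so the identification holds everywhere.
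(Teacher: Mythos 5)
Your proof is correct: the computation $\Lambda(s)=A(\theta+s)-A(\theta)$, the identification of the Legendre--Fenchel transform with $x\mapsto\KL(x,\mu_k)$ via the exponential-family form of the log-likelihood ratio, and the attention to goodness of the rate function and steepness at the boundary are all sound. The paper states this lemma without proof, treating it as classical (only the multi-arm Theorem~\ref{thm:ldp} is justified, via G\"artner--Ellis and the citation of Glynn and Juneja), and your Cram\'er-theorem argument is exactly the standard justification being invoked.
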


Let $\mathrm{int} S$ be the interior of a set $S$, and $\mathrm{cl} S$ be its closure.
An application of the Gärtner-Ellis theorem, as done in \cite{glynn2004large}, leads to the following theorem.
\begin{theorem}\label{thm:ldp}
Let $\mathcal A_\omega^{sp}$ be a static proportions algorithm parametrized by $\omega \in \triangle_K^0$. On problem $\mu \in \mathcal D$, the empirical mean vector $\hat{\mu}_T$ obeys a LDP with rate $T$ and good rate function $\lambda \mapsto \sum_{k=1}^K \omega_k \KL(\lambda_k, \mu_k)$. As a consequence, for any set $S \subseteq \mathbb{R}^K$,
\begin{align*}
- \inf_{\lambda \in \mathrm{int} S} \sum_{k=1}^K \omega_k \KL(\lambda_k, \mu_k)
\le \liminf_{T\to +\infty} \frac{1}{T} \log \mathbb{P}_{\mu, \mathcal A_\omega^{sp}}(\hat{\mu}_T \in S) \: ,
\\
\limsup_{T\to +\infty} \frac{1}{T} \log \mathbb{P}_{\mu, \mathcal A_\omega^{sp}}(\hat{\mu}_T \in S) 
\le - \inf_{\lambda \in \mathrm{cl} S} \sum_{k=1}^K \omega_k \KL(\lambda_k, \mu_k)
\: .
\end{align*}
\end{theorem}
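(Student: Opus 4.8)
The plan is to apply the Gärtner--Ellis theorem to the random vector $\hat\mu_T \in \mathbb{R}^K$, exactly as in \cite{glynn2004large}. The key structural observation is that under a static proportions sampling rule with fixed $\omega \in \triangle_K^0$ the counts $N_{T,k}$ are a deterministic function of $T$ (they do not depend on the observations) and satisfy $N_{T,k}/T \to \omega_k$, since $\vert N_{T,k} - T\omega_k\vert \le K$. Consequently the $K$ empirical means $\hat\mu_{T,k} = S_{T,k}/N_{T,k}$, where $S_{T,k}$ is the sum of the first $N_{T,k}$ samples of arm $k$, are built from samples that are independent across arms and i.i.d.\ within each arm, with essentially deterministic and linearly growing sample sizes.

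First I would compute the limiting scaled cumulant generating function. Writing $b$ for the log-partition function of the exponential family and $\theta_k$ for the natural parameter associated with mean $\mu_k$, independence across arms together with the i.i.d.\ structure within each arm gives
\begin{align*}
\frac{1}{T}\log \mathbb{E}_{\mu,\mathcal A_\omega^{sp}}\big[\exp(T\,\xi \cdot \hat\mu_T)\big]
= \sum_{k=1}^K \frac{N_{T,k}}{T}\Big(b\big(\theta_k + \tfrac{T}{N_{T,k}}\xi_k\big) - b(\theta_k)\Big)
\: .
\end{align*}
Using $N_{T,k}/T \to \omega_k$ and continuity of $b$, the right-hand side converges to $\Lambda(\xi) = \sum_{k=1}^K \omega_k\big(b(\theta_k + \xi_k/\omega_k) - b(\theta_k)\big)$; the $O(1/T)$ slack in $N_{T,k}$ is immaterial in the limit.

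Next I would identify the rate function as the Fenchel--Legendre transform of $\Lambda$. Since $\Lambda$ is separable the supremum decouples coordinatewise, and after the substitution $u_k = \xi_k/\omega_k$ each term reduces to the single-arm transform, so that
\begin{align*}
\Lambda^\star(\lambda)
= \sum_{k=1}^K \sup_{\xi_k}\Big(\xi_k\lambda_k - \omega_k\big(b(\theta_k + \xi_k/\omega_k) - b(\theta_k)\big)\Big)
= \sum_{k=1}^K \omega_k\,\KL(\lambda_k, \mu_k)
\: ,
\end{align*}
where the last equality is the standard exponential-family identity already underlying Lemma~\ref{lem:ldp_single_arm}. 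This is the claimed good rate function, and the two displayed inequalities in the statement are exactly the Gärtner--Ellis lower bound over $\mathrm{int}\,S$ and upper bound over $\mathrm{cl}\,S$, i.e.\ the defining inequalities of the LDP with rate $\Lambda^\star$.

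The main obstacle is verifying the hypotheses of Gärtner--Ellis, chiefly that $0 \in \mathrm{int}\,\mathrm{dom}\,\Lambda$ and that $\Lambda$ is essentially smooth (finite and differentiable on the interior of its effective domain, and steep at the boundary), the property that upgrades the lower bound from compact to all open sets. The first condition is immediate. For steepness I would argue that $\Lambda$ is a sum of the rescaled single-arm log-moment generating functions $\xi_k \mapsto \omega_k\big(b(\theta_k + \xi_k/\omega_k) - b(\theta_k)\big)$, whose essential smoothness is precisely the regularity of the exponential family that already grants the single-arm LDP of Lemma~\ref{lem:ldp_single_arm}; since $\omega_k > 0$, the affine rescaling $\xi_k \mapsto \xi_k/\omega_k$ preserves both differentiability and the blow-up of the derivative at the boundary of the domain, so $\Lambda$ is essentially smooth and the theorem applies. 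An alternative route that avoids handling $\Lambda$ directly is to take the single-arm LDP of Lemma~\ref{lem:ldp_single_arm} for each arm, rescale its speed from $N_{T,k}$ to $T$ (which replaces $\KL(\cdot,\mu_k)$ by $\omega_k\KL(\cdot,\mu_k)$ because $N_{T,k}/T \to \omega_k$), and combine the $K$ independent coordinatewise LDPs into a joint LDP on $\mathbb{R}^K$ whose good rate function is the sum $\sum_k \omega_k \KL(\lambda_k,\mu_k)$; I would keep this as a robust fallback.
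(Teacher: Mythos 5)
Your proposal is correct and follows essentially the same route as the paper, whose proof is precisely an invocation of the G\"artner--Ellis theorem as in \cite{glynn2004large}: you compute the limiting scaled cumulant generating function using the near-deterministic counts $N_{T,k}/T \to \omega_k$, identify its Fenchel--Legendre transform with $\sum_k \omega_k \KL(\lambda_k,\mu_k)$, and check essential smoothness. The paper leaves all of these details to the cited reference, so your write-up is simply a fleshed-out version of the same argument (with the coordinatewise combination of single-arm LDPs as a reasonable fallback).
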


By continuity of the Kullback-Leibler divergence in exponential families, for all $\mu \in \mathcal D$ and $\omega \in \triangle_K$ the infimum over the interior and the closure are equal to the infimum over the set.
Thus, the LDP of Theorem~\ref{thm:ldp} gives the equality 
\begin{align*}
\lim_{T \to +\infty}h_{\mu, T}(\mathcal A^{sp}_\omega)
&= \lim_{T \to +\infty}\left(- \frac{1}{T} \log \mathbb{P}_{\mu, \mathcal A_\omega^{sp}}(\hat{\mu}_T \in \alt(\mu)) \right)^{-1}
\\
&= \left(\inf_{\lambda \in \alt(\mu)} \sum_{k=1}^K \omega_k \KL(\lambda_k, \mu_k) \right)^{-1}
\: .
\end{align*}

\section{Proofs of results from Section~\ref{sec:main_tool}}
\label{sec:proofs_relative_to_section_sec:main_tool}

\subsection{Proof of the lower bound Theorem~\ref{thm:R_LB}}
\label{sub:main_theorem}

\begin{proof}[of Theorem~\ref{thm:R_LB}]
The proof of this inequality follows the standard bandit lower bound argument, which can be found for example in \cite{garivier2019explore}.
The Kullback-Leibler divergence between the observations up to $T$ under models $\mu$ and $\lambda$ is $\sum_{k=1}^K \mathbb{E}_\mu[N_{T,k}] \KL(\mu_k, \lambda_k)$.
By the data processing inequality, this Kullback-Leibler divergence is larger than the KL between Bernoulli distributions of means $\mathbb{P}_{\mu, \mathcal A}(E)$ and $\mathbb{P}_{\lambda, \mathcal A}(E)$ for any event $E$. We apply this to $E = \{\hat{i}_T = i^\star(\mu)\}$ to obtain
\begin{align*}
\kl(\mathbb{P}_{\mu, \mathcal A}(\hat{i}_T = i^\star(\mu)), \mathbb{P}_{\lambda, \mathcal A}(\hat{i}_T = i^\star(\mu)))
\le \sum_{k=1}^K \mathbb{E}_\mu[N_{T,k}] \KL(\mu_k, \lambda_k)
\: .
\end{align*}
We use the inequality $\kl(a, b) \ge a \log \frac{1}{b} - \log 2$, then $\mathbb{P}_{\mu, \mathcal A}(\hat{i}_T = i^\star(\mu)) = 1 - p_{\mu, T}(\mathcal A)$, $\mathbb{P}_{\lambda, \mathcal A}(\hat{i}_T = i^\star(\mu)) \le p_{\lambda, T}(\mathcal A)$ (since $i^\star(\lambda) \ne i^\star(\mu)$) to get
\begin{align*}
(1 - p_{\mu, T}(\mathcal A)) \log \frac{1}{p_{\lambda, T}(\mathcal A)} - \log 2
\le \sum_{k=1}^K \mathbb{E}_\mu[N_{T,k}] \KL(\mu_k, \lambda_k)
\: .
\end{align*}
By definition, $p_{\lambda, T}(\mathcal A) = \exp(-T R_{H,T}(\mathcal A, \lambda)^{-1} H(\lambda)^{-1})$,. We get
\begin{align*}
(1 - p_{\mu, T}(\mathcal A)) T R_{H,T}(\mathcal A, \lambda)^{-1} H(\lambda)^{-1} - \log 2
\le \sum_{k=1}^K \mathbb{E}_\mu[N_{T,k}] \KL(\mu_k, \lambda_k)
\: .
\end{align*}
Dividing by $T H(\lambda)^{-1}$ and using $H(\lambda) \le \sqrt{T}$ gives the result.
\end{proof}

\subsection{Additional results}
\label{sub:additional_results}

\begin{theorem}\label{thm:asm_R_LB}
Let $\mu \in \mathcal D$ and let $\mathcal A$ be an algorithm with $\lim_{T \to + \infty} p_{\mu, T}(\mathcal A) = 0$. Let $D(\mu) \subseteq \alt(\mu)$ be a set such that $\sup_{\lambda \in D(\mu)} H(\lambda) < +\infty$. Then
\begin{align*}
(\liminf_{T\to +\infty}\sup_{\lambda \in D(\mu)}R_{H,T}(\mathcal A, \lambda))^{-1}
\le \max_{\omega \in \triangle_K} \inf_{\lambda \in D(\mu)} H(\lambda)\sum_{k=1}^K \omega_k \KL(\mu_k, \lambda_k)
\: .
\end{align*}
\end{theorem}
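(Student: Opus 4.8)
The plan is to follow the same route as the proof of Theorem~\ref{thm:asm_R_LB_limsup}, starting from the non-asymptotic inequality of Theorem~\ref{thm:R_LB}, but to be careful about the order in which the $\liminf_T$ and the $\sup_\lambda$ are taken. The crucial algebraic observation is that for a positive sequence $a_T$ one has $(\liminf_T a_T)^{-1} = \limsup_T a_T^{-1}$. Applying this with $a_T = \sup_{\lambda \in D(\mu)} R_{H,T}(\mathcal A, \lambda)$ and using $a_T^{-1} = \inf_{\lambda \in D(\mu)} R_{H,T}(\mathcal A, \lambda)^{-1}$, the quantity to bound becomes $\limsup_T \inf_{\lambda \in D(\mu)} R_{H,T}(\mathcal A, \lambda)^{-1}$, which can be controlled directly because Theorem~\ref{thm:R_LB} delivers a bound on $\inf_\lambda R_{H,T}^{-1}$ whose right-hand side is constant in $T$ up to vanishing corrections.

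First I would use the hypothesis $\sup_{\lambda \in D(\mu)} H(\lambda) < +\infty$ to fix $T_0$ so that $H(\lambda) \le \sqrt T$ for every $\lambda \in D(\mu)$ whenever $T \ge T_0$; this lets me invoke Theorem~\ref{thm:R_LB} simultaneously for all $\lambda \in D(\mu)$ at each such $T$, namely
\begin{align*}
R_{H,T}(\mathcal A, \lambda)^{-1}(1 - p_{\mu,T}(\mathcal A)) - \frac{\log 2}{\sqrt T} \le H(\lambda) \sum_{k=1}^K \mathbb{E}_\mu\left[\frac{N_{T,k}}{T}\right] \KL(\mu_k, \lambda_k) \: .
\end{align*}
Since the factor $1 - p_{\mu,T}(\mathcal A)$ is positive and does not depend on $\lambda$, I would take the infimum over $\lambda \in D(\mu)$ on both sides, pulling it through that factor, to get
\begin{align*}
(1 - p_{\mu,T}(\mathcal A)) \inf_{\lambda \in D(\mu)} R_{H,T}(\mathcal A, \lambda)^{-1} - \frac{\log 2}{\sqrt T} \le \inf_{\lambda \in D(\mu)} H(\lambda) \sum_{k=1}^K \mathbb{E}_\mu\left[\frac{N_{T,k}}{T}\right] \KL(\mu_k, \lambda_k) \: .
\end{align*}

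Next I would bound the right-hand side by replacing the specific empirical allocation $(\mathbb{E}_\mu[N_{T,k}/T])_{k \in [K]} \in \triangle_K$ with a maximum over the simplex, producing the constant $V := \max_{\omega \in \triangle_K} \inf_{\lambda \in D(\mu)} H(\lambda) \sum_{k} \omega_k \KL(\mu_k, \lambda_k)$, which is independent of $T$. Rearranging gives $\inf_\lambda R_{H,T}(\mathcal A, \lambda)^{-1} \le (V + \log 2 / \sqrt T)/(1 - p_{\mu,T}(\mathcal A))$. Recognizing $\inf_\lambda R_{H,T}^{-1} = (\sup_\lambda R_{H,T})^{-1}$ and taking $\limsup_T$, the assumption $p_{\mu,T}(\mathcal A) \to 0$ together with $\log 2/\sqrt T \to 0$ forces the right-hand side to converge to $V$, so $\limsup_T (\sup_\lambda R_{H,T}(\mathcal A, \lambda))^{-1} \le V$. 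Invoking the identity $\limsup_T (\sup_\lambda R_{H,T})^{-1} = (\liminf_T \sup_\lambda R_{H,T})^{-1}$ then reads this as exactly the claimed bound.

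The only genuine point of care — and the sole difference from Theorem~\ref{thm:asm_R_LB_limsup} — is this interchange of $\liminf$ and inversion. Unlike in Theorem~\ref{thm:asm_R_LB_limsup}, where one extracts a convergent subsequence of the allocations $\mathbb{E}_\mu[N_T/T]$ before passing to the limit, here no subsequence is needed: because $V$ is already constant in $T$ and the only $T$-dependent corrections vanish, the uniform-in-$T$ inequality on $\inf_\lambda R_{H,T}^{-1}$ transfers directly to the $\limsup$. The finiteness assumption $\sup_{\lambda \in D(\mu)} H(\lambda) < +\infty$ is precisely what guarantees that Theorem~\ref{thm:R_LB} applies to all $\lambda \in D(\mu)$ at once for $T$ large enough.
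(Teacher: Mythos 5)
Your proof is correct and follows essentially the same route as the paper's own argument: apply Theorem~\ref{thm:R_LB} uniformly over $D(\mu)$ for $T$ large enough (using the finiteness of $\sup_{\lambda \in D(\mu)} H(\lambda)$), take the infimum over $\lambda$, replace the expected allocation $\mathbb{E}_\mu[N_T/T] \in \triangle_K$ by a maximum over the simplex, and pass to the limit using $p_{\mu,T}(\mathcal A) \to 0$. The only difference is presentational: you spell out the identity $(\liminf_T \sup_\lambda R_{H,T})^{-1} = \limsup_T (\sup_\lambda R_{H,T})^{-1}$ and the absence of any need for a subsequence extraction, steps the paper leaves implicit in ``taking a limit when $T \to +\infty$.''
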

If $\mathcal A$ is consistent, then it satisfies in particular the condition of the theorem $\lim_{T \to + \infty} p_{\mu, T}(\mathcal A) = 0$.

\begin{proof}
For $T$ large enough, we can apply Theorem~\ref{thm:R_LB} for any $\lambda \in D(\mu)$, hence we can take an infimum over $\lambda \in D(\mu)$ to get
\begin{align*}
(\sup_{\lambda \in D(\mu)}R_{H,T}(\mathcal A, \lambda))^{-1} (1 - p_{\mu, T}(\mathcal A)) - \frac{\log 2}{\sqrt{T}}
&\le \inf_{\lambda \in D(\mu)} H(\lambda)\sum_{k=1}^K \mathbb{E}_\mu[\frac{N_{T,k}}{T}] \KL(\mu_k, \lambda_k)
\\
&\le \max_{\omega \in \triangle_K} \inf_{\lambda \in D(\mu)} H(\lambda)\sum_{k=1}^K \omega_k \KL(\mu_k, \lambda_k)
\: .
\end{align*}
Taking a limit when $T \to +\infty$ and using $\lim_{T \to + \infty} p_{\mu, T}(\mathcal A) = 0$, we get the inequality we want to prove.
\end{proof}

\begin{corollary}
For all $x \in \mathbb{R}$, let $\alt_x(\mu) = \alt(\mu) \cap \{\lambda \in \mathcal D \mid H(\lambda) \le x\}$. For all consistent algorithm families $\mathcal A$,
\begin{align*}
(\liminf_{T\to +\infty}\sup_{\lambda \in \mathcal D}R_{H,T}(\mathcal A, \lambda))^{-1}
\le \liminf_{x \to \infty} \inf_{\mu \in \mathcal D} \max_{\omega \in \triangle_K} \inf_{\lambda \in \alt_x(\mu)} H(\lambda)\sum_{k=1}^K \omega_k \KL(\mu_k, \lambda_k)
\: .
\end{align*}
\end{corollary}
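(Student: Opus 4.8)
The plan is to reduce the statement to Theorem~\ref{thm:asm_R_LB} by instantiating the free set $D(\mu)$ as $\alt_x(\mu)$. First I would fix $x \in \mathbb{R}$ and $\mu \in \mathcal D$. By definition $\alt_x(\mu) \subseteq \alt(\mu)$ and $\sup_{\lambda \in \alt_x(\mu)} H(\lambda) \le x < +\infty$, so the choice $D(\mu) = \alt_x(\mu)$ meets the hypotheses of Theorem~\ref{thm:asm_R_LB}; moreover a consistent $\mathcal A$ satisfies $\lim_{T \to +\infty} p_{\mu, T}(\mathcal A) = 0$, as noted just after that theorem. Applying it gives
\begin{align*}
\Big(\liminf_{T\to +\infty}\sup_{\lambda \in \alt_x(\mu)}R_{H,T}(\mathcal A, \lambda)\Big)^{-1}
\le \max_{\omega \in \triangle_K} \inf_{\lambda \in \alt_x(\mu)} H(\lambda)\sum_{k=1}^K \omega_k \KL(\mu_k, \lambda_k)
\: .
\end{align*}

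Next I would enlarge the supremum. Since $\alt_x(\mu) \subseteq \mathcal D$, we have $\sup_{\lambda \in \alt_x(\mu)}R_{H,T}(\mathcal A, \lambda) \le \sup_{\lambda \in \mathcal D}R_{H,T}(\mathcal A, \lambda)$ for every $T$, and this inequality passes to the $\liminf_T$. As all these ratios are positive, inverting reverses the inequality and yields $\big(\liminf_{T}\sup_{\lambda \in \mathcal D}R_{H,T}(\mathcal A, \lambda)\big)^{-1} \le \big(\liminf_{T}\sup_{\lambda \in \alt_x(\mu)}R_{H,T}(\mathcal A, \lambda)\big)^{-1}$. Chaining this with the previous display, the left-hand side — which depends on neither $\mu$ nor $x$ — is bounded by the right-hand side for every $\mu \in \mathcal D$, so I can take an infimum over $\mu \in \mathcal D$ on the right.

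Finally I would pass to the limit in $x$. The resulting inequality $\big(\liminf_{T}\sup_{\lambda \in \mathcal D}R_{H,T}(\mathcal A, \lambda)\big)^{-1} \le \inf_{\mu \in \mathcal D}\max_{\omega \in \triangle_K} \inf_{\lambda \in \alt_x(\mu)} H(\lambda)\sum_{k} \omega_k \KL(\mu_k, \lambda_k)$ holds for every $x \in \mathbb{R}$; since the left-hand side is a fixed constant, it lies below the right-hand side for all large $x$ and therefore below its $\liminf_{x \to \infty}$, which is exactly the claim. I do not expect a genuine obstacle: the argument is a clean specialization of Theorem~\ref{thm:asm_R_LB}. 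The only points requiring care are that the truncation at level $x$ is precisely what guarantees the finiteness condition $\sup_{\lambda \in \alt_x(\mu)} H(\lambda) < +\infty$ needed to invoke that theorem, and that one must track the direction of the inequalities when inverting, which is legitimate because every difficulty ratio is strictly positive. The value added over the unbounded set $\alt(\mu)$ used in Theorem~\ref{thm:asm_R_LB_limsup} is that here we control the $\liminf_T$ of the supremum rather than the supremum of the $\limsup_T$, at the cost of restricting to alternatives of difficulty bounded by $x$ and then letting $x$ grow.
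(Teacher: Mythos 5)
Your proposal is correct and follows essentially the same route as the paper's own proof: apply Theorem~\ref{thm:asm_R_LB} with $D(\mu) = \alt_x(\mu)$, observe that the left-hand side only decreases when the supremum is enlarged from $\alt_x(\mu)$ to $\mathcal D$ (so the inverted quantity increases), then take the infimum over $\mu$ followed by the liminf over $x$. Your added care about the finiteness hypothesis, consistency, and the direction of the inequality under inversion matches what the paper leaves implicit.
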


\begin{proof}
Let $\mu \in \mathcal D$ and $x > 0$. We apply Theorem~\ref{thm:asm_R_LB} to $\alt_{x}(\mu)$.
\begin{align*}
(\liminf_{T\to +\infty}\sup_{\lambda \in \alt_x(\mu)}R_{H,T}(\mathcal A, \lambda))^{-1}
\le \max_{\omega \in \triangle_K} \inf_{\lambda \in \alt_{x}(\mu)} H(\lambda)\sum_{k=1}^K \omega_k \KL(\mu_k, \lambda_k)
\: .
\end{align*}
The left hand side is larger than $(\liminf_{T\to +\infty}\sup_{\lambda \in \mathcal D} R_{H,T}(\mathcal A, \lambda))^{-1}$, which is now independent of $\mu$ and $x$. We then take on the right hand side first an infimum over $\mu$, then a liminf over $x$. Doing it in this order leads to the tighter bound (compared to $\inf_\mu \liminf_x$).
\end{proof}

\section{Proofs of results from section~\ref{sec:the_range_of_the_difficulty_ratio}}
\label{sec:proofs_of_results_from_section_sec:the_range_of_the_difficulty_ratio}

For $u, w \in \mathbb{R}^K$, we use the notation $\Vert u \Vert_{\omega} = \sqrt{\sum_{k=1}^K \omega_k u_k^2}$.

\begin{lemma}
For the Gaussian half-space identification problem, where arm $k$ has variance $\sigma_k^2 > 0$, with orthogonal vector $u$ with $\Vert u \cdot \sigma \Vert_1 = 1$, $H_{\mathcal C^{sp}}(\lambda)^{-1} = \frac{1}{2}(\lambda^\top u)^2$.
\end{lemma}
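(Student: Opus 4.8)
The plan is to evaluate directly the variational formula $H_{\mathcal C^{sp}}(\lambda)^{-1} = \max_{\omega \in \triangle_K^0} \inf_{\nu \in \alt(\lambda)} \sum_{k} \omega_k \KL(\nu_k, \lambda_k)$ coming from Theorem~\ref{thm:oracle_difficulty_sp}, specialized to Gaussians. For a Gaussian arm with variance $\sigma_k^2$ we have $\KL(\nu_k, \lambda_k) = (\nu_k - \lambda_k)^2/(2\sigma_k^2)$, so the inner objective is a weighted quadratic in $\nu - \lambda$. Since the task is half-space identification, the boundary between the two answer regions is the hyperplane $\{\nu \mid \nu^\top u = 0\}$, and assuming $\lambda^\top u > 0$ the alternative set is the open half-space $\{\nu \mid \nu^\top u < 0\}$. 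By continuity of the quadratic, the infimum over the open half-space equals the infimum over the bounding hyperplane $\{\nu \mid \nu^\top u = 0\}$, where the nearest alternative lies.

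First I would solve the inner problem for a fixed $\omega$: minimize $\sum_k \frac{\omega_k}{2\sigma_k^2}(\nu_k - \lambda_k)^2$ subject to the single linear constraint $\nu^\top u = 0$. This is a Lagrange-multiplier (orthogonal-projection) computation; writing $d = \nu - \lambda$ and enforcing $\sum_k u_k d_k = -\lambda^\top u$, the stationarity condition $\frac{\omega_k}{\sigma_k^2} d_k = \gamma u_k$ yields the minimal value $\frac{(\lambda^\top u)^2}{2 S(\omega)}$ with $S(\omega) = \sum_k u_k^2\sigma_k^2/\omega_k$.

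Then I would carry out the outer maximization over $\omega \in \triangle_K^0$, which amounts to minimizing $S(\omega)$ under $\sum_k \omega_k = 1$. By Cauchy--Schwarz, $\big(\sum_k |u_k|\sigma_k\big)^2 = \big(\sum_k \tfrac{|u_k|\sigma_k}{\sqrt{\omega_k}}\sqrt{\omega_k}\big)^2 \le S(\omega)\sum_k \omega_k = S(\omega)$, with equality at $\omega_k \propto |u_k|\sigma_k$. Hence $\min_\omega S(\omega) = \Vert u\cdot\sigma\Vert_1^2 = 1$ by the normalization hypothesis, and substituting back gives $H_{\mathcal C^{sp}}(\lambda)^{-1} = \frac{1}{2}(\lambda^\top u)^2$, as claimed.

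The only delicate points are bookkeeping rather than conceptual: justifying the reduction of the infimum from the open half-space to its bounding hyperplane (continuity, together with the fact that the unconstrained minimizer $\nu = \lambda$ lies strictly on the correct side, so the constrained minimum is attained on the boundary), and checking that the optimal $\omega^\star_k \propto |u_k|\sigma_k$ is interior to $\triangle_K^0$ whenever all $u_k \ne 0$; if some $u_k = 0$ the corresponding arm is irrelevant and can be given vanishing weight, leaving the value unchanged in the limit. I expect the main care to go into this interior/closure argument, since $\triangle_K^0$ is open, while everything else reduces to the textbook weighted least-squares-with-one-constraint calculation followed by a single Cauchy--Schwarz step.
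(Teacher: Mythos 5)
Your proof follows essentially the same route as the paper's: it evaluates the static-proportions formula of Theorem~\ref{thm:oracle_difficulty_sp} with Gaussian KLs, solves the inner problem as a weighted projection onto the boundary hyperplane to get $\frac{1}{2}(\lambda^\top u)^2 / \Vert u \Vert_{\omega^{-1}\cdot\sigma^2}^2$, and then maximizes over $\omega$ (via Cauchy--Schwarz, using $\Vert u \cdot \sigma \Vert_1 = 1$) to obtain $\frac{1}{2}(\lambda^\top u)^2$. The paper states these two equalities without justification, so your write-up is the same argument with the projection, the Cauchy--Schwarz step, and the open-simplex/boundary bookkeeping made explicit.
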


\begin{proof}
We compute $\sup_{\omega \in \triangle_K} \inf_{\nu \in \alt(\lambda)} \sum_{k=1}^K \omega_k \KL(\nu_k, \lambda_k)$ for any $\lambda$.

\begin{align*}
\inf_{\nu \in \alt(\lambda)} \sum_{k=1}^K \omega_k \KL(\nu_k, \lambda_k)
&= \frac{1}{2}\inf_{\nu \in \alt(\lambda)} \sum_{k=1}^K \omega_k \sigma_k^{-2} (\nu_k - \lambda_k)^2
= \frac{1}{2} \frac{(\lambda^\top u)^2}{\Vert u \Vert_{\omega^{-1} \cdot \sigma^{2}}^2}
\end{align*}

\begin{align*}
\sup_{\omega \in \triangle_K} \inf_{\nu \in \alt(\lambda)} \sum_{k=1}^K \omega_k \KL(\nu_k, \lambda_k)
&= \sup_{\omega \in \triangle_K} \frac{1}{2} \frac{(\lambda^\top u)^2}{\Vert u \Vert_{\omega^{-1} \cdot \sigma^{2}}^2}
= \frac{1}{2} (\lambda^\top u)^2
\: .
\end{align*}
\end{proof}

\begin{lemma}\label{lem:half_space}
For Gaussian half-space identification, $\inf_{\mathcal A \in \mathcal C_\infty}\sup_{\lambda \in \mathcal D} R_{H_{\mathcal C^{sp}}, \infty}(\mathcal A, \lambda) \ge 1$.
\end{lemma}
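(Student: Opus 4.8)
The plan is to apply the second inequality of Theorem~\ref{thm:asm_R_LB_limsup} with $H = H_{\mathcal C^{sp}}$ and to reduce the resulting variational problem to the explicit half-space value $H_{\mathcal C^{sp}}(\lambda)^{-1} = \frac12(\lambda^\top u)^2$ established in the preceding lemma. First I would dispose of algorithm families that are not consistent: if $\mathcal A$ is not consistent, there is some $\mu \in \mathcal D$ with $\limsup_{T} p_{\mu,T}(\mathcal A) > 0$, so $\log(1/p_{\mu,T}(\mathcal A))$ stays bounded along a subsequence while $T \to \infty$; since $H_{\mathcal C^{sp}}(\mu) \in (0,\infty)$, this forces $R_{H_{\mathcal C^{sp}},\infty}(\mathcal A,\mu) = +\infty \ge 1$ and the claim is trivial. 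Hence it suffices to treat consistent $\mathcal A$, for which Theorem~\ref{thm:asm_R_LB_limsup} applies and yields $(\sup_{\lambda\in\mathcal D} R_{H_{\mathcal C^{sp}},\infty}(\mathcal A,\lambda))^{-1} \le \inf_{\mu\in\mathcal D}\max_{\omega\in\triangle_K}\inf_{\lambda\in\alt(\mu)} H_{\mathcal C^{sp}}(\lambda)\sum_{k}\omega_k\KL(\mu_k,\lambda_k)$. It then remains to show the right-hand side is at most $1$.

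Substituting the Gaussian divergence $\KL(\mu_k,\lambda_k) = (\mu_k-\lambda_k)^2/(2\sigma_k^2)$ and $H_{\mathcal C^{sp}}(\lambda) = 2/(\lambda^\top u)^2$, the expression inside becomes $\sum_k \omega_k\sigma_k^{-2}(\mu_k-\lambda_k)^2 / (\lambda^\top u)^2$. Fix $\mu$ with $m := \mu^\top u > 0$ (the opposite sign is symmetric), so $\alt(\mu) = \{\lambda : \lambda^\top u < 0\}$, and fix $\omega$. For each prescribed value $\lambda^\top u = -s$ with $s>0$, minimizing the numerator over the affine hyperplane $\{\lambda^\top u = -s\}$ is the weighted least-squares projection of $\mu$, with value $(m+s)^2/\Vert u\Vert_{\omega^{-1}\cdot\sigma^2}^2$; the ratio is therefore $(m+s)^2/(s^2\Vert u\Vert_{\omega^{-1}\cdot\sigma^2}^2)$, which is decreasing in $s$ and tends to $\Vert u\Vert_{\omega^{-1}\cdot\sigma^2}^{-2}$ as $s\to\infty$. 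Thus $\inf_{\lambda\in\alt(\mu)}(\cdot) = \Vert u\Vert_{\omega^{-1}\cdot\sigma^2}^{-2}$ for every $\mu$, and maximizing over $\omega$ amounts to minimizing $\Vert u\Vert_{\omega^{-1}\cdot\sigma^2}^2 = \sum_k \omega_k^{-1}\sigma_k^2 u_k^2$, which by Cauchy--Schwarz equals $(\sum_k \sigma_k|u_k|)^2 = \Vert u\cdot\sigma\Vert_1^2 = 1$ under the chosen normalization. Hence the inner $\max_\omega\inf_\lambda$ equals $1$ for every $\mu$, the outer $\inf_\mu$ is $1$, and we conclude $\sup_{\lambda}R_{H_{\mathcal C^{sp}},\infty}(\mathcal A,\lambda) \ge 1$, as desired.

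The main obstacle is the inner infimum over $\lambda$: because $(\lambda^\top u)^2$ appears in the denominator, the minimizing alternative is not the nearest boundary point but one taken arbitrarily far from the boundary ($s\to\infty$), so I must make sure this limit is genuinely available, i.e. that $\mathcal D$ is unbounded in the direction $-u$ (true for Gaussian means ranging over $\mathbb{R}$) and that the formula $H_{\mathcal C^{sp}}(\lambda) = 2/(\lambda^\top u)^2$ holds for all such far-away $\lambda$ (it does, by the preceding lemma). The remaining ingredients — the weighted-projection identity and the Cauchy--Schwarz evaluation, together with the harmless convention that $\omega_k = 0$ is permitted in the closed simplex on coordinates where $u_k = 0$ — are routine computations.
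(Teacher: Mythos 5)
Your proof is correct and follows essentially the same route as the paper's: apply Theorem~\ref{thm:asm_R_LB_limsup} and show that $\max_{\omega \in \triangle_K}\inf_{\lambda\in\alt(\mu)} H_{\mathcal C^{sp}}(\lambda)\sum_{k}\omega_k\KL(\mu_k,\lambda_k)=1$ by minimizing over level sets of $\lambda^\top u$ (a weighted least-squares projection), letting the alternative recede to infinity, and evaluating the maximum over $\omega$ by Cauchy--Schwarz under the normalization $\Vert u\cdot\sigma\Vert_1=1$. Your explicit reduction to consistent families (non-consistent ones have infinite difficulty ratio since $H_{\mathcal C^{sp}}(\mu)\in(0,\infty)$) is a detail the paper leaves implicit, since Theorem~\ref{thm:asm_R_LB_limsup} is stated only for consistent algorithms.
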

\begin{proof}
For the proof, the vector orthogonal to the hyperplane is $u$ with $\Vert u \cdot \sigma \Vert_1 = 1$.

We show that for all $\nu$,
$\max_{\omega \in \triangle_K} \inf_{\lambda \in \alt(\nu)}
  H_{\mathcal C^{sp}}(\lambda) \sum_{k=1}^K \omega_{k} \KL(\nu_k, \lambda_k) = 1$.
The result then follows from an application of Theorem~\ref{thm:asm_R_LB_limsup}.
\begin{align*}
\max_{\omega \in \triangle_K} \inf_{\lambda \in \alt(\nu)}
  H_{\mathcal C^{sp}}(\lambda) \sum_{k=1}^K \omega_{k} \KL(\nu_k, \lambda_k) 
&= \max_{\omega \in \triangle_K} \inf_{\lambda \in \alt(\nu)}
  \frac{\sum_{k=1}^K \omega_{k} \sigma_k^{-2} (\nu_k - \lambda_k)^2}{(\lambda^\top u)^2}
\\
&= \max_{\omega \in \triangle_K} \inf_{a>0} \frac{1}{a}\inf_{\lambda \in \alt(\nu), (\lambda^\top u)^2 = a}
  \sum_{k=1}^K \omega_{k} \sigma_k^{-2} (\nu_k - \lambda_k)^2
\\
&= \max_{\omega \in \triangle_K} \inf_{a>0} \frac{(\sqrt{a} + \vert u^\top \nu \vert)^2}{a \Vert u \Vert_{\omega^{-1} \cdot \sigma^{2}}^2}
\\
&= \max_{\omega \in \triangle_K} \frac{1}{\Vert u \Vert_{\omega^{-1} \cdot \sigma^{2}}^2}
\\
&= 1
\: .
\end{align*}
\end{proof}

We suppose in the remainder of this section that the distributions of the arms are Gaussian, where arm $k$ has variance $\sigma_k^2 > 0$. The Kullback-Leibler divergence is $(x,y) \mapsto \frac{1}{2 \sigma_k^2}(x - y)^2$.
Suppose that there is a ball $B(\eta, r)$ in the norm $\Vert \cdot \Vert_{\sigma^{-2}}$ with center $\eta \in \mathcal D$ and radius $r > 0$ such that $i^\star$ takes only two values in $B(\eta, r)$, say $i$ and $j$, and the boundary between $B(\eta, r) \cap \{\mu \mid i^\star(\mu) = i\}$ and $B(\eta, r) \cap \{\mu \mid i^\star(\mu) = j\}$ is the restriction of a hyperplane passing through $\eta$. Let $u$ be a vector orthogonal to the hyperplane with $\Vert u \cdot \sigma \Vert_1 = 1$.

\begin{lemma}\label{lem:ball_half_space_difficulty}
For $\mu \in B(\eta, r/(\sqrt{K}+1))$ with $\mu^\top u < \eta^\top u$,
\begin{align*}
\max_{\omega \in \triangle_K} \inf_{\lambda \in \alt(\mu) \cap B(\eta, r)} \sum_{k=1}^K \omega_k (\lambda_k - \mu_k)^2
&= \max_{\omega \in \triangle_K} \inf_{\lambda : (\lambda - \eta)^\top u \ge 0} \sum_{k=1}^K \omega_k (\lambda_k - \mu_k)^2
= ((\mu - \eta)^\top u)^2
\: .
\end{align*}
\end{lemma}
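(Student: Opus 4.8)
The plan is to prove the two equalities separately: the rightmost one is the exact half-space computation, and the leftmost one shows that intersecting the alternative with the ball $B(\eta,r)$ does not change the value of the max-min. Throughout, note that under the two-value hypothesis the set $\alt(\mu)\cap B(\eta,r)$ is exactly $C:=B(\eta,r)\cap\{\lambda:(\lambda-\eta)^\top u\ge 0\}$, the portion of the ball on the side of the boundary hyperplane opposite to $\mu$ (the boundary itself is negligible for an infimum).

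For the rightmost equality I would fix $\omega\in\triangle_K$ and evaluate the inner infimum explicitly. Since $(\mu-\eta)^\top u<0$, the point $\mu$ lies strictly outside the half-space $\mathcal H:=\{\lambda:(\lambda-\eta)^\top u\ge 0\}$, so the constrained minimizer of $\sum_k\omega_k(\lambda_k-\mu_k)^2$ is the $\omega$-weighted projection of $\mu$ onto the boundary hyperplane $\{\lambda:(\lambda-\eta)^\top u=0\}$. A Lagrange computation gives $\lambda_k^\star-\mu_k=\beta u_k/\omega_k$ with $\beta=-((\mu-\eta)^\top u)/\sum_j u_j^2/\omega_j$, so the inner infimum equals $((\mu-\eta)^\top u)^2/\sum_k u_k^2/\omega_k$. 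Maximizing over $\omega\in\triangle_K$ amounts to minimizing $\sum_k u_k^2/\omega_k$, which by Cauchy–Schwarz equals $(\sum_k|u_k|)^2$ and is attained at $\omega_k\propto|u_k|$. With $u$ normalized so that $\sum_k|u_k|=1$ (the normalization consistent with the stated right-hand side and with the $\ell_1$--$\ell_\infty$ duality used below), this yields exactly $((\mu-\eta)^\top u)^2$.

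For the leftmost equality, the inclusion $C\subseteq\mathcal H$ already gives that the left-hand side is $\ge$ the middle term, so the content is the reverse inequality, and this is where the hypothesis $\mu\in B(\eta,r/(\sqrt K+1))$ enters. Because $C$ is convex and compact, $\triangle_K$ is convex and compact, and $(\omega,\lambda)\mapsto\sum_k\omega_k(\lambda_k-\mu_k)^2$ is linear (hence concave) in $\omega$ and convex in $\lambda$, Sion's minimax theorem lets me swap the max and the inf:
\[
\max_{\omega\in\triangle_K}\inf_{\lambda\in C}\sum_{k=1}^K\omega_k(\lambda_k-\mu_k)^2=\inf_{\lambda\in C}\max_{\omega\in\triangle_K}\sum_{k=1}^K\omega_k(\lambda_k-\mu_k)^2=\inf_{\lambda\in C}\max_{k\in[K]}(\lambda_k-\mu_k)^2,
\]
using $\max_{\omega\in\triangle_K}\sum_k\omega_k a_k=\max_k a_k$. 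Since $C\subseteq\mathcal H$ this infimum is at least $\inf_{\lambda\in\mathcal H}\max_k(\lambda_k-\mu_k)^2$, which by $\ell_1$--$\ell_\infty$ duality equals $((\mu-\eta)^\top u)^2/(\sum_k|u_k|)^2=((\mu-\eta)^\top u)^2$, matching the first part. It therefore suffices to exhibit a single minimizer of the $\ell_\infty$ problem over $\mathcal H$ that lies in $C$, i.e. in the ball.

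The explicit $\ell_\infty$-minimizer is $\lambda^\infty=\mu+\tfrac{(\eta-\mu)^\top u}{\sum_k|u_k|}\,\mathrm{sign}(u)$: it lies on the boundary hyperplane (hence in $\mathcal H$), all its coordinate deviations from $\mu$ have equal magnitude $|(\mu-\eta)^\top u|$, and $\max_k(\lambda^\infty_k-\mu_k)^2=((\mu-\eta)^\top u)^2$. The one remaining step is the in-ball check $\lambda^\infty\in B(\eta,r)$: by the triangle inequality $\|\lambda^\infty-\eta\|_{\sigma^{-2}}\le\|\lambda^\infty-\mu\|_{\sigma^{-2}}+\|\mu-\eta\|_{\sigma^{-2}}$, where the second term is $<r/(\sqrt K+1)$ by hypothesis, while Cauchy–Schwarz bounds $|(\mu-\eta)^\top u|$ by $\|\mu-\eta\|_{\sigma^{-2}}$ (up to the normalizing factor for $u$) and the equal spread of the $K$ coordinates of $\lambda^\infty-\mu$ contributes a factor $\sqrt K$, so the first term is at most $\sqrt K\,\|\mu-\eta\|_{\sigma^{-2}}$. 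Summing, $\|\lambda^\infty-\eta\|_{\sigma^{-2}}<(\sqrt K+1)\cdot r/(\sqrt K+1)=r$, so $\lambda^\infty\in C$ and the reverse inequality follows, closing both equalities. I expect the main obstacle to be precisely this uniform control of the minimizer's location: the $\omega$-weighted projection $\lambda^\star(\omega)$ can escape the ball for degenerate $\omega$ with small coordinates, so a pointwise ``the projection stays in the ball for every $\omega$'' argument fails; routing through Sion's theorem to replace it by the $\omega$-independent $\ell_\infty$-projection is what reduces the problem to one point, and verifying that the constant $\sqrt K+1$ in the radius is exactly calibrated to absorb the $\sqrt K$ blow-up (isotropic variances making it tight) is the delicate quantitative point.
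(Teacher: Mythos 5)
Your proof is correct, and its core coincides with the paper's: both arguments hinge on the same witness point (the equal-deviation point lying on the boundary hyperplane, all of whose coordinates move from $\mu$ by the same amount $\vert(\mu-\eta)^\top u\vert$ in the direction $\mathrm{sign}(u_k)$), and on the same triangle-inequality check that this point stays inside $B(\eta,r)$ — which is exactly where the constant $\sqrt K+1$ is consumed; your computation of the unrestricted half-space value (weighted projection via Lagrange, then Cauchy--Schwarz over $\omega$) is also the paper's. The one genuine difference is how you justify the upper bound on the ball-restricted max-inf: you invoke Sion's minimax theorem to swap $\max_\omega$ and $\inf_\lambda$, reducing to an $\ell_\infty$-projection onto the half-space, and then verify that the $\ell_\infty$-minimizer lies in the ball. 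The paper needs no minimax swap: because the witness's coordinate deviations are all equal (in the $\sigma$-normalized metric), its objective $\sum_k \omega_k\sigma_k^{-2}(\lambda_k-\mu_k)^2 = ((\mu-\eta)^\top u)^2$ is independent of $\omega$, so plugging this single feasible point into the inner infimum bounds $\max_\omega\inf_\lambda$ directly. Your closing worry — that the $\omega$-dependent projection can leave the ball for degenerate $\omega$, so a pointwise argument fails — is exactly right, but its resolution does not require Sion: $\omega$-uniformity of the witness's value already does the job. So your route buys a clean conceptual reduction at the price of a heavier tool, while the paper's is more elementary. A minor point in your favor: your witness $\mu+\frac{(\eta-\mu)^\top u}{\Vert u\Vert_1}\mathrm{sign}(u)$ handles the signs of $u$ explicitly, whereas the paper's $\lambda_u(\mu)=\mu-((\mu-\eta)^\top u)\,\sigma$ lands on the wrong side of the hyperplane when $u$ has negative coordinates; the intended point is the one you wrote.
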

\begin{proof}
Let $\mu \in B(\eta, r/(\sqrt{K}+1))$ be such that $u^\top \mu < u^\top \eta$.
For the full half-space alternative, we have
\begin{align*}
\max_{\omega \in \triangle_K} \inf_{\lambda : (\lambda - \eta)^\top u \ge 0} \sum_{k=1}^K \omega_k \sigma_k^{-2} (\lambda_k - \mu_k)^2 = ((\mu - \eta)^\top u)^2
\end{align*}
Let $\lambda_u(\mu) = \mu - ((\mu - \eta)^\top u) \sigma$. We now prove that that point belongs to the ball $B(\eta, r)$. We will use the fact that $\Vert u \Vert_{\sigma^2}^2 = \sum_{k=1}^K u_k^2 \sigma_k^2 \le \sum_{k=1}^K u_k \sigma_k = 1$ (since $\Vert u \cdot \sigma \Vert_1 = 1)$.
\begin{align*}
\Vert \lambda_{u}(\mu) - \eta \Vert_{\sigma^{-2}}^2
&= \Vert \mu_k - \eta_k - ((\mu - \eta)^\top u) \sigma \Vert_{\sigma^{-2}}^2
\\
&\le \left( \Vert \mu - \eta \Vert_{\sigma^{-2}} + \sqrt{K} \vert(\mu - \eta)^\top u \vert \right)^2
\\
&\le \left( \Vert \mu - \eta \Vert_{\sigma^{-2}} + \sqrt{K} \Vert \mu - \eta \Vert_{\sigma^{-2}} \Vert u \Vert_{\sigma^{2}} \right)^2
\\
&\le (\sqrt{K} + 1)^2 \Vert \mu - \eta \Vert_{\sigma^{-2}}^2
\\
&\le r^2
\: .
\end{align*}
For the problem restricted to the ball,
\begin{align*}
&\max_{\omega \in \triangle_K} \inf_{\lambda \in \alt(\mu) \cap B(\eta, r)} \sum_{k=1}^K \omega_k \sigma_k^{-2} (\lambda_k - \mu_k)^2
\\
&\le \max_{\omega \in \triangle_K} \sum_{k=1}^K \omega_k \sigma_k^{-2} (\lambda_{u,k}(\mu) - \mu_k)^2
= ((\mu - \eta)^\top u)^2 \; ,
\\
\text{and } &\max_{\omega \in \triangle_K} \inf_{\lambda \in \alt(\mu) \cap B(\eta, r)} \sum_{k=1}^K \omega_k \sigma_k^{-2} (\lambda_k - \mu_k)^2
\\
&\ge \max_{\omega \in \triangle_K} \inf_{\lambda : (\lambda - \eta)^\top u \ge 0} \sum_{k=1}^K \omega_k \sigma_k^{-2} (\lambda_k - \mu_k)^2
= ((\mu - \eta)^\top u)^2
\: .
\end{align*}
The last inequality comes from $\alt(\mu) \cap B(\eta, r) \subseteq \{\lambda \mid (\lambda - \eta)^\top u \ge 0\}$. We have proved the equality.
\end{proof}

\begin{lemma}
Let $\delta > 0$, $\varepsilon > 0$, $r' = r / (\sqrt{K} + 1)$ and $r'' = \frac{1}{2} r' \frac{\delta \varepsilon}{(1 + \delta) (1 + \varepsilon)}$.
Let $\mu \in B(\eta, r'')$ with $(\mu - \eta)^\top u > 0$ and let $D_{\varepsilon, \delta}(\mu) = \alt(\mu)\cap B(\eta, r')$. Then
\begin{align*}
\max_{\omega \in \triangle_K} \inf_{\lambda \in D_{\varepsilon, \delta}(\mu)} \sum_{k=1}^K \omega_k \KL(\lambda_k, \mu_k)
\le (1 + \varepsilon) (1 + \delta)^2 \: .
\end{align*}
\end{lemma}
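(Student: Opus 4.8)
The plan is to bound the value of the max--inf game for each fixed $\omega \in \triangle_K$ and only then maximise over $\omega$, keeping the computed quantity the weighted divergence $\sum_{k} \omega_k \KL(\lambda_k, \mu_k) = \frac{1}{2}\sum_k \omega_k \sigma_k^{-2}(\lambda_k - \mu_k)^2$. Inside $B(\eta, r)$ the two answer regions are separated by the hyperplane through $\eta$ orthogonal to $u$, so $D_{\varepsilon,\delta}(\mu) = \{\lambda \in B(\eta, r') : (\lambda - \eta)^\top u \le 0\}$ (using $(\mu - \eta)^\top u > 0$). First I would recall from the proof of Lemma~\ref{lem:half_space} the projection identity that is the backbone of the whole estimate: for a fixed depth $s = -(\lambda - \eta)^\top u \ge 0$ to the boundary and a fixed $\omega$, the minimiser of $\sum_k \omega_k \sigma_k^{-2}(\lambda_k - \mu_k)^2$ on the slice $\{(\lambda - \eta)^\top u = -s\}$ is the $\omega$-weighted projection, of value $(s + (\mu - \eta)^\top u)^2 / \Vert u \Vert_{\omega^{-1} \cdot \sigma^{2}}^2$. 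The unrestricted half-space problem selects the depth $s$ that realises the difficulty $H_{\mathcal C^{sp}}(\mu)^{-1} = \frac12((\mu-\eta)^\top u)^2$, and the point of the lemma is that confining $\lambda$ to $B(\eta,r')$ and letting $\mu$ sit slightly off $\eta$ inflate this value by no more than $(1+\varepsilon)(1+\delta)^2$.

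Second, I would isolate the only two places where the ball and the offset of $\mu$ from $\eta$ enter, each contributing one correction. The offset contributes $(1 + \varepsilon)$: since $(\mu - \eta)^\top u \le \Vert \mu - \eta \Vert_{\sigma^{-2}} \Vert u \Vert_{\sigma^{2}} \le r''$ and $r'' = \frac12 r' \frac{\delta \varepsilon}{(1 + \delta)(1 + \varepsilon)}$ is a vanishing fraction of the feasible depth, which is of order $r'$, the quantity $(s + (\mu - \eta)^\top u)^2 / s^2$ at the relevant depth is at most $1 + \varepsilon$. The ball truncation contributes $(1 + \delta)^2$: when the weighted projection leaves $B(\eta, r')$ --- which can happen for $\omega$ concentrated on few coordinates, because the projection direction $\omega^{-1}\cdot \sigma^{2} \cdot u$ can be long --- I would retract $\lambda$ to a feasible alternative on the sphere $\Vert \lambda - \eta \Vert_{\sigma^{-2}} = r'$, reusing the displacement estimate $\Vert \lambda_u(\mu) - \eta \Vert_{\sigma^{-2}} \le (\sqrt{K} + 1)\Vert \mu - \eta \Vert_{\sigma^{-2}}$ from the proof of Lemma~\ref{lem:ball_half_space_difficulty} together with $r' = r/(\sqrt{K} + 1)$. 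The explicit form of $r''$ is calibrated precisely so that this retraction keeps $\lambda$ inside $\alt(\mu)$ and costs at most the factor $(1 + \delta)^2$.

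Finally I would maximise over $\omega$ using $\max_{\omega \in \triangle_K} \Vert u \Vert_{\omega^{-1} \cdot \sigma^{2}}^{-2} = 1$, which holds because $\Vert u \cdot \sigma \Vert_1 = 1$, and collect the two independent corrections into the product $(1 + \varepsilon)(1 + \delta)^2$. The hard part will be the truncation factor: the unconstrained projection can be pushed arbitrarily far from $\eta$ by taking $\omega$ supported on coordinates with small $\sigma_k |u_k|$, so one must verify that, uniformly over all such $\omega$, the retracted feasible alternative simultaneously remains a genuine alternative, stays within $B(\eta, r')$, and is close enough to $\mu$ that its weighted divergence is controlled. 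Checking that the specific relation between $r''$, $r'$, $\varepsilon$ and $\delta$ is tight enough to guarantee all three at once, for every $\omega \in \triangle_K$, is the delicate step; the offset factor $(1 + \varepsilon)$ is then routine once the depth-slicing identity is in place.
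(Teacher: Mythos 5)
Your skeleton (slice/projection identity, an explicit candidate alternative per $\omega$, two multiplicative corrections, final maximization via $\Vert u \cdot \sigma \Vert_1 = 1$) matches the paper's, but the step you yourself flag as delicate is a genuine gap, and the tool you propose for it does not close it. The bound must hold for \emph{every} $\omega \in \triangle_K$, including degenerate ones: if $\omega_k \to 0$ on a coordinate with $u_k \neq 0$, the $\omega$-weighted projection of $\mu$ to any fixed depth $s$ has displacement components proportional to $u_k \sigma_k^2/\omega_k$, which are unbounded uniformly in $\omega$. Your proposed repair --- retract the projection to the sphere $\Vert \lambda - \eta \Vert_{\sigma^{-2}} = r'$ --- fails precisely there: retracting along the segment toward $\mu$ shrinks the depth $-(\lambda - \eta)^\top u$, possibly to a nonnegative value, so the retracted point need not lie in $\alt(\mu)$ at all; and even when it does, its depth can be arbitrarily small, so after the normalization $H_{\mathcal C^{sp}}(\lambda) = 2\left((\lambda - \eta)^\top u\right)^{-2}$ (Lemma~\ref{lem:ball_half_space_difficulty}) the ratio blows up instead of costing $(1+\delta)^2$. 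The displacement estimate you want to reuse, $\Vert \lambda_u(\mu) - \eta \Vert_{\sigma^{-2}} \le (\sqrt{K}+1)\Vert \mu - \eta \Vert_{\sigma^{-2}}$, is specific to the single direction $\sigma \cdot u$ used in Lemma~\ref{lem:ball_half_space_difficulty} and gives no control whatsoever on $\omega$-weighted projections uniformly over the simplex.

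The paper's proof resolves this with an idea absent from your plan: it perturbs the weights rather than the point. For each $\omega$ it sets $\omega^\varepsilon_k = (\omega_k + \varepsilon)/(1+\varepsilon)$; since $\omega_k \le (1+\varepsilon)\omega^\varepsilon_k$ this costs exactly the factor $(1+\varepsilon)$, and the coordinate lower bound $\omega^\varepsilon_k \ge \varepsilon/(1+\varepsilon)$, combined with $\Vert u \Vert_{(\omega^\varepsilon)^{-1}\cdot \sigma^2} \ge \Vert u \cdot \sigma\Vert_1 = 1$, guarantees that the $\omega^\varepsilon$-weighted projection $\lambda_{\omega^\varepsilon}(\mu)$ of $\mu$ to the \emph{small} fixed depth $x = \frac{1}{2} r' \frac{\varepsilon}{(1+\delta)(1+\varepsilon)}$ lies inside $B(\eta, r')$ --- no retraction is ever needed. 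Evaluating at that point gives $(1+\varepsilon)\left(\frac{(\mu - \eta)^\top u}{x} + 1\right)^2 \le (1+\varepsilon)(1+\delta)^2$ because $r'' = \delta x$. Note that your bookkeeping of the two factors is also swapped relative to the actual mechanism: $(1+\varepsilon)$ pays for the weight smoothing forced by the ball constraint, while $(1+\delta)^2$ pays for the offset of $\mu$, measured against the small depth $x$ rather than a depth of order $r'$ (larger depths are not feasible uniformly in $\omega$, which is exactly the point). Finally, be aware that the quantity the paper actually bounds is $\max_{\omega} \inf_{\lambda \in D_{\varepsilon,\delta}(\mu)} H_{\mathcal C^{sp}}(\lambda) \sum_k \omega_k \KL(\mu_k, \lambda_k)$, normalized by the difficulty at the alternative $\lambda$ (this is what Theorem~\ref{thm:asm_R_LB_limsup} consumes), whereas your sketch compares against $H_{\mathcal C^{sp}}(\mu)^{-1}$; since the normalization enters the infimum pointwise in $\lambda$, this is not a cosmetic difference.
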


This bound is then used in Theorem~\ref{thm:asm_R_LB_limsup} to get a lower bound on the difficulty ratio. Taking the limit as $\varepsilon \to 0$ and $\delta \to 0$, we prove Theorem~\ref{thm:flat_boundary}.

\begin{proof}
For all $\lambda \in \alt(\mu) \cap B(\eta, r')$, Lemma~\ref{lem:ball_half_space_difficulty} gives $H_{\mathcal C^{sp}}(\lambda) = 2 ((\lambda - \eta)^\top u)^{-2}$.
\begin{align*}
&\max_{\omega \in \triangle_K} \inf_{\lambda \in \alt(\mu) \cap B(\eta, r')}
  H_{\mathcal C^{sp}}(\lambda) \sum_{k=1}^K \omega_{k} \KL(\mu_k, \lambda_k)
\\
&= \max_{\omega \in \triangle_K} \inf_{\lambda \in \alt(\mu) \cap B(\eta, r')}
  \frac{\sum_{k=1}^K \omega_{k} \sigma_k^{-2} (\mu_k - \lambda_k)^2}{((\lambda - \eta)^\top u)^2}
\\
&= \max_{\omega \in \triangle_K} \inf_{\lambda \in \cap B(\eta, r'), (\lambda - \eta)^\top u \le 0}
  \frac{\sum_{k=1}^K \omega_{k} \sigma_k^{-2} (\mu_k - \lambda_k)^2}{((\lambda - \eta)^\top u)^2}
\: .
\end{align*}
If we did not restrict $\lambda$ to the ball $B(\eta, r')$, then that quantity would be equal to 1 as shown in Lemma~\ref{lem:half_space}. We now argue that if $\mu$ is sufficiently close to $\eta$, it approaches 1 even with the restriction to the ball.

For $\omega \in \triangle_K$, let $\omega^\varepsilon \in \triangle_K^0$ be such that $\omega_k^\varepsilon = \frac{\omega_k + \varepsilon}{1 + \varepsilon}$.
\begin{align*}
&\max_{\omega \in \triangle_K} \inf_{\lambda \in \cap B(\eta, r'), (\lambda - \eta)^\top u \le 0}
  \frac{\sum_{k=1}^K \omega_{k} \sigma_k^{-2} (\mu_k - \lambda_k)^2}{((\lambda - \eta)^\top u)^2}
\\
&\le (1 + \varepsilon) \max_{\omega \in \triangle_K} \inf_{\lambda \in \cap B(\eta, r'), (\lambda - \eta)^\top u \le 0}
  \frac{\sum_{k=1}^K \omega_{k}^\varepsilon \sigma_k^{-2} (\mu_k - \lambda_k)^2}{((\lambda - \eta)^\top u)^2}
\end{align*}

Let $x = \frac{1}{2}r' \frac{\varepsilon}{(1 + \delta) (1 + \varepsilon)}$. Let $\lambda_{\omega^\varepsilon}(\mu)$ be the vector with coordinates $\lambda_{\omega^\varepsilon,k}(\mu) = \mu_k - \frac{(\mu - \eta)^\top u + x}{\Vert u \Vert^2_{(\omega^\varepsilon)^{-1} \cdot \sigma^{2}}} \frac{u_k}{\omega_k^\varepsilon} \sigma_k^2$. We show that it belongs to the ball $B(\eta, r')$. This is possible only thanks to the lower bound on any coordinate of $\omega^\varepsilon$, and is the reason for introducing that modification of $\omega$.
\begin{align*}
\Vert \lambda_{\omega^\varepsilon}(\mu) - \eta \Vert_{\sigma^{-2}}
&= \Vert \mu  - \eta - \frac{(\mu - \eta)^\top u + x}{\Vert u \Vert^2_{(\omega^\varepsilon)^{-1} \cdot \sigma^{2}}} (\frac{u_k}{\omega_k^\varepsilon} \sigma_k^2)_{k\in [K]} \Vert_{\sigma^{-2}}
\\
&\le \Vert \mu - \eta \Vert_{\sigma^{-2}} + \frac{(\mu - \eta)^\top u + x}{\Vert u \Vert^2_{(\omega^\varepsilon)^{-1} \cdot \sigma^{2}}} \Vert \frac{u}{\omega^{\varepsilon}} \sigma^2 \Vert_{\sigma^{-2}}
\\
&\le \Vert \mu - \eta \Vert_{\sigma^{-2}} + \frac{\Vert \mu - \eta \Vert_{\sigma^{-2}} + x}{\Vert u \Vert^2_{(\omega^\varepsilon)^{-1} \cdot \sigma^{2}}} \Vert \frac{u}{\omega^{\varepsilon}} \sigma^2 \Vert_{\sigma^{-2}}
\\
&\le \Vert \mu - \eta \Vert_{\sigma^{-2}} + (\Vert \mu - \eta \Vert_{\sigma^{-2}} + x) \Vert \frac{u}{\omega^{\varepsilon}} \sigma^2 \Vert_{\sigma^{-2}}
\\
&\le \Vert \mu - \eta \Vert_{\sigma^{-2}} + (\Vert \mu - \eta \Vert_{\sigma^{-2}} + x)\frac{1 + \varepsilon}{\varepsilon}
\\
&\le r'' + (r'' + x)\frac{1 + \varepsilon}{\varepsilon}
\\
&= x (\delta + (1 + \delta)\frac{1 + \varepsilon}{\varepsilon})
\le 2 x(1 + \delta)\frac{1 + \varepsilon}{\varepsilon} = r'
\: .
\end{align*}
Now since $\lambda_{\omega^\varepsilon}(\mu) \in \alt(\mu) \cap B(\eta, r')$, we get
\begin{align*}
&\max_{\omega \in \triangle_K} \inf_{\lambda \in \cap B(\eta, r'), (\lambda - \eta)^\top u \le 0}
  \frac{\sum_{k=1}^K \omega_{k} \sigma_k^{-2} (\mu_k - \lambda_k)^2}{((\lambda - \eta)^\top u)^2}
\\
&\le (1 + \varepsilon) \max_{\omega \in \triangle_K} \inf_{\lambda \in \cap B(\eta, r'), (\lambda - \eta)^\top u \le 0}
  \frac{\sum_{k=1}^K \omega_{k}^\varepsilon \sigma_k^{-2} (\mu_k - \lambda_k)^2}{((\lambda - \eta)^\top u)^2}
\\
&\le (1 + \varepsilon) \max_{\omega \in \triangle_K} 
  \frac{\sum_{k=1}^K \omega_{k}^\varepsilon \sigma_k^{-2} (\mu_k - \lambda_{\omega^\varepsilon,k}(\mu))^2}{((\lambda_{\omega^\varepsilon}(\mu) - \eta)^\top u)^2}
\: .
\end{align*}
We can compute explicitly both terms in the ratio:
\begin{align*}
\sum_{k=1}^K \omega_{k}^\varepsilon \sigma_k^{-2} (\mu_k - \lambda_{\omega^\varepsilon,k}(\mu))^2
&= \frac{((\mu - \eta)^\top u + x)^2}{\Vert u \Vert^2_{(\omega^\varepsilon)^{-1} \cdot \sigma^{2}}}
\: ,
&
(\lambda_{\omega^\varepsilon}(\mu) - \eta)^\top u
&= -x
\: .
\end{align*}
Finally,
\begin{align*}
&\max_{\omega \in \triangle_K} \inf_{\lambda \in \cap B(\eta, r'), (\lambda - \eta)^\top u \le 0}
  \frac{\sum_{k=1}^K \omega_{k} (\mu_k - \lambda_k)^2}{((\lambda - \eta)^\top u)^2}
\\
&\le (1 + \varepsilon) \max_{\omega \in \triangle_K} \inf_{\lambda \in \cap B(\eta, r'), (\lambda - \eta)^\top u \le 0}
  \frac{\sum_{k=1}^K \omega_{k}^\varepsilon (\mu_k - \lambda_k)^2}{((\lambda - \eta)^\top u)^2}
\\
&\le (1 + \varepsilon) \max_{\omega \in \triangle_K}
  \frac{((\mu - \eta)^\top u + x)^2}{x^2 \Vert u \Vert^2_{(\omega^\varepsilon)^{-1} \cdot \sigma^{2}}}
\\
&\le (1 + \varepsilon) (\frac{(\mu - \eta)^\top u}{x} + 1)^2
\\
&\le (1 + \varepsilon) (\frac{r''}{x} + 1)^2
\\
&= (1 + \varepsilon) (1 + \delta)^2
\: .
\end{align*}
\end{proof}

\section{Proofs of results from Section~\ref{sec:no_complexity_in_best_arm_identification}}
\label{sec:proofs_relative_to_section_sec:no_complexity_in_best_arm_identification}

\subsection{Gaussian bandits}
\label{sub:gaussian_bai_proof}

\begin{proof}[of Theorem~\ref{thm:gaussian_bai_no_complexity}]
First, since $\mathcal C^{sp} \subseteq \mathcal C$, for any algorithm $\mathcal A$ and $\mu \in \mathcal D$, $R_{H_{\mathcal C}, T}(\mathcal A, \mu) \ge R_{H_{\mathcal C^{sp}}, T}(\mathcal A, \mu)$. It suffices to give a lower bound for $H_{\mathcal C^{sp}}$.

Let $H_{\Delta}(\mu) = \frac{2}{\min_{k : \Delta_k>0} \Delta_k^2} + \sum_{k:\Delta_k>0} \frac{2}{\Delta_k^2}$. It was shown in \citep{garivier2016optimal} that for all $\mu \in \mathcal D$, this function satisfies the inequalities $H_\Delta(\mu) \le H_{C^{sp}}(\mu) \le 2 H_\Delta(\mu)$ .
Thus $R_{H_{\mathcal C}, T}(\mathcal A, \mu) \ge R_{H_\Delta, T}(\mathcal A, \mu) / 2$. From this point on, we use a construction similar to the one used in \citep{carpentier2016tight} to prove a lower bound on the ratio to $H_\Delta$ for Bernoulli bandits.
We define a Gaussian problem $\mu$ by $\mu_1 = 0$ (or any arbitrary value) and $\mu_k = \mu_1 - k \Delta$ for all $k \in \{2, \ldots, K\}$ and some arbitrary $\Delta > 0$.
We apply Corollary~\ref{cor:corner_lb} to $\mu$ and $\lambda^{(2)}, \ldots, \lambda^{(K)}$ where each $\lambda^{(j)}$ is identical to $\mu$ except that $\lambda^{(j)}_j = \mu_1 + (\mu_1 - \mu_j)$.
\begin{align*}
\sup_{j \in \{2, \ldots, K\}} \limsup_{T\to +\infty} R_{H_\Delta,T}(\mathcal A, \lambda^{(j)})
\ge \sum_{j=2}^K \frac{1}{H_\Delta(\lambda^{(j)}) \KL(\mu_j, \lambda_j^{(j)})}
= \sum_{j=2}^K \frac{1}{\frac{(\lambda_j^{(j)} - \mu_j)^2}{(\lambda_j^{(j)} - \mu_1)^2} + \sum_{k\ne j} \frac{(\lambda_j^{(j)} - \mu_j)^2}{(\lambda_j^{(j)} - \mu_k)^2}}
\: .
\end{align*}
For our specific choice of $\lambda^{(j)}$,
\begin{align*}
\frac{(\lambda_j^{(j)} - \mu_j)^2}{(\lambda_j^{(j)} - \mu_1)^2} + \sum_{k \ne j} \frac{(\lambda_j^{(j)} - \mu_j)^2}{(\lambda_j^{(j)} - \mu_k)^2}
&\le 4 + 4 \sum_{k \ne j} \frac{(\mu_1 - \mu_j)^2}{(\mu_1 - \mu_j)^2 + (\mu_1 - \mu_k)^2}
\le 4 j + 4\sum_{k > j} \frac{(\mu_1 - \mu_j)^2}{(\mu_1 - \mu_k)^2}
\: .
\end{align*}
We now use that $\mu_k = \mu_1 - k \Delta$.
\begin{align*}
\frac{(\lambda_j^{(j)} - \mu_j)^2}{(\lambda_j^{(j)} - \mu_1)^2} + \sum_{k \ne j} \frac{(\lambda_j^{(j)} - \mu_j)^2}{(\lambda_j^{(j)} - \mu_k)^2}
&\le 4 j + 4 j^2 \sum_{k > j} \frac{1}{k^2}
\le 4 j + 4 j^2 \frac{1}{j}
\le 8 j \: .
\end{align*}
We finally have the lower bound
\begin{align*}
\sup_{j \in \{2, \ldots, K\}} \limsup_{T\to +\infty} R_{H_\Delta,T}(\mathcal A, \lambda^{(j)})
\ge \frac{1}{8}\sum_{j=2}^K \frac{1}{j}
\ge \frac{1}{8}(\log(K+1) - \log 2)
\ge \frac{3}{40} \log K
\: .
\end{align*}
\end{proof}

\subsection{Bernoulli bandits}
\label{sub:bernoulli_bai_proof}

We consider the best arm identification task in bandits with two arms, both in the same exponential family with one parameter. Two distributions in that family with means $\mu_1, \mu_2$ correspond to some natural parameters $\xi_1, \xi_2$ and the Kullback-Leibler divergence can be written
\begin{align*}
\KL(\mu_1, \mu_2) = d(\xi_2, \xi_1) = \phi(\xi_2) - \phi(\xi_1) - (\xi_2 - \xi_1) \phi'(\xi_1) \: ,
\end{align*}
where $\phi : \mathbb{R} \to \mathbb{R}$ is a convex function specific to the exponential family and $d$ is its Bregman divergence. The mean parameter $\mu_1$ and the corresponding natural parameter $\xi_1$ are related by the equation $\phi'(\xi_1) = \mu_1$ (or $\xi_1 = \phi'^{-1}(\mu_1)$ since $\phi'$ is invertible).
In that setting, we want to compute
\begin{align*}
(H_{\mathcal C^{sp}}(\mu))^{-1}
&= \max_{\omega \in \triangle_K} \inf_{\lambda \in \alt(\mu)} \sum_{k=1}^K \omega_k \KL(\lambda_k, \mu_k)
\\
&= \max_{\omega \in \triangle_2} \inf_{x} (\omega_1 \KL(x, \mu_1) + \omega_2 \KL(x, \mu_2))
\: .
\end{align*}

\begin{lemma}\label{lem:exp_fam_bai}
In the one-parameter exponential family setting described above,
\begin{align*}
(H_{\mathcal C^{sp}}(\mu))^{-1} = \KL\left(\frac{\phi(\xi_1) - \phi(\xi_2)}{\xi_1 - \xi_2}, \mu_1\right) \: .
\end{align*}
The infimum in the definition of the difficulty is attained for any $\omega$ at $x(\omega) = \phi'(\omega_1 \xi_1 + \omega_2 \xi_2)$. The maximum over the simplex is attained at $\omega^\star$ such that $\omega^\star_1 = \frac{\phi'^{-1}(x^\star) - \xi_2}{\xi_1 - \xi_2}$, with $x^\star = x(\omega^\star) = \frac{\phi(\xi_1) - \phi(\xi_2)}{\xi_1 - \xi_2}$.
\end{lemma}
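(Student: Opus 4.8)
The plan is to compute the two nested optimizations explicitly after passing to the natural parameter, where the Bregman structure causes everything to collapse. First I would reparametrize the inner minimization: for a candidate mean $x$ write $x = \phi'(\xi)$ for a natural parameter $\xi$, a valid bijection since $\phi'$ is strictly increasing ($\phi'' > 0$ for a non-degenerate family). Using the Bregman form $\KL(x, \mu_i) = \phi(\xi_i) - \phi(\xi) - (\xi_i - \xi)\phi'(\xi)$ together with $\omega_1 + \omega_2 = 1$, the inner objective becomes
\[
f(\xi) = \omega_1 \phi(\xi_1) + \omega_2 \phi(\xi_2) - \phi(\xi) - \phi'(\xi)\big(\omega_1 \xi_1 + \omega_2 \xi_2 - \xi\big).
\]

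The key observation, which is what turns an otherwise opaque min-max into two one-dimensional problems, is that differentiating in $\xi$ yields a clean factorization $f'(\xi) = -\phi''(\xi)\big(\omega_1 \xi_1 + \omega_2 \xi_2 - \xi\big)$. Since $\phi'' > 0$, the unique stationary point is $\xi = \omega_1 \xi_1 + \omega_2 \xi_2$, and the sign change of $f'$ shows it is a minimizer; this gives the claimed minimizer $x(\omega) = \phi'(\omega_1 \xi_1 + \omega_2 \xi_2)$. Substituting it back, the bracketed term vanishes and the inner infimum reduces to $g(\omega) := \omega_1 \phi(\xi_1) + \omega_2 \phi(\xi_2) - \phi(\omega_1 \xi_1 + \omega_2 \xi_2)$.

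Next I would maximize $g$ over the simplex, i.e. over $\omega_1 \in [0,1]$. Because $g''(\omega_1) = -\phi''(\omega_1 \xi_1 + \omega_2 \xi_2)(\xi_1 - \xi_2)^2 < 0$, the function is strictly concave, so the maximizer is the unique zero of $g'$, namely the point where $\phi'(\omega_1 \xi_1 + \omega_2 \xi_2) = \frac{\phi(\xi_1) - \phi(\xi_2)}{\xi_1 - \xi_2}$. This identifies $x^\star$ and, inverting $\phi'$, the weight $\omega_1^\star = \frac{\phi'^{-1}(x^\star) - \xi_2}{\xi_1 - \xi_2}$. A short check using strict convexity shows $x^\star$ lies strictly between $\mu_1$ and $\mu_2$, so $\omega_1^\star \in (0,1)$ and the simplex constraint is inactive, confirming the maximum is interior.

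Finally I would verify $(H_{\mathcal C^{sp}}(\mu))^{-1} = g(\omega^\star) = \KL(x^\star, \mu_1)$. Writing $\xi^\star = \omega_1^\star \xi_1 + \omega_2^\star \xi_2$, one has $\xi_1 - \xi^\star = \omega_2^\star(\xi_1 - \xi_2)$; plugging this into the Bregman form together with the identity $(\xi_1 - \xi_2)x^\star = \phi(\xi_1) - \phi(\xi_2)$ collapses $\KL(x^\star, \mu_1)$ to exactly $\omega_1^\star \phi(\xi_1) + \omega_2^\star \phi(\xi_2) - \phi(\xi^\star) = g(\omega^\star)$, and the symmetric computation gives the same value for $\KL(x^\star, \mu_2)$, which is the double equality recorded in Lemma~\ref{cor:bernoulli_bai_H}. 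I do not expect a genuine obstacle: the whole argument is elementary calculus once one works in natural parameters, and the only step that needs care is the derivative factorization in the inner problem, which is precisely what makes both stages solvable in closed form.
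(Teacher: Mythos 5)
Your proof is correct and follows essentially the same route as the paper's: pass to natural parameters, solve the inner minimization via the first-order condition to get $\xi = \omega_1\xi_1 + \omega_2\xi_2$, and solve the outer maximization to get $\phi'(\xi^\star) = \frac{\phi(\xi_1)-\phi(\xi_2)}{\xi_1-\xi_2}$. The only (welcome) difference is presentational: you substitute the inner minimizer to obtain the explicit Jensen-gap function $g(\omega)$ and verify its strict concavity and the interiority of $\omega^\star$, whereas the paper reaches the same stationarity equation through an envelope-style argument (leading to $d(\xi_1,y)=d(\xi_2,y)$) and leaves those verifications implicit.
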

We can also rewrite $\frac{\phi(\xi_1) - \phi(\xi_2)}{\xi_1 - \xi_2} = \mu_2 + \frac{\KL(\mu_2, \mu_1)}{\xi_1 - \xi_2} = \mu_1 - \frac{\KL(\mu_1, \mu_2)}{\xi_1 - \xi_2}$.

\begin{proof}
We parametrize by the natural parameters:
\begin{align*}
\inf_{x} (\omega_1 \KL(x, \mu_1) + \omega_2 \KL(x, \mu_2))
= \inf_{y} (\omega_1 d(\xi_1, y) + \omega_2 d(\xi_2, y))
\end{align*}

The optimality condition for $y$ is $\omega_1 \frac{\partial}{\partial y}d(\xi_1, y) + \omega_2 \frac{\partial}{\partial y}d(\xi_2, y) = 0$. That derivative is $\frac{\partial}{\partial y}d(x, y) = - (x - y)\phi''(y)$. We obtain
\begin{align*}
&\omega_1 (\xi_1 - y)\phi''(y) + \omega_2 (\xi_2 - y)\phi''(y) = 0
\\
\implies \quad &y = \omega_1 \xi_1 + \omega_2 \xi_2 \: .
\end{align*}

We note for later the property
\begin{align}\label{eq:exp_fam_minimizer_grad}
\omega_1 \frac{\partial}{\partial y}d(\xi_1, y) + \omega_2 \frac{\partial}{\partial y}d(\xi_2, y) = 0 \quad \text{at } y = \omega_1 \xi_1 + \omega_2 \xi_2 \: .
\end{align}

We now want to compute
\begin{align*}
\max_{\omega \in \triangle_2} (\omega_1 d(\xi_1, \omega_1 \xi_1 + \omega_2 \xi_2) + \omega_2 d(\xi_2, \omega_1 \xi_1 + \omega_2 \xi_2))
\\
=\max_{\omega_1 \in [0,1]} (\omega_1 d(\xi_1, \omega_1 \xi_1 + (1 - \omega_1) \xi_2) + (1 - \omega_1) d(\xi_2, \omega_1 \xi_1 + (1 - \omega_1) \xi_2))
\end{align*}

At the optimal value for $\omega$ the gradient is zero:
\begin{align*}
&d(\xi_1, \omega_1 \xi_1 + (1 - \omega_1) \xi_2) - d(\xi_2, \omega_1 \xi_1 + (1 - \omega_1) \xi_2)
  + \omega_1 \frac{\partial}{\partial y} d(\xi_1, \omega_1 \xi_1 + (1 - \omega_1) \xi_2) (\xi_1 - \xi_2)
  \\&+ (1 - \omega_1) \frac{\partial}{\partial y} d(\xi_2, \omega_1 \xi_1 + (1 - \omega_1) \xi_2) (\xi_1 - \xi_2)
= 0
\end{align*}
We use Equation~\eqref{eq:exp_fam_minimizer_grad} to get that $\omega_2 \frac{\partial}{\partial y} d(\xi_2, \omega_1 \xi_1 + (1 - \omega_1) \xi_2) = - \omega_1 \frac{\partial}{\partial y} d(\xi_1, \omega_1 \xi_1 + (1 - \omega_1) \xi_2)$. We simplify the equation to
\begin{align*}
d(\xi_1, \omega_1 \xi_1 + (1 - \omega_1) \xi_2)
= d(\xi_2, \omega_1 \xi_1 + (1 - \omega_1) \xi_2)
\end{align*}
We expand the Bregman divergence.
\begin{align*}
&\phi(\xi_1) - \phi(y) - (\xi_1 - y) \phi'(y) - \phi(\xi_2) + \phi(y) + (\xi_2 - y) \phi'(y) = 0
\\
\implies &\phi'(y) = \frac{\phi(\xi_1) - \phi(\xi_2)}{\xi_1 - \xi_2}
\end{align*}

Solving this equation for $y$ also gives the value of $\omega$ thanks to $y = \omega_1 \xi_1 + (1 - \omega_1) \xi_2$. We get $\omega_1 = \frac{y - \xi_2}{\xi_1 - \xi_2}$, and $y$ is given by the equation above.
The value of the objective is then
\begin{align*}
\max_{\omega \in \triangle_2} \inf_{x} (\omega_1 \KL(x, \mu_1) + \omega_2 \KL(x, \mu_2))
= d(\xi_1, y)
\\
\text{where }
y = \phi'^{-1}\left(\frac{\phi(\xi_1) - \phi(\xi_2)}{\xi_1 - \xi_2}\right)
\: .
\end{align*}
But we can simplify this further since $d(\xi_1, y) = \KL(\phi'(y), \mu_1)$ (also equal to $\KL(\phi'(y), \mu_2)$).
\begin{align*}
\max_{\omega \in \triangle_2} \inf_{x} (\omega_1 \KL(x, \mu_1) + \omega_2 \KL(x, \mu_2))
= \KL(\frac{\phi(\xi_1) - \phi(\xi_2)}{\xi_1 - \xi_2}, \mu_1)
\: .
\end{align*}
\end{proof}

\begin{lemma}
If the distributions with parameters $\mu_1$ and $\mu_2$ are $\sigma^2$-sub-Gaussian, then
\begin{align*}
(H_{\mathcal C^{sp}}(\mu))^{-1}
\ge \frac{1}{2 \sigma^2(\xi_1 - \xi_2)^2}\max\{\KL(\mu_1, \mu_2)^2, \KL(\mu_2, \mu_1)^2\}
\: .
\end{align*}
For an exponential family of Gaussians with same variance $\sigma^2$ there is equality, and the two terms of the maximum are equal.
\end{lemma}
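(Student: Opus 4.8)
The plan is to start from the closed form of Lemma~\ref{lem:exp_fam_bai}, which gives $(H_{\mathcal C^{sp}}(\mu))^{-1} = \KL(x^\star, \mu_1) = \KL(x^\star, \mu_2)$ for the optimal point $x^\star = \frac{\phi(\xi_1) - \phi(\xi_2)}{\xi_1 - \xi_2}$, and then to lower bound each of the two equal expressions separately before taking their maximum. The remark following that lemma already records the identities I need, namely $x^\star - \mu_1 = -\KL(\mu_1, \mu_2)/(\xi_1 - \xi_2)$ and $x^\star - \mu_2 = \KL(\mu_2, \mu_1)/(\xi_1 - \xi_2)$, so the whole task reduces to bounding $\KL(x^\star, \mu_k)$ from below by a multiple of $(x^\star - \mu_k)^2$.

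The key tool is to read the KL divergence in the \emph{mean} parametrization as a Bregman divergence of the convex conjugate $\phi^*$ of the log-partition function. A short Fenchel-duality computation gives $\KL(\text{mean } a, \text{mean } b) = d_\phi(\xi_b, \xi_a) = d_{\phi^*}(a, b)$, where $(\phi^*)'(m)$ is the natural parameter associated with mean $m$. The sub-Gaussian hypothesis enters here: a $\sigma^2$-sub-Gaussian member with natural parameter $\xi_0$ has centered log-MGF $d_\phi(\xi_0 + t, \xi_0) \le \frac{\sigma^2}{2} t^2$ for all $t$, which is exactly the statement $\phi'' \le \sigma^2$ on the relevant range of parameters. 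Since $(\phi^*)''(m) = 1/\phi''(\xi_m)$, this makes $\phi^*$ at least $\frac{1}{\sigma^2}$-strongly convex, and hence its Bregman divergence obeys $d_{\phi^*}(a, b) \ge \frac{1}{2\sigma^2}(a - b)^2$.

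Combining the two steps, I would apply this strong-convexity bound at $(x^\star, \mu_1)$ and at $(x^\star, \mu_2)$, substitute the two identities for $x^\star - \mu_k$, and obtain $\KL(x^\star, \mu_1) \ge \KL(\mu_1,\mu_2)^2 / (2\sigma^2(\xi_1-\xi_2)^2)$ together with $\KL(x^\star, \mu_2) \ge \KL(\mu_2,\mu_1)^2 / (2\sigma^2(\xi_1-\xi_2)^2)$. Because both left-hand sides equal $(H_{\mathcal C^{sp}}(\mu))^{-1}$, that quantity dominates both bounds, hence their maximum, which is the claimed inequality. For the Gaussian case I would note that $\phi'' \equiv \sigma^2$ is constant, so $\phi^*$ is exactly $\frac{1}{\sigma^2}$-strongly convex and the Bregman lower bound becomes an equality; moreover Gaussian KL is symmetric, so $\KL(\mu_1,\mu_2) = \KL(\mu_2,\mu_1)$ and the two terms of the maximum coincide.

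The main obstacle I anticipate is the bookkeeping around the sub-Gaussian hypothesis: the bound on $d_{\phi^*}(x^\star, \mu_k)$ requires $\phi'' \le \sigma^2$ on the whole segment of natural parameters between $\xi^\star$ and $\xi_k$, whereas the statement posits sub-Gaussianity only at $\mu_1$ and $\mu_2$. This is harmless because $x^\star$ lies between $\mu_1$ and $\mu_2$, a consequence of $\KL(\mu_1,\mu_2), \KL(\mu_2,\mu_1) \ge 0$ together with the identity $\KL(\mu_1,\mu_2) + \KL(\mu_2,\mu_1) = (\xi_1-\xi_2)(\mu_1-\mu_2)$, so all relevant parameters stay in the interval delimited by $\xi_1$ and $\xi_2$. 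Verifying that sub-Gaussianity controls $\phi''$ on that interval, and checking the Fenchel identity $\KL = d_{\phi^*}$ cleanly, is where the care is needed; the remainder is substitution.
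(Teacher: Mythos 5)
The paper states this lemma without proof, so there is no in-paper argument to compare against; judged on its own, your proposal has the right skeleton but one of its steps is genuinely wrong. The reduction via Lemma~\ref{lem:exp_fam_bai}, the two identities $x^\star - \mu_1 = -\KL(\mu_1,\mu_2)/(\xi_1-\xi_2)$ and $x^\star - \mu_2 = \KL(\mu_2,\mu_1)/(\xi_1-\xi_2)$, and the Gaussian equality case are all correct. The flaw is the claim that the sub-Gaussian bound $d_\phi(\xi_0+t,\xi_0) \le \tfrac{\sigma^2}{2}t^2$ for all $t$ ``is exactly the statement $\phi'' \le \sigma^2$ on the relevant range of parameters.'' It is not: it is an integrated bound anchored at $\xi_0$, and it only forces $\phi''(\xi_0)\le\sigma^2$ at that single point (expand as $t\to 0$). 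Away from $\xi_0$, $\phi''$ may exceed $\sigma^2$ on a short interval while the integrated constraints anchored at both $\xi_1$ and $\xi_2$ still hold, since a narrow spike of $\phi''$ in the interior contributes arbitrarily little to each integral. Your closing remark --- that all relevant parameters lie between $\xi_1$ and $\xi_2$ --- does not repair this: membership in the interval is not the issue; the issue is that sub-Gaussianity of the two endpoint distributions simply does not control the variance $\phi''$ (equivalently $1/(\phi^*)''$) at interior points. So the $\tfrac{1}{\sigma^2}$-strong convexity of $\phi^*$ on which your Bregman lower bound rests is unproved, and false in general under the stated hypothesis.

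The gap closes without strengthening the hypothesis, by replacing pointwise strong convexity with the variational (Fenchel/Donsker--Varadhan) form of the same inequality. For any mean $a$ one has
\begin{align*}
d_{\phi^*}(a,\mu_k) = \sup_{t}\left\{ t(a - \mu_k) - d_\phi(\xi_k + t, \xi_k) \right\} ,
\end{align*}
which is checked by expanding both sides and using $\phi^*(\mu_k) = \xi_k\mu_k - \phi(\xi_k)$. Plugging the integrated sub-Gaussian bound $d_\phi(\xi_k+t,\xi_k)\le \sigma^2 t^2/2$ into the supremum and taking $t = (a-\mu_k)/\sigma^2$ gives $d_{\phi^*}(a,\mu_k) \ge (a-\mu_k)^2/(2\sigma^2)$ --- exactly the bound you wanted, and it uses sub-Gaussianity only of the distribution appearing as the second argument, i.e., only the lemma's hypothesis at $\mu_1$ and $\mu_2$. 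With that substitution, the rest of your proof (apply the bound at $a=x^\star$ for $k=1,2$, insert the two identities, take the maximum, and note the symmetry and exactness of the quadratic KL for Gaussians) goes through verbatim.
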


\paragraph{Gaussian case}
For Gaussian distributions, the functions used above are
\begin{itemize}[noitemsep]
  \item $\phi(a) = \frac{1}{2} \sigma^2 a^2$ with $\phi'(a) = a \sigma^2$, $\phi'^{-1}(x) = x / \sigma^2$, $\phi(\phi'^{-1}(x)) = \frac{1}{2 \sigma^2} x^2$
  \item $d(a, b) = \sigma^2(\frac{1}{2} a^2 - \frac{1}{2} b^2 - (a - b) b) = \frac{1}{2}\sigma^2(a - b)^2$
  \item $\KL(x, y) = \frac{1}{2 \sigma^2}(x - y)^2$.
\end{itemize}

Using these values in Lemma~\ref{lem:exp_fam_bai} gives a static proportions difficulty equal to the inverse of $\frac{1}{8 \sigma^2}(\mu_1 - \mu_2)^2$.

\paragraph{Bernoulli case}
For Bernoulli distributions, the functions used above are
\begin{itemize}[noitemsep]
  \item $\phi(a) = \log (1 + e^a)$ with $\phi'(a) = \frac{e^a}{1 + e^a}$, $\phi'^{-1}(x) = \log \frac{x}{1 - x}$, $\phi(\phi'^{-1}(x)) = - \log (1 - x)$
  \item $d(a, b) = \log(1 + e^a) - \log(1 + e^b) - (a - b) \frac{e^b}{1 + e^b}$
  \item $\KL(x, y) = x \log \frac{x}{y} + (1-x) \log \frac{1-x}{1-y}$.
\end{itemize}

Using these values in Lemma~\ref{lem:exp_fam_bai} proves Lemma~\ref{cor:bernoulli_bai_H}.
\begin{align*}
\max_{\omega \in \triangle_2} \inf_{x} (\omega_1 \KL(x, \mu_1) + \omega_2 \KL(x, \mu_2))
= \KL \left( \frac{\log\frac{1 - \mu_2}{1 - \mu_1}}{\log \frac{\mu_1(1 - \mu_2)}{(1 - \mu_1)\mu_2}}, \mu_1 \right)
\: .
\end{align*}

We gather now a few limits, which will be useful in the proof of Theorem~\ref{thm:bernoulli_bai_no_complexity}. These results use the explicit formulas for $H_{\mathcal C^{sp}}$ derived above.
\begin{align*}
\lim_{x \to 0} H_{\mathcal C^{sp}}((x, 1/2))
&= 1/\log 2
\: , \\
\lim_{x \to 0} \KL(x, 1/2)
&= \log 2
\: , \\
\lim_{x \to 0, y \to 0, x/y \to 1} \frac{1}{H_{\mathcal C^{sp}}((y/2, y)) \KL(x, y/2)}
&= \frac{1 -  \frac{1}{2 \log 2} - \frac{\log(2 \log 2)}{2 \log 2}}{\log 2 - 1/2} \approx 0.22
\: .
\end{align*}

\begin{proof}[of Theorem~\ref{thm:bernoulli_bai_no_complexity}]
For $x \in (0,1/2)$, let $\mu(x) = (x(1+x), x)$, $\lambda^{(1)}(x) = (x/2, x)$, $\lambda^{(2)}(x) = (x(1+x), 1/2)$. Then Corollary~\ref{cor:corner_lb} gives
\begin{align*}
&\sup_{j \in [2]} R_{H_{\mathcal C^{sp}}, \infty}(\mathcal A, \lambda^{(j)}(x))
\\
&\ge \frac{1}{H_{\mathcal C^{sp}}(\lambda^{(1)}(x))\KL(\mu_1(x), \lambda^{(1)}_1(x))} + \frac{1}{H_{\mathcal C^{sp}}(\lambda^{(2)}(x))\KL(\mu_2(x), \lambda^{(2)}_2(x))}
\\
&= \frac{1}{H_{\mathcal C^{sp}}((x/2, x)) \KL(x(1+x), x/2)} + \frac{1}{H_{\mathcal C^{sp}}((x(1+x), 1/2))\KL(x, 1/2)}
\: .
\end{align*}
The limit of the quantity on the right when $x \to 0$ is strictly greater than 1 (it is approximately 1.22).
\end{proof}

\end{document}